\documentclass[11pt]{article} 
\usepackage{url}
\usepackage{graphicx} 
\usepackage{algorithm}
\usepackage{algorithmic}
\usepackage{todonotes}
\usepackage{epstopdf}
\usepackage{wrapfig}
\usepackage[colorlinks, linkcolor=blue, anchorcolor=blue, citecolor=blue]{hyperref}
\usepackage[margin=1in]{geometry}
\usepackage[normalem]{ulem}
\usepackage[export]{adjustbox}
\usepackage{mathtools, cuted}
\usepackage{natbib}
\usepackage{enumerate}
\usepackage{enumitem}
\usepackage[listofformat=subsimple,labelformat=simple]{subfig}

\usepackage{kpfonts}
\DeclareMathAlphabet{\mathsf}{OT1}{cmss}{m}{n}

\SetMathAlphabet{\mathsf}{bold}{OT1}{cmss}{bx}{n}

\usepackage{booktabs}
\usepackage{xcolor}

\usepackage{hyperref}

\usepackage{amsthm}
\usepackage{thmtools}
\usepackage{tikz}
\usetikzlibrary{calc}

\usepackage{amsmath}
\usepackage{amssymb}

















\def\1{\bm{1}}










\DeclareMathAlphabet{\mathsfit}{\encodingdefault}{\sfdefault}{m}{sl}
\SetMathAlphabet{\mathsfit}{bold}{\encodingdefault}{\sfdefault}{bx}{n}


\def\gD{{\mathcal{D}}}

\def\gF{{\mathcal{F}}}
\def\gG{{\mathcal{G}}}

\def\gM{{\mathcal{M}}}
\def\gN{{\mathcal{N}}}

\def\gR{{\mathcal{R}}}



\def\E{\mathbb{E}}
\def\P{\mathbb{P}}

\def\sign{\mathrm{sign}}
\def\supp{\mathrm{supp}}

\def\R{\mathbb{R}}
\def\cA{\mathcal{A}}

\def\cD{\mathcal{D}}

\def\cF{\mathcal{F}}

\def\cL{\mathcal{L}}

\def\cM{\mathcal{M}}
\def\cN{\mathcal{N}}

\def\cW{\mathcal{W}}








\newcommand{\Z}{\mathbb{Z}}


\newcommand{\conv}{\mathrm{Conv}}


\DeclareMathOperator*{\argmin}{arg\,min}

\newcommand{\mat}[1]{\mathbf{#1}}
\newcommand{\vect}[1]{\boldsymbol{#1}}

\def\relu{\sigma}
\def\loss{\cL}

\newenvironment{eqal}{
\begin{equation}
\begin{aligned}
}
{
\end{aligned}    
\end{equation}}

\newenvironment{eqal*}{
\begin{equation*}
\begin{aligned}
}
{
\end{aligned}    
\end{equation*}}

\def\equationautorefname~#1\null{%
  (#1)\null
}

\makeatletter
\newcounter{constcounter}
\newcommand{\C}[1]{{\stepcounter{constcounter}C_{\theconstcounter}}\def\@currentlabel{\theconstcounter}\ltx@label{#1}}
\newcommand{\Cr}[1]{C_{{\ref*{#1}}}}
\makeatother

\newcommand{\itemst}[1]{\noindent $\bullet$ \textbf{#1}}

\ifx\proof\undefined
\newenvironment{proof}{\par\noindent{\bf Proof\ }}{\hfill\BlackBox\\[2mm]}
\fi

\ifx\theorem\undefined
\newtheorem{theorem}{Theorem}
\fi
\ifx\example\undefined

\fi
\ifx\property\undefined

\fi
\ifx\lemma\undefined
\newtheorem{lemma}{Lemma}
\fi
\ifx\proposition\undefined
\newtheorem{proposition}{Proposition}
\fi
\ifx\remark\undefined
\newtheorem{remark}{Remark}
\fi
\ifx\corollary\undefined
\newtheorem{corollary}{Corollary}
\fi
\ifx\definition\undefined
\newtheorem{definition}{Definition}
\fi
\ifx\conjecture\undefined

\fi
\ifx\fact\undefined

\fi
\ifx\claim\undefined

\fi
\ifx\assumption\undefined
\newtheorem{assumption}{Assumption}
\fi
\ifx\condition\undefined

\fi

\definecolor{burlywood}{rgb}{0.87, 0.72, 0.53}
\definecolor{cornflowerblue}{rgb}{0.39, 0.58, 0.93}
\definecolor{lavender(floral)}{rgb}{0.71, 0.49, 0.86}
\definecolor{mediumcarmine}{rgb}{0.69, 0.25, 0.21}
\definecolor{lightmauve}{rgb}{0.86, 0.82, 1.0}
\definecolor{palecornflowerblue}{rgb}{0.67, 0.8, 0.94}
\definecolor{bisque}{rgb}{1.0, 0.89, 0.77}

\title{\huge Nonparametric Classification on Low Dimensional Manifolds using Overparameterized\\Convolutional Residual Networks}

\author{Zixuan Zhang$^\dagger$ ~~ Kaiqi Zhang$^\dagger$ ~~ Minshuo Chen ~~ Yuma Takeda \\ Mengdi Wang ~~ Tuo Zhao ~~ Yu-Xiang Wang\thanks{$\dagger$ Equal contribution. Zixuan Zhang and Tuo Zhao are affiliated with Georgia Institute of Technology; Kaiqi Zhang is affiliated with UC Santa Barbara; Minshuo Chen is affiliated with Northwestern University. Yuma Takeda is affiliated with University of Tokyo;
Mengdi Wang is affiliated with  Princeton University; Yu-Xiang Wang is affiliated with UC San Diego. Emails: $\{$zzhang3105, tourzhao$\}$@gatech.edu, kzhang70@ucsb.edu, minshuo.chen@northwestern.edu, mengdiw@princeton.edu, utklav1511@gmail.com,  yuxiangw@ucsd.edu.}}

\newcommand{\commentout}[1]{}

\begin{document}

\maketitle

\begin{abstract}
Convolutional residual neural networks (ConvResNets), though \emph{overparametersized}, achieve remarkable prediction performance in practice, which cannot be well explained by conventional wisdom. To bridge this gap, we study the performance of ConvResNeXts trained with weight decay, which cover ConvResNets as a special case,  from the perspective of nonparametric classification. Our analysis allows for infinitely many building blocks in ConvResNeXts, and shows that weight decay implicitly enforces sparsity on these blocks. Specifically, we consider a smooth target function supported on a low-dimensional manifold, then prove that ConvResNeXts can adapt to the function smoothness and low-dimensional structures and efficiently learn the function without suffering from the curse of dimensionality. Our findings partially justify the advantage of \emph{overparameterized} ConvResNeXts over conventional machine learning models. 
\end{abstract}

\section{Introduction}

Deep learning has achieved significant success in various real-world applications, such as computer vision \citep{ goodfellow2014generative,krizhevsky2012imagenet, Long_2015_CVPR}, natural language processing \citep{bahdanau2014neural, graves2013speech,  young2018recent}, and robotics \citep{gu2017deep}. One notable example of this is in the field of image classification, where the winner of the 2017 ImageNet challenge achieved a top-5 error rate of just 2.25\% \citep{hu2018squeeze} using Convolutational Residual Network (ConvResNets) on a training dataset of 1 million labeled high-resolution images in 1000 categories.

Researchers have attributed the remarkable performance of deep learning to its great flexibility in modeling complex functions, which has motivated many works on investigating the representation power of deep neural networks. For instance, early work such as \citet{barron1993universal, cybenko1989approximation, kohler2005adaptive} initialized this line of research for simple feedforward neural networks (FNNs) \citep{hu2018squeeze,hu2020sharp,wang2023deep,wang2022minimax}. More recently, \citet{suzuki2018adaptivity, yarotsky2017error} gave more precise bounds on the model sizes in terms of the approximation error, and \citet{oono2019approximation} further established a bound for more advanced architectures -- ConvResNets. Based on these function approximation theories, one can further establish generalization bounds of deep neural networks with finite samples. Taking \citet{oono2019approximation} as an example again, they showed that ConvResNets with $\tilde{O}(n^{D/(2\alpha+D)})$ parameters can achieve a minimax optimal convergence rate $\tilde{O}(n^{-2\alpha/(2\alpha+D)})$ while approximating a $C^\alpha$ nonparametric regression function with $n$ samples. Unfortunately, these theoretical results cannot well explain the empirical successes of deep learning well, as they require the model size to be no larger than $\tilde{O}(n)$ (the generalization bounds become vacuous otherwise). However, in real applications, practical deep learning models are often overparmameterized, that is the model size can greatly exceeds the sample size. 

\subsection{Main Results}

Overparameterization of neural networks has been considered as one of the most fundamental research problems in deep learning theories, where parameters can significantly exceed training samples. There has been substantial empirical evidence showing that overparameterization can help fit the training data, ease the challenging nonconvex optimization, and gain robustness. However, existing literature on deep learning theories under such an \textbf{overparameterized} regime is very limited. To the best of our knowledge, we are only aware of \citet{zhang2022deep}, which attempts to analyze overparameterized neural networks trained with weight decay. However, their work still suffers from two major restrictions: (1) They consider parallel FNN, which is rarely used in practice. Whether similar results hold for more practical architectures remains unclear; (2) Their generalization bound from the curse of dimensionality, where the sample size is require to scale exponentially with the input dimension.

To address (1), we propose to develop a new theory for nonparametric classification using overparameterized ConvResNeXts trained with weight decay \citep{xie2017aggregated}. The ConvResNeXt generalizes ConvResNets and includes them as a special case \citep{chen2017deeplab, he2016deep, szegedy2017inception, zhang2017image}. Compared with FNNs, ConvResNeXts exhibit three features: (i) Instead of using dense weight matrices, they use convolutional filters, which can naturally investigate the underlying structures of the input data such as images and acoustic signals; (ii) They are equipped with skip-layer connections, which divides the entire network into blocks. The skip-layer connection can effectively address the vanishing gradient issue
and therefore allow the networks to be significantly deeper; (iii) They are equipped with parallel architectures, which enable multiple ``paths'' within each block of the network, and allows the network to learn a more diverse set of features. Figure \ref{fig:resnext} illustrates the structure of ConvResNeXts (detailed introductions of ConvResNeXts is deferred to Section \ref{sec:convresnext}). This architecture introduces a significantly more complex nested function form, presenting us with the challenge of addressing novel issues in bounding the metric entropy of the function class.

To address (2), our proposed theory considers the optimal classifier is supported on a $d$-dimensional smooth manifold $\cM$ isometrically embedded in $\R^{D}$ with $d\ll D$. The low-dimensional manifold assumption is highly practical, since it aligns with the inherent nature of many real-world datasets. For example, images typically represent projections of 3-dimensional objects subject to various transformations like rotation, translation, and skeletal adjustments. Such a generating mechanism inherently involves a limited set of intrinsic parameters. More broadly, various forms of data, including visual, acoustic, and textual, often exhibit low dimensional structures due to rich local regularities, global symmetries, repetitive patterns, or redundant sampling. It is reasonable to model these data as samples residing in proximity to a low dimensional manifold. 

Our theoretical results can be summarized as follows:

\itemst{} We prove that when ConvResNeXts are overparameterized, i.e., the number of blocks is larger than the order of the sample size $n$, they can still achieve an asymptotic minimax rate for learning Besov functions when trained with weight decay. That is, given that the target function belongs to the Besov space $B^\alpha_{p,q}(\cM)$\footnote{The Besov space includes functions with spatially heterogeneous smoothness and generalizes more elementary function spaces such as Sobolev and H\"older spaces.}, the risk of the estimator given by the ConvResNeXt class converges to the optimal risk at the rate $\tilde O(n^{-\frac{\alpha/d}{2\alpha/d+1}(1-o(1))})$ with $n$ samples. Notably, the statistical rate of convergence in our theory only depends on the intrinsic dimension $d$, which circumvents the curse of dimensionality in \citet{zhang2022deep}. 

\itemst{} Moreover, our theory shows that one can scale the number of ``paths'' $M$ in each block with the depth $N$ as roughly $MN \gtrsim n^{\frac{1}{2\alpha/d+1}}$, which does not affect the convergence rate. This partially justifies the flexibility of the ConvResNeXt architecture when designing the bottlenecks, which simple structures like FNNs \textbf{cannot} achieve. Moreover,  we can exchange the number of ``paths'' $M$ and depth $N$ as long as their product remains the same. This further provides the architectural insight that we don't necessarily need parallel blocks when we have residual connections. To say it differently,  we provide new insight into why ``residual connection'' and "parallel blocks'' in ResNeXts are useful in both approximation and generalization.

\itemst{} Another technical highlight of our paper is bounding the covering number of weight-decayed ConvResNeXts, which is essential for computing the critical radius of the local Gaussian complexity.
Specifically, we adopted a more advanced method that leverages the Dudley's chaining of the metric entropy \citep{bartlett2005local}. This technique provides a tighter bound than choosing a single radius of the covering number as in \citet{zhang2022deep}.

\itemst{} To the best of our knowledge, our work is the first to develop approximation and statistical theories for ConvResNeXts, as well as overparameterized ConvResNets.

\subsection{Related Works}

Our work is closely related to \citet{liu2021besov}, which studies nonparametric classification under a similar setup -- the optimal classifier belongs to the Besov space supported on a low dimensional manifold. Despite they develop similar theoretical results to ours, their analysis does not allow the model to be overparameterized, and therefore is not applicable to practical neural networks. Moreover, they investigate ConvResNets, which is a special case of ConvResNeXt in our work.

Our work is closely related to the reproducing kernel methods, which are also often used for nonparametric regression. However, existing literature has shown that the reproducing kernel methods lack the adaptivity to handle the heterogeneous smoothness in estimating Besov space functions, and only achieve suboptimal rate of convergence in statistical estimation \cite{donoho1990minimax,suzuki2018adaptivity}. 
 
Our work is closely related neural tangent kernel theories \citep{jacot2018neural, allen2019convergence}, which study overparameterized neural networks. Specifically, under certain regularity conditions, they establish the equivalence between overparameterized neural networks and reproducing kernel methods, and therefore the generalization bounds of overparameterized networks can be derived based on the associated reproducing kernel Hilbert space. Note that neural tangent kernel theories can be viewed as special cases of the theories for general reproducing kernel methods. Therefore, they also lack the adaptivity to be successful in the Besov space thus do not capture the properties of overparameterized neural networks.

\textbf{Roadmap}. The rest of this paper is organized as follows: Section 2 briefly introduces some preliminaries; Section 3 presents our theories; Section 4 presents the proof sketch of our main results; Section 5 presents discussions and draws a brief conclusion.

\section{Preliminaries}
\label{sec:prelim}
In this section, we introduce some concepts on manifolds. Details can be found in \citep{tu2011manifolds} and \citep{lee2006riemannian}. Then we provide a detailed definition of the Besov space on smooth manifolds and the ConvResNeXt architecture.

\subsection{Smooth manifold}
\label{sec:manifold}

Firstly, we briefly introduce manifolds, the partition of unity and reach. Let $\cM$ be a $d$-dimensional Riemannian manifold isometrically embedded in $\R^D$ with $d$ much smaller than $D$.

\begin{definition}[Chart]
    A chart on $\cM$ is a pair $(U,\phi)$ such that $U \subset \cM$ is open and $\phi : U \mapsto \R^d,$ where $\phi$ is a homeomorphism (i.e., bijective, $\phi$ and $\phi^{-1}$ are both continuous).
\end{definition}
In a chart $(U,\phi)$, $U$ is called a coordinate neighborhood, and $\phi$ is a coordinate system on $U$.
Essentially, a chart is a local coordinate system on $\cM$.
A collection of charts that covers $\cM$ is called an atlas of $\cM$.

\begin{definition}[$C^k$ Atlas]
A $C^k$ atlas for $\cM$ is a collection of charts $\{(U_i, \phi_i)\}_{i \in \cA}$ which satisfies $\bigcup_{i \in \cA} U_i = \cM$, and
 are pairwise $C^k$ compatible:
\begin{align*}
	\phi_i \circ \phi_\beta^{-1} : \phi_\beta(U_i \cap U_\beta) \to \phi_i(U_i \cap U_\beta)~~\textrm{and} ~~\phi_\beta \circ \phi_i^{-1} : \phi_i(U_i \cap U_\beta) \to \phi_\beta(U_i \cap U_\beta)
\end{align*} 
are both $C^k$ for any $i,\beta\in \cA$. An atlas is called finite if it contains finitely many charts.
\end{definition}

\begin{definition}[Smooth Manifold] A smooth manifold is a manifold $\cM$ together with a $C^\infty$ atlas.
\end{definition}
Classical examples of smooth manifolds are the Euclidean space, the torus, and the unit sphere. Furthermore, we define $C^s$ functions on a smooth manifold $\cM$ as follows:
\begin{definition}[$C^s$ functions on $\cM$]
  Let $\cM$ be a smooth manifold and $f:\cM \rightarrow \R$ be a function on $\cM$. A function $f: \cM \to \R$ is $C^s$ if for any chart $(U,\phi)$ on $\cM$, the composition $f\circ \phi^{-1}: \phi(U)\rightarrow \R$ is a continuously differentiable up to order $s$. 
\end{definition}

We next define the $C^\infty$ partition of unity, which is an important tool for studying functions on manifolds.
\begin{definition}[Partition of Unity, Definition 13.4 in \cite{tu2011manifolds}]
\label{sec:pou}
A $C^\infty$ partition of unity on a manifold $\cM$ is a collection of $C^\infty$ functions $\{\rho_{i}\}_{i\in\cA}$ with $\rho_i: \cM \to [0,1]$ such that for any $\vect x\in\cM$, there is a neighbourhood of $\vect x$ where only a finite number of the functions in $\{\rho_{i}\}_{i\in\cA}$ are nonzero, and $\displaystyle\sum_{i\in\cA} \rho_i(\vect x) = 1$.
\end{definition}
An open cover of a manifold $\cM$ is called locally finite if every $\vect x\in\cM$ has a neighborhood that intersects with a finite number of sets in the cover. The following proposition shows that a $C^\infty$ partition of unity for a smooth manifold always exists. 
\begin{proposition}[Existence of a $C^\infty$ partition of unity, Theorem 13.7 in \cite{tu2011manifolds}]\label{prop:existpou}
Let $\{U_i\}_{i \in \cA}$ be a locally finite cover of a smooth manifold $\cM$. Then there is a $C^\infty$ partition of unity $\{\rho_i\}_{i=1}^\infty$ where every $\rho_i$ has a compact support such that $\supp(\rho_i) \subset U_i$. 
\end{proposition}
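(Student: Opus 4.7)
The plan is to follow the standard construction in smooth manifold theory (Theorem 13.7 of \cite{tu2011manifolds}), which proceeds in three stages: build smooth bumps on $\R^d$, transport them to $\cM$ through charts, and normalize. Since the statement is a classical differential-topology result, in the paper itself one simply cites the textbook; below I sketch how a self-contained proof would be organized.

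First, I would recall the elementary bump-function construction. Starting from $f(t)=e^{-1/t}$ for $t>0$ and $0$ otherwise, the ratio $g(t)=f(t)/(f(t)+f(1-t))$ is $C^\infty$, equals $0$ on $(-\infty,0]$, and equals $1$ on $[1,\infty)$. Composing with a shifted squared Euclidean norm produces a $C^\infty$ function on $\R^d$ equal to $1$ on a small ball and vanishing outside a slightly larger concentric ball. Pulling back through any chart $(V,\phi)$ with $\overline V$ compact then yields a $C^\infty$ bump on $\cM$ whose support is a prescribed compact subset of $V$ and which is identically $1$ on a preassigned smaller open set.

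Next I would refine the given cover. Smooth manifolds are paracompact and second countable, so $\{U_i\}_{i\in\cA}$ admits a countable locally finite refinement $\{V_j\}_{j\ge 1}$ in which each $\overline{V_j}$ is compact, is contained in a chart, and satisfies $\overline{V_j}\subset U_{i(j)}$ for some $i(j)\in\cA$. A standard further shrinking produces open sets $W_j$ with $\overline{W_j}\subset V_j$ and $\bigcup_j W_j=\cM$. Using the bump construction above I would pick $\psi_j\in C^\infty(\cM)$ satisfying $\psi_j\ge 0$, $\psi_j>0$ on $\overline{W_j}$, and $\supp(\psi_j)\subset V_j\subset U_{i(j)}$.

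Finally, local finiteness of $\{V_j\}$ makes $\Psi(\vect x)=\sum_k\psi_k(\vect x)$ a locally finite sum, so $\Psi\in C^\infty(\cM)$; since $\{W_j\}$ still covers $\cM$, at every $\vect x\in\cM$ some $\psi_j(\vect x)>0$, whence $\Psi>0$ everywhere. Setting $\tilde\rho_j=\psi_j/\Psi$ yields nonnegative $C^\infty$ functions that sum to $1$ with $\supp(\tilde\rho_j)\subset U_{i(j)}$. Grouping $\{\tilde\rho_j\}$ by the value of $i(j)\in\cA$ and summing within each group produces the required family $\{\rho_i\}$ subordinate to $\{U_i\}$ with compact supports. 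The one nontrivial ingredient is the topological refinement step that simultaneously delivers local finiteness, precompact closures, chart containment, and compatibility with $\{U_i\}$; this is what relies essentially on paracompactness and second countability of smooth manifolds, while the rest reduces to the elementary $\R^d$ bump construction.
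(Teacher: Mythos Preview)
Your proposal is correct and, as you anticipated, the paper does not give a proof at all: Proposition~\ref{prop:existpou} is stated purely as a citation to Theorem~13.7 in \cite{tu2011manifolds}, so there is no in-paper argument to compare against. Your sketch is the standard textbook construction and is fine; the only step to be slightly careful with is the final grouping $\rho_i=\sum_{j:\,i(j)=i}\tilde\rho_j$, since summing infinitely many compactly supported bumps need not yield compact support---but with the cover $\{U_i\}$ already assumed locally finite (and hence, on a second-countable manifold, countable with each $U_i$ meeting only finitely many $V_j$), this is not an issue.
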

Let $\{(U_i,\phi_i)\}_{i\in \cA}$ be a $C^\infty$ atlas of $\cM$. Proposition \ref{prop:existpou} guarantees the existence of a partition of unity $\{\rho_i\}_{i\in\cA}$ such that $\rho_{i}$ is supported on $U_{i}$.
To characterize the curvature of a manifold, we adopt the geometric concept: reach. 
\begin{definition}[Reach \citep{federer1959curvature,niyogi2008finding}]
Denote 
\begin{align*}
	G=\Big\{ \vect x\in \R^D: ~\exists ~\vect p \neq \vect q\in\cM \mbox{ such that } \|\vect x-\vect p\|_2 =\|\vect x-\vect q\|_2=\inf_{\vect y\in \cM}\|\vect x-\vect y\|_2 \Big\}
\end{align*}
as the set of points with at least two nearest neighbors on $\cM$.
The closure of $G$ is
called the medial axis of $\cM$.
Then the reach of $\cM$ is defined as 
$$\tau=\inf_{\vect x \in \cM} \ \inf_{\vect y\in G}\|\vect x-\vect y\|_2.$$
\end{definition}
Reach has a simple geometrical interpretation: for every point $\vect x \in \cM$, the osculating circle's radius is at least $\tau$. A large reach for $\cM$ indicates that the manifold changes slowly. 

\subsection{Besov functions on a smooth manifold}
\label{sec:besov}

We next define the Besov function space on the smooth manifold $\cM$,
which generalizes more elementary function spaces such as the Sobolev and H\"older spaces. 
Roughly speaking, functions in the Besov space are only required to have weak derivatives with bounded total variation. Notably, this includes functions with spatially heterogeneous smoothness, which requires more locally adaptive methods to achieve optimal estimation errors \cite{donoho1998minimax}. Examples of Besov class functions include piecewise linear functions and piecewise quadratic functions that are smoother in some regions and more wiggly in other regions; see e.g., Figure 2 and Figure 4 of \citet{mammen1997locally}.

To define Besov functions rigorously, we first introduce the modulus of smoothness.

\begin{definition}[Modulus of Smoothness \citep{devore1993constructive,suzuki2018adaptivity}]
  Let $\Omega \subset \R^D$.
   For a function $f: \R^D\rightarrow\R$ be in $ L^p(\Omega)$ for $p>0$, the $r$-th modulus of smoothness of $f$ is defined by
   \begin{align*}
    w_{r,p}(f,t)&=\sup_{\|\vect h\|_2\le t} \|\Delta_{\vect h}^r(f)\|_{L^p}, \\
    \mbox{where } 
    \Delta_{\vect h}^r(f)(\vect x) &=
    \begin{cases}
    \sum\limits_{j=0}^r \binom{r}{j}
    (-1)^{r-j}f(\vect x+j\vect h) &\mbox{if}\  \vect x, \vect x+r\vect h\in\Omega,\\
    0 &\mbox{otherwise}.
  \end{cases}
\end{align*}
\end{definition}

\begin{definition}[Besov Space $B_{p,q}^\alpha(\Omega)$]\label{def.besovNorm}
For $0<p,q\leq \infty, \alpha>0, r=\lfloor \alpha \rfloor+1$, define the seminorm $|\cdot |_{B_{p,q}^\alpha}$ as
  $$
  |f|_{B_{p,q}^\alpha(\Omega)}:=\begin{cases}
    \left( \displaystyle\int_0^{\infty} (t^{-\alpha}w_{r,p}(f,t))^q\frac{dt}{t}\right)^{\frac{1}{q}} & \mbox{ if } q<\infty,\\
    \sup_{t>0} t^{-\alpha}w_{r,p}(f,t) & \mbox{ if }q=\infty.
  \end{cases}
  $$
  
  The norm of the Besov space $B_{p,q}^s(\Omega)$ is defined as $\|f\|_{B_{p, q}^\alpha(\Omega)}:= \|f\|_{L^p(\Omega)}+|f|_{B_{p, q}^\alpha(\Omega)}$. Then the Besov space is defined as $B_{p, q}^\alpha(\Omega)=\{ f\in L^p(\Omega) | \|f\|_{B_{p, q}^\alpha}<\infty\}$.
\end{definition}

Moreover, we show that functions in the Besov space can be decomposed using B-spline basis functions in the following proposition. 
\begin{proposition}[Decomposition of Besov functions] Any function $f$ in the Besov space $B^\alpha_{p, q}, \alpha > d/p$  can be decomposed using B-spline of order $m, m > \alpha$: for any $\vect x \in \R^d$, we have
\begin{equation}\label{eq:besovdecom}
    f(\vect x) = \sum_{k=0}^\infty \sum_{\vect s\in J(k)} c_{k,\vect s}(f) M_{m, k, \vect s}(\vect x), 
\end{equation}
where $J(k):=\{2^{-k}\vect s: \vect s \in [-m, 2^k+m]^d \subset \Z^d \}$, $M_{m, k, \vect s} (\vect x):= M_m(2^k (\vect x - \vect s))$, and $M_k(\vect x) = \prod_{i=1}^d M_k(x_i)$ is the cardinal B-spline basis function which can be expressed as a polynomial:
\begin{align}
   M_m(z) = \frac{1}{m!} \sum_{j=1}^{m+1} (-1)^j {m+1 \choose j}(z-j)^m_+. \label{eq:bspline}
\end{align}
\end{proposition}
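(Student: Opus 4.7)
The plan is to prove this decomposition by building a quasi-interpolation operator from the cardinal B-spline basis and invoking the Jackson-type approximation theory for Besov spaces; this is the standard route taken in \citet{devore1993constructive} and \citet{suzuki2018adaptivity}, and the hypothesis $\alpha > d/p$ is exactly what makes the decomposition converge pointwise.

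First, I would set up the quasi-interpolant at resolution level $k$. For each $k \ge 0$, let $V_k = \spspan\{M_{m,k,\vect s} : \vect s \in J(k)\}$ be the shift-invariant space at dyadic scale $2^{-k}$. Using the Strang--Fix / polynomial reproduction property of cardinal B-splines of order $m$, one can construct a bounded linear quasi-interpolant $Q_k : L^p(\Omega) \to V_k$ of the form $Q_k f = \sum_{\vect s \in J(k)} \lambda_{k,\vect s}(f) \, M_{m,k,\vect s}$, where the coefficient functionals $\lambda_{k,\vect s}$ are local averages of $f$. Two classical properties of $Q_k$ will be used: (i) $Q_k$ reproduces all polynomials of degree $< m$, so one has the Jackson estimate
\begin{equation*}
\|f - Q_k f\|_{L^p} \;\lesssim\; w_{m,p}(f, 2^{-k}),
\end{equation*}
and (ii) $\|Q_k f\|_{L^p} \lesssim \|f\|_{L^p}$ uniformly in $k$.

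Second, I would form the telescoping decomposition
\begin{equation*}
f \;=\; Q_0 f + \sum_{k=1}^{\infty} (Q_k f - Q_{k-1} f),
\end{equation*}
and argue convergence. Under the Besov seminorm control and the definition of $|f|_{B^\alpha_{p,q}}$, the increments satisfy $\|Q_k f - Q_{k-1} f\|_{L^p} \lesssim w_{m,p}(f, 2^{-k}) \lesssim 2^{-k\alpha}$ in an appropriate summable sense. The embedding $B^\alpha_{p,q}(\Omega) \hookrightarrow C(\Omega)$ for $\alpha > d/p$ then upgrades $L^p$-convergence to convergence that matches the pointwise statement in \eqref{eq:besovdecom}. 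Since both $Q_k f$ and $Q_{k-1} f$ live in $V_k$ (the coarser basis at level $k-1$ is a finite linear combination of the finer basis at level $k$ via the B-spline two-scale refinement relation), the increment $Q_k f - Q_{k-1} f$ expands as $\sum_{\vect s \in J(k)} c_{k,\vect s}(f) M_{m,k,\vect s}$, which after combining with the $k=0$ term produces exactly the claimed double sum with coefficients $c_{k,\vect s}(f)$ linear in $f$.

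The main obstacle is verifying the two-scale refinability carefully enough to read off closed-form coefficients $c_{k,\vect s}(f)$ and to track the index set $J(k)=\{2^{-k}\vect s : \vect s \in [-m,2^k+m]^d\}$, which encodes the B-spline supports that touch $\Omega \subset [0,1]^d$. The boundary indices in $[-m,2^k+m]^d$ rather than $[0,2^k]^d$ arise because a B-spline of order $m$ has support of width $m$, so translates centered just outside $\Omega$ still contribute inside. Once the support bookkeeping is done, the coefficient functionals and their norm estimates are inherited directly from the quasi-interpolant construction, and the proposition follows. For the purposes of this paper I would cite \citet{devore1993constructive, suzuki2018adaptivity} for the detailed construction, since only the form \eqref{eq:besovdecom} and the subsequent coefficient decay estimates are needed downstream.
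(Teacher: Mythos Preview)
Your proposal is correct in spirit and follows the standard route. However, note that the paper does not actually supply its own proof of this proposition: it is stated in the preliminaries as a known result, and later in the text the authors attribute the existence of the decomposition to \citet{dung2011optimal} and invoke the closely related sparse-approximation statement (their Proposition~\ref{prop:besovapp}) directly from \citet{zhang2022deep}. So there is nothing to compare against beyond the cited literature, and your quasi-interpolation / telescoping construction is exactly the argument those references carry out; your final remark that one would ``cite \citet{devore1993constructive, suzuki2018adaptivity} for the detailed construction'' is in fact precisely what the paper does.
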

We next define $B_{p, q}^\alpha$ functions on $\cM$.
\begin{definition}[$B_{p, q}^\alpha$ Functions on $\cM$ \citep{geller2011band,tribel1992theory}]\label{def.besovM}
Let $\cM$ be a compact smooth manifold of dimension $d$. Let $\{(U_i, \phi_{i})\}_{i=1}^{C_{\cM}}$ be a finite atlas on $\cM$ and $\{\rho_i\}_{i=1}^{C_{\cM}}$ be a partition of unity on $\cM$ such that $\supp(\rho_i)\subset U_i$. A function $f: \cM \to \R$ is in $B_{p, q}^\alpha(\cM)$ if
\begin{align}
  \|f\|_{B_{p, q}^\alpha(\cM)}:=\sum_{i=1}^{C_{\cM}} \|(f\rho_i)\circ\phi_{i}^{-1}\|_{B_{p, q}^\alpha(\R^d)}<\infty.
\end{align}
\end{definition}
Since $\rho_i$ is supported on $U_{i}$, the function $(f\rho_i)\circ \phi_{i}^{-1}$ is supported on $\phi(U_i)$. We can extend $(f\rho_i)\circ \phi_{i}^{-1}$ from $\phi(U_i)$ to $\R^d$ by setting the function to be $0$ on $\R^d\setminus \phi(U_i)$. The extended function lies in the Besov space $B^s_{p,q}(\R^d)$ \citep[Chapter 7]{tribel1992theory}.

\subsection{Architecture of ConvResNeXt}\label{sec:convresnext}

We introduce the architecture of ConvResNeXts. ConvResNeXts have three main features: convolution kernel, residual connections, and parallel architecture.

Consider one-sided stride-one convolution in our network. Let $\cW=\{\cW_{j,k,l}\}\in \R^{w'\times K\times w}$ be a convolution kernel with output channel size $w'$, kernel size $K$ and input channel size $w$. For $\vect z\in \R^{D\times w}$, the convolution of $\cW$ with $\vect z$ gives $\vect y\in \R^{D\times w'}$ such that
\begin{align}
\label{eq:conv}
  \vect y=\cW \star \vect z,\quad  y_{i,j}=\sum_{k=1}^K \sum_{l=1}^{w} \cW_{j,k,l} z_{i+k-1,l},
\end{align}
where $1\leq i\leq D, 1\leq j \leq w' $ and we set $z_{i+k-1,l}=0$ for $i+k-1>D$, as demonstrated in Figure \ref{fig:conv}.

The building blocks of ConvResNeXts are residual blocks. Given an input $\vect x$, each residual block computes $\vect x+F(\vect x)$, 
where $F$ is a subnetwork called bottleneck, consisting of convolutional layers.

\begin{figure}[htb!]
\centering
	\centering
	\subfloat[\label{fig:conv}]{\includegraphics[width=0.6\textwidth]{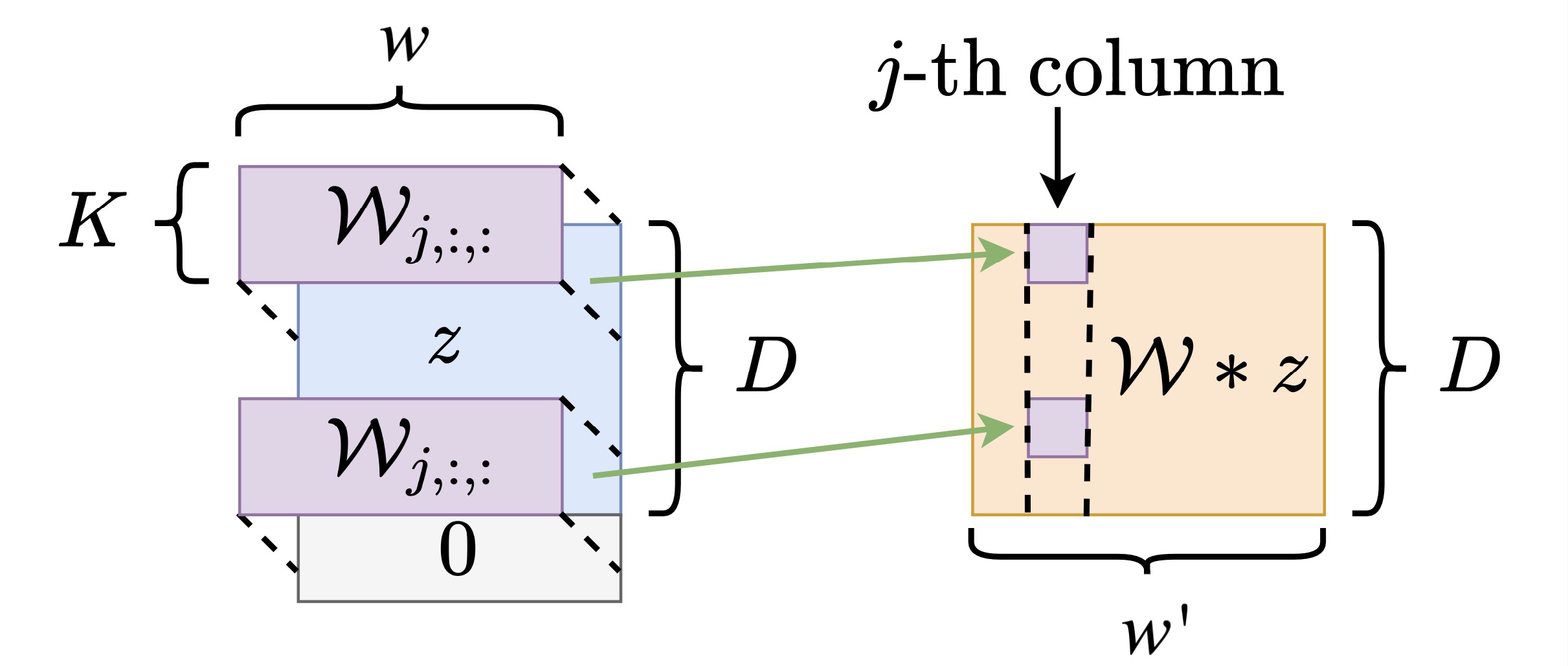}}
	\vspace{0.5cm}
	\subfloat[\label{fig:resnext}]{
    \begin{tikzpicture}
      \node (x) [draw = cornflowerblue, rounded corners, fill = palecornflowerblue, minimum width=7mm] at (0, 3) {$\vect x$};
      \node (p1) [draw = lavender(floral), fill = lightmauve, circle, inner sep=0] at (0, 2.0) {+};
      \node (p2) [draw = lavender(floral), fill = lightmauve, circle, inner sep=0] at (0, 1.5) {+};
      \node (p3) [draw = lavender(floral), fill = lightmauve, circle, inner sep=0] at (0, 0.6) {+};
      \node (y) [draw = cornflowerblue, fill = lightmauve, rounded corners, fill = palecornflowerblue, minimum width=7mm] at (0, 0) {$f(\vect x)$};
      \node at (-0.3, 1) {id};
      \node at (-0.3, 1.8) {id};
      \node at (-0.3, 2.4) {id};

      \node (f11) [draw = burlywood, fill = bisque, inner sep=1] at (0.8, 2.4) {\small$f_{1, 1}$};
      \node at (1.4, 2.5) {$\dots$};
      \node (f12) [draw = burlywood, fill = bisque, inner sep=1] at (2, 2.4) {\small$f_{1, M}$};

      \node (f21) [draw = burlywood, fill = bisque, inner sep=1] at (0.8, 0.9) {\small$f_{N, 1}$};
      \node at (1.4, 2.5) {$\dots$};
      \node (f22) [draw = burlywood, fill = bisque, inner sep=1] at (2, 0.9) {\small$f_{N, M}$};

      \draw [->, draw = mediumcarmine] (x) -- (p1);
      \draw [->, draw = mediumcarmine] (p1) -- (p2);
      \draw [->, draw = mediumcarmine] (p2) -- (p3);
      \draw [->, draw = mediumcarmine] (p3) -- (y);
      \node at (1, 1.75) {$\dots$};

      \draw [->, draw = mediumcarmine] let \p1 = ($(x.south)+(0, -0.05)$), \p2 = (f11.north) in (\x1,\y1) -- (\x2, \y1) -- (\p2);
      \draw [->, draw = mediumcarmine] let \p1 = ($(x.south)+(0, -0.05)$), \p2 = (f12.north) in (\x1,\y1) -- (\x2, \y1) -- (\p2);
      \draw [->, draw = mediumcarmine] let \p1 = ($(p1.east)+(0, -0.05)$), \p2 = (f11.south) in (\x2,\y2) -- (\x2, \y1) -- (\p1);
      \draw [->, draw = mediumcarmine] let \p1 = ($(p1.east)+(0, -0.05)$), \p2 = (f12.south) in (\x2,\y2) -- (\x2, \y1) -- (\p1);

      \draw [->, draw = mediumcarmine] let \p1 = ($(p2.south)+(0, -0.05)$), \p2 = (f21.north) in (\x1,\y1) -- (\x2, \y1) -- (\p2);
      \draw [->, draw = mediumcarmine] let \p1 = ($(p2.south)+(0, -0.05)$), \p2 = (f22.north) in (\x1,\y1) -- (\x2, \y1) -- (\p2);
      \draw [->, draw = mediumcarmine] let \p1 = ($(p3.east)+(0, -0.05)$), \p2 = (f21.south) in (\x2,\y2) -- (\x2, \y1) -- (\p1);
      \draw [->, draw = mediumcarmine] let \p1 = ($(p3.east)+(0, -0.05)$), \p2 = (f22.south) in (\x2,\y2) -- (\x2, \y1) -- (\p1);
      
    \end{tikzpicture}
  }\vspace{-0.25in}
	\caption{(a) Demonstration of the convolution operation $\cW *z$, where the input is $z\in  \R^{D\times w}$, and the output is $\cW* z \in \R^{D\times w'}$. Here $\cW_{j,:,:}$ is a $D \times w$ matrix for the $j$-th output channel. (b) Demonstration of the ConvResNeXt. $f_{1, 1} \dots f_{N,M}$ are the building blocks, each building block is a convolution neural network.
	}
    \vspace{-0.1in}
	\label{fig.conv}
\end{figure}

In ConvResNeXts, a parallel architecture is introduced to each building block, which enables multiple ``paths'' in each block.  
In this paper, we study the ConvResNeXts with rectified linear unit (ReLU) activation function, i.e., $\relu(z)=\max\{z,0\}$. We next provide the detailed definition of ConvResNeXts as follows:

\begin{definition}
\label{def:resnext} 
Let the neural network comprise $N$ residual blocks, each residual block has a parallel architecture with $M$ building blocks, and each building block contains $L$ layers. 
The number of channels 
is $w$, and the convolution kernel size is $K$. 
Given an input $\vect x \in \R^D$, a ConvResNeXt with ReLU activation function can be represented as 
\begin{align}
\label{def:convresnext}
\begin{split}
    f(\vect x) &= \mat W_{\rm out}
    \bigg(\sum_{m=1}^M f_{N,m} + \mathrm{id} \bigg) 
    \circ \cdots
    \circ  \bigg(\sum_{m=1}^M f_{1,m} + \mathrm{id}  \bigg)  \circ  P(\vect x),
    \\
    f_{n, m} & =  \mat W_L^{(n,m)} \star \relu \Big(\mat W_{L-1}^{(n, m)} \star \cdots \star \relu\Big( \mat  W_1^{(n,m)} \star \vect x \Big) \Big), 
\end{split}
\end{align}
where $\mathrm{id}$ is the identity operator, $P: \R^D \to \R^{D\times w_0}$ is the padding operator satisfying $P(\vect x) = [\vect x,~~\vect 0 ~~ \dots ~~\vect 0] \in \R^{D \times w}$, $ \{\mat W_l^{(n,m)} \}_{l=1}^L$ is a collection of convolution kernels for $n=1,\ldots,N,m=1,\ldots,M$,  $\mat W_{\rm{out}} \in \R^{w_L}$ denotes the linear operator for the last layer, and
$\star$ is the convolution operation defined in \autoref{eq:conv}.
\end{definition}

The structure of ConvResNeXts is shown in Figure \ref{fig:resnext}. When $M=1$, the ConvResNeXt defined in \autoref{def:convresnext} reduces to a ConvResNet. For notational simplicity, we omit biases in the neural network structure by extending the input dimension and padding the input with a scalar 1 (See Proposition \ref{prop:rmbias} for more details). The channel with 0's is used to accumulate the output. 

\section{Theory}
\label{sec:theory}

In this section, we study a binary classification problem on $\cM\subseteq [-1, 1]^D$. Specifically, we are given i.i.d. samples $\{ \vect x_i,y_i\}_{i=1}^n \sim \cD$ where $\vect x_i \in \cM$ and $y_i \in \{0,1\}$ is the label. The label $y\in\{0,1\}$ follows the Bernoulli-type distribution
\begin{align*}
    \P(y| \vect x) =\frac{\exp(yf^*(\vect x))}{1+\exp(f^*(\vect x))}
\end{align*}
for some $f^*: \cM\rightarrow\R$ belonging to the Besov space. More specifically, we make the following assumption on $f^*$.
\begin{assumption}\label{assump:target}
    Let $0<p,q\leq \infty$, $d/p < \alpha < \infty$. Assume $f^* \in B^\alpha_{p,q}(\cM)$ and $\|f^*\|_{B_{p,q}^\alpha(\gM)} \leq C_{\mathrm{F}}$ for some constant $C_{\mathrm{F}}>0$.  
\end{assumption}

To learn $f^*$, we minimize the empirical logistic risk over the training data: 
\begin{equation}\label{erm-objective}
    \begin{split}
        \hat{f} =  \argmin_{f\in\mathcal{F}^{\rm Conv}} \frac{1}{n}\sum_{i=1}^n \big[y_i \log(1 + \exp(-f(\vect x_i))) 
    +(1-y_i) \log(1 + \exp(f(\vect x_i)))\big],
    \end{split}
\end{equation}
where $\mathcal{F}^{\rm Conv}$ is some neural network class specified later. For notational simplicity, we denote the empirical logistic risk function in \eqref{erm-objective} as $\loss_n(f)$, and denote the population logistic risk as
\begin{align*}
\E_\gD[\loss(f)] = \E_{(\vect x, y) \sim \gD} \big[y \log(1 + \exp(-f(\vect x))) 
+(1-y) \log(1 + \exp(f(\vect x)))\big].
\end{align*}

We next specify the class of ConvResNeXts for learning $f^*$:
\begin{align}
    \cF^{\text{Conv}}(N,M,L,K,w,B_{\rm res},B_{\rm out}) = \Big\{ f~|~&f \text{ is in the form of \eqref{def:convresnext} with } N \text{ residual blocks. Every residual} \nonumber\\
    & \text{block has $M$ building blocks, with each building block} \nonumber\\
    & \text{containing $L$ layers. Each layer has kernel size bounded by } K,\nonumber\\
    & \text{number of channels bounded by } w, \sum_{n=1}^N \sum_{m=1}^M \sum_{\ell=1}^L\|\mat W^{(n,m)}_\ell\|_{\mathrm{F}}^2 \leq B_{\rm res},\nonumber\\
    & \|\mat W_{\rm out}\|_{\mathrm{F}}^2 \leq B_{\rm out},~f(\vect x) \in [0,1] \text{ for any } \vect x \in \gM. \Big\}. \nonumber
\end{align}
Note that the hyperparameters of $\cF^{\text{Conv}}$ will be specified in our theoretical analysis later.

As can be seen, $\cF^{\text{Conv}}$ contains the Frobenius norm constraints of the weights. For the sake of computational convenience in practice, such constraints can be replaced with weight decay regularization the residual blocks and the last fully connected layer separately. More specifically, we can use the following alternative formulation:
\begin{align*}
    \tilde f = \argmin_{f\in \gF^{\rm{Conv}}(N,M,L,K,w,\infty, \infty)} \loss_n(f)
    + \lambda_1 \sum_{n=1}^N \sum_{m=1}^M \sum_{\ell=1}^L\|\mat W^{(n,m)}_\ell\|_{\mathrm{F}}^2 
    + \lambda_2 \|\mat W_{\rm out}\|_{\mathrm{F}}^2,
\end{align*}
where $\lambda_1, \lambda_2 > 0$ are properly chosen regularization parameters.

\subsection{Approximation theory}
In this section, we provide a universal approximation theory of ConvResNeXts for Besov functions on a smooth manifold: 
\begin{theorem}
    \label{thm:appcov}
    For any Besov function $f_0$ on a smooth manifold satisfying $ p, q \geq 1, \alpha - d/p > 1,$
    \begin{eqal*}
         \|f_0\|_{B_{p,q}^\alpha(\gM)} \leq C_{\mathrm{F}}, 
    \end{eqal*}
    for any $P>0$ and any ConvResNeXt class $\cF^{\text{Conv}}(N,M,L,K,w,B_{\rm res},B_{\rm out})$ 
    satisfying $L =  L' + L_0 - 1,  L' \geq 3$, where $L_0 = \lceil \frac{D}{K-1}\rceil$, and 
    \begin{eqal}
        \label{eq:bboundcnn}
        MN \geq C_\gM P,\quad w \geq \C{const:wid}(dm+D), \quad
        B_{\rm res} \leq \C{const:bboundc1}  L/K, \quad B_{\rm out} \leq \C{const:bboundc2} C_{\mathrm{F}}^2((dm+D)LK)^L(C_\gM P)^{L-2/p},
    \end{eqal}
    there exists $f \in \cF^{\text{Conv}}(N,M,L,K,w,B_{\rm res},B_{\rm out})$ such that
    \begin{eqal}
        \label{eq:apperrcnn}
        \|f - f_0\|_\infty \leq C_{\mathrm{F}} C_\gM\left(\C{const:errc1}  P^{-\alpha/d}  + \C{const:errc2}  \exp(-\C{const:errc3} L' \log P)\right),
    \end{eqal} 
    where $\Cr{const:wid}, \Cr{const:bboundc1}, \Cr{const:bboundc2}$ are universal constants 
    and  $\Cr{const:errc1}, \Cr{const:errc2},\Cr{const:errc3}$ are constants that only depends on $d$ and $m$, $d$ is the intrinsic dimension of the manifold and $m$ is an integer satisfying $0 < \alpha < \min(m, m - 1 + 1/p)$.
\end{theorem}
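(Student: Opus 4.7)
The plan is to construct an approximating ConvResNeXt in three logical stages, following the classical route of (i) partition-of-unity localization on $\gM$, (ii) a truncated B-spline expansion per chart, and (iii) realization of each basis function by one building block, then aggregating via the parallel-plus-residual sum.

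\textbf{Stage 1: Localization and B-spline truncation.} I would start by invoking the finite $C^\infty$ atlas $\{(U_i,\phi_i)\}_{i=1}^{C_\gM}$ on $\gM$ and a subordinate partition of unity $\{\rho_i\}_{i=1}^{C_\gM}$, writing $f_0 = \sum_{i=1}^{C_\gM}(f_0\rho_i)$. Each pull-back $(f_0\rho_i)\circ \phi_i^{-1}$, extended by $0$ outside $\phi_i(U_i)$, lies in $B_{p,q}^\alpha(\R^d)$ with comparable norm. Using the B-spline decomposition from the paper, I expand each pull-back as $\sum_k \sum_{\vs\in J(k)} c_{k,\vs} M_{m,k,\vs}$ and truncate at the level so that the total number of retained basis functions across all charts is $\lesssim P$. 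Standard Besov approximation (which uses $\alpha - d/p > 0$ to embed into $L^\infty$) then gives a truncation error of order $C_F C_\gM P^{-\alpha/d}$, accounting for the first term in \eqref{eq:apperrcnn}. The choice $MN \ge C_\gM P$ in \eqref{eq:bboundcnn} is exactly a budget of one building block per retained basis function.

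\textbf{Stage 2: One building block per basis function.} Each retained term contributes a function $\vx \mapsto c_{k,\vs}\,\rho_i(\vx)\, M_{m,k,\vs}(\phi_i(\vx))$ defined on a coordinate neighborhood in $\gM \subset \R^D$. Since $M_{m,k,\vs}$ is a tensor product of $d$ cardinal B-splines, and each cardinal B-spline is a piecewise polynomial of degree $m$, I would approximate it by a ReLU sub-network that (a) applies a linear map on the $D$ coordinates to produce the $d$ local chart coordinates (using the embedding and restricting support so that $\phi_i$ can be replaced by its linearization up to an error absorbed in the $P^{-\alpha/d}$ budget), (b) forms the $d$-fold product of ReLU-approximated 1D polynomials, and (c) multiplies by the scalar $\rho_i$. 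Multiplication of bounded inputs by ReLU networks is the classical Yarotsky construction, whose error decays as $\exp(-c L' \log P)$ when the network has depth $L'$; this produces the second term in \eqref{eq:apperrcnn}. The width bound $w\gtrsim dm+D$ in \eqref{eq:bboundcnn} is exactly what is needed to carry the $D$ ambient coordinates plus the $O(dm)$ intermediate products, together with an accumulator channel of zeros used by the residual stream.

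\textbf{Stage 3: Convolution and ResNeXt packaging.} To turn each fully-connected building block into a convolutional one, I would use the now-standard device (as in Oono--Suzuki and Liu et al.) of realizing a fully-connected layer on a $D$-dimensional input by $L_0 = \lceil D/(K-1)\rceil$ stacked stride-one convolutions with kernel size $K$, which exactly explains the layer budget $L = L' + L_0 - 1$. The $M$ parallel paths per residual block let me pack $M$ basis-function approximators into the same block, and the residual identity plus output linear map $\mW_{\rm out}$ aggregates the contributions additively, giving $\sum_{n,m} f_{n,m}(\vx)$ on the accumulator channel, which matches the target up to the error from Stages 1-2.

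\textbf{Main obstacle: Frobenius-norm bookkeeping.} The hardest part will be verifying the constraints in \eqref{eq:bboundcnn} on $B_{\rm res}$ and $B_{\rm out}$, since Frobenius norms compose poorly through depth. My plan is to use AM-GM balancing: within each building block of depth $L$, rescale the $L$ layers so each has Frobenius norm $\approx (\text{block scale})^{1/L}$, which converts the \emph{product} of per-layer norms (determining the block's function-space magnitude) into the \emph{sum} of squared Frobenius norms (what $B_{\rm res}$ controls). Summing over the $\lesssim C_\gM P$ active blocks yields $B_{\rm res} = O(L/K)$ because each individual basis function has $O(1)$ magnitude on its support. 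The potentially large B-spline coefficients $|c_{k,\vs}|$ (which, in Besov spaces, can grow like $2^{k(d/p-\alpha)}$ at worst) are deliberately pushed to $\mW_{\rm out}$, explaining the $C_F^2 ((dm+D)LK)^L (C_\gM P)^{L-2/p}$ factor. A secondary subtlety is that the chart map $\phi_i$ is nonlinear, but since $\gM$ has positive reach and the partition of unity has compactly supported pieces, one can choose the charts so that a linearization error contributes at most $O(P^{-\alpha/d})$ to the overall bound.
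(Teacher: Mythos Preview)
Your proposal follows essentially the same four-stage architecture as the paper's proof (partition-of-unity localization, sparse B-spline truncation, per-basis-function ReLU realization, and FNN$\to$CNN packaging via $L_0=\lceil D/(K-1)\rceil$ extra layers), and your Frobenius-norm bookkeeping via AM--GM rebalancing and pushing magnitude to $\mW_{\rm out}$ is exactly what the paper does.

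Two places where your sketch diverges from the paper's execution are worth noting. First, the chart-selection mechanism: after absorbing $\rho_i$ into the function before B-spline approximation (so that each pull-back $(f_0\rho_i)\circ\phi_i^{-1}$ is what gets expanded), the quantity a building block must realize is $c_{k,\vs}\,M_{m,k,\vs}(\phi_i(\vx))\times \mathbf{1}(\vx\in B(\vc_i,r))$, not an additional factor of $\rho_i(\vx)$. The paper implements this indicator exactly (in the sense of incurring no extra error) by a sub-network that approximates the squared distance $d^2(\vx;\vc_i)$ (Proposition~\ref{prop:nnradius}), thresholds it to a soft indicator, and then uses a two-neuron gadget $a\tilde\times b$ (Proposition~\ref{prop:nnx}) that returns $a$ when $b=1$ and $0$ when either input vanishes; because $f_0\rho_i$ is supported inside $U_i$, the soft indicator equals $1$ wherever the B-spline part is nonzero, so chart selection contributes no error. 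Your ``multiply by the scalar $\rho_i$'' would instead require a separate NN approximation of the smooth bump $\rho_i$, which is unnecessary. Second, the paper takes $\phi_i$ to be a \emph{linear} orthogonal projection onto an approximate tangent plane (legitimate thanks to the positive reach of $\gM$), so there is no linearization error to manage; the first coordinate map inside each block is just a fixed linear layer. With these two adjustments your plan becomes precisely the paper's proof.
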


The approximation error of the network is bounded by the sum of two terms.
The first term is a polynomial decay term that decreases with the size of the neural network and represents the trailing term of the B-spline approximation.
The second term reflects the approximation error of neural networks to piecewise polynomials, decreasing exponentially with the number of layers.
The proof is deferred to \autoref{sec:poa} and the appendix.

\subsection{Estimation theory}

\begin{theorem}
    \label{thm:lerr}
    Suppose Assumption \ref{assump:target} holds. 
    Set $L = L' + L_0 - 1,  L' \geq 3$, where $L_0 = \lceil \frac{D}{K-1}\rceil$, and 
    $$
    MN \geq C_\gM P,\quad P=O(n^{\frac{1-2/L}{2\alpha/d(1-1/L)+1-2/pL}}),\quad w \geq \Cr{const:wid}(dm+D).
    $$
    Let $\hat f$ be the global minimizer given in \autoref{erm-objective} with the function class $\cF = \cF^{\text{Conv}}(N,M,L,K,w,B_{\rm res},B_{\rm out})$. Then we have
    \begin{eqal*}
        \E_{\gD} [\loss(\hat f(x), y)] \leq \E_{\gD}[\loss(f^*(x),y)] 
        + \C{const:errgen1} \Big(\frac{K^{-\frac{2}{L-2}}w^{\frac{3L-4}{L-2}} L^{\frac{3L-2}{L-2}}}{n}\Big)^{\frac{\alpha/d(1-2/L)}{2\alpha/d(1-1/L)+1-2/(pL)}} + \C{const:errgen2}\exp(-\Cr{const:errc3}L'),
    \end{eqal*}
    where the logarithmic terms are omitted. $\Cr{const:wid}$ is the constant defined in \autoref{thm:appcov}, $\Cr{const:errgen1}, \Cr{const:errgen2}$ are constants that depend on $C_{\mathrm{F}}, C_\gM, d, m$,
    $K$ is the size of the convolution kernel.
\end{theorem}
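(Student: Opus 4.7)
}
The plan is a standard bias--variance (oracle) decomposition together with a localized complexity argument tailored to the weight-decayed ConvResNeXt class. Write $\cF := \cF^{\text{Conv}}(N,M,L,K,w,B_{\rm res},B_{\rm out})$. For any $\bar f \in \cF$, by the optimality of $\hat f$ for $\loss_n$,
\begin{equation*}
\E_\gD[\loss(\hat f)] - \E_\gD[\loss(f^*)]
\;\leq\; \underbrace{\E_\gD[\loss(\bar f) - \loss(f^*)]}_{\text{approximation}}
\;+\; \underbrace{2\sup_{f\in\cF}\bigl|\loss_n(f) - \E_\gD[\loss(f)]\bigr|}_{\text{stochastic}},
\end{equation*}
and I will sharpen the stochastic term via local Gaussian complexity and the critical radius machinery of \cite{bartlett2005local}, rather than a uniform deviation bound, which is what ultimately produces the $(1-2/L)$-type exponent in the stated rate.

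\paragraph{Step 1 (approximation).}
Invoke Theorem \ref{thm:appcov} with a parameter $P$ to be tuned. Since $f^*\in B^\alpha_{p,q}(\gM)$ with $\|f^*\|_{B^\alpha_{p,q}}\le C_{\mathrm F}$, choosing $MN\gtrsim C_\gM P$, $w\gtrsim dm+D$, and the norm budgets $B_{\rm res}, B_{\rm out}$ as in \eqref{eq:bboundcnn} yields $\bar f\in\cF$ with $\|\bar f-f^*\|_\infty \lesssim P^{-\alpha/d} + \exp(-\Cr{const:errc3} L'\log P)$. The logistic loss $\ell(z,y)=\log(1+e^{-(2y-1)z})$ is $1$-Lipschitz in $z$, so this $L^\infty$-approximation error transfers directly to a population loss gap of the same order, giving the approximation contribution.

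\paragraph{Step 2 (covering number of the ConvResNeXt class).}
The core obstacle is controlling $\log \mathcal N(\varepsilon,\cF,\|\cdot\|_\infty)$ using only the \emph{global} Frobenius budgets $B_{\rm res},B_{\rm out}$ rather than per-layer norm bounds. I would perturb one kernel $\mat W_\ell^{(n,m)}$ at a time and propagate the perturbation through the block $f_{n,m}$, using the fact that each residual block acts as $\mathrm{id}+\sum_m f_{n,m}$, so the accumulated Lipschitz factor across the network is $\prod_{n=1}^N \bigl(1 + \sum_m \mathrm{Lip}(f_{n,m})\bigr)$. A key trick is that for a bottleneck of depth $L$ with Frobenius-squared budget $b$, AM--GM gives $\mathrm{Lip}(f_{n,m})\le (b/L)^{L/2}$ pathwise, and since $\sum_{n,m}\|\cdot\|_{\rm F}^2 \le B_{\rm res}$ only a small number of blocks can carry significant mass (the implicit sparsity mentioned in the introduction). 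Combining a union bound over the ``active'' blocks with a volume-argument covering of each active kernel at scale $\varepsilon$ gives a covering bound of the form $\log\mathcal N(\varepsilon,\cF,\|\cdot\|_\infty)\lesssim \mathrm{poly}(w,L,K)\,\log(1/\varepsilon)\cdot \varepsilon^{-2/(L-1)}$ (up to logs), where the $\varepsilon^{-2/(L-1)}$ arises from the depth-$L$ multiplicative weight propagation. This is where I would also replace single-scale covering arguments by Dudley's chaining, so that the dependence on $\varepsilon$ is integrated rather than evaluated at the critical scale, exactly as stated after Theorem 1 of the excerpt.

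\paragraph{Step 3 (critical radius and balancing).}
Plug the covering bound into Dudley's integral to obtain an upper bound on the local Gaussian complexity $\cG_n(r,\cF)$, then solve the critical-radius inequality $\cG_n(r_n^*,\cF) \lesssim (r_n^*)^2$ to get $(r_n^*)^2 \lesssim (P/n)^{(1-2/L)}$ up to logarithmic factors and the $(w,L,K)$ polynomial prefactor that appears in the statement. By the standard offset argument for Lipschitz losses, this bounds the stochastic term by $C(r_n^*)^2$. Finally I balance the approximation term $P^{-2\alpha/d}$ (after accounting for the $p$-dependent correction $2/(pL)$ coming from the Besov embedding inside Theorem \ref{thm:appcov}) against $(P/n)^{(1-2/L)}$, which yields the stated exponent $\tfrac{\alpha/d(1-2/L)}{2\alpha/d(1-1/L)+1-2/(pL)}$ and the prescribed choice $P=O(n^{(1-2/L)/(2\alpha/d(1-1/L)+1-2/(pL))})$. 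The exponentially small term $\exp(-\Cr{const:errc3}L')$ is untouched by the balancing and is carried through as the second summand.

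\paragraph{Main obstacle.} The chief technical hurdle is Step 2: the weight-decay budget is shared across $NM$ blocks and $L$ layers, so ``per-layer'' covering arguments overcount drastically. The resolution is to exploit that at most $\tilde O(P)$ blocks can be nontrivially active under the $\ell_2^2$ budget (implicit sparsity) and then cover only those blocks, while using Dudley chaining to absorb the interaction between the per-block Lipschitz constants and the multiplicative blow-up through the $N$ residual layers.
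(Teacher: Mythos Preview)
Your plan is essentially the paper's own argument: approximation via Theorem~\ref{thm:appcov} and the Lipschitz property of the logistic loss, a covering-number bound for the weight-decayed class driven by the ``small block/large block'' implicit-sparsity trick you describe, Dudley chaining to obtain the critical radius, and then balancing in $P$. Two small calibrations: the covering exponent the paper obtains is $\delta^{-2/(L-2)}$ rather than $\varepsilon^{-2/(L-1)}$ (it comes from solving $\epsilon\propto\delta^{1/(L/2-1)}$ for the small-block threshold, so the number of large blocks is $B_{\rm res}/\epsilon$, not ``$\tilde O(P)$''), and the localized bound does not pass through the uniform deviation $2\sup_{f\in\cF}|\loss_n(f)-\E\loss(f)|$ you wrote first---the paper invokes the critical-radius oracle inequality (Wainwright, Theorem~14.20) directly on the star-shaped hull $\{f_1-f_2:f_1,f_2\in\cF\}$, which is what actually produces the $(1-2/L)$ exponent you are after.
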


We would like to make the following remarks about the results:

\itemst{Strong adaptivity:} 
By setting the width of the neural network to $w = 2\Cr{const:wid}D$, the model can adapt to any Besov functions on any smooth manifold, provided that $dm \leq D$. 
This remarkable flexibility can be achieved simply by tuning the regularization parameter. 
The cost of overestimating the width is a slight increase in the estimation error. 
Considering the immense advantages of this more adaptive approach, this mild price is well worth paying.
    
\itemst{No curse of dimensionality:} the above error rate only depends polynomially on the ambient dimension $D$ and exponentially on the hidden dimension $d$.
Since in real data, the hidden dimension $d$ can be much smaller than the ambient dimension $D$, this result shows that neural networks can explore the low-dimension structure of data to
overcome the curse of dimensionality.

\itemst{Overparameterization is fine:} the number of building blocks in a ConvResNeXt does not influence the estimation error as long as it is large enough.
In other words,  this matches the empirical observations that neural networks generalize well despite overparameterization.
    
\itemst{Close to minimax rate:} 
The lower bound of the 1-Lipschitz error for any estimator $\theta$ is 
\begin{eqal*}
    \min_{\theta} \max_{f^* \in B_{p,q}^\alpha} L(\theta(\gD), f^*) \gtrsim n^{-\frac{\alpha/d}{2\alpha/d+1}}.
\end{eqal*}
where $\gtrsim$ notation hides a factor of constant.
The proof can be found in \autoref{sec:lower}.
Comparing to the minimax rate, we can see that as $L \rightarrow \infty$, the above error rate converges to the minimax rate up to a constant term.
In other words, overparameterized ConvResNeXt can achieve close to the minimax rate in estimating functions in Besov class.
In comparison, all kernel ridge regression including any NTKs will have a suboptimal rate lower bounded by $\frac{2\alpha-d}{2\alpha}$, which is suboptimal.

\itemst{Deeper is better:} with larger $L$, the error rate decays faster with $n$ and get closer to the minimax rate. This indicates that deeper model can achieve better performance than shallower models when the training set is large enough.

\itemst{Tradeoff between width and depth:} With a fixed budget in the number of parameters, the tradeoff between width and depth is crucial for achieving the best performance, and this often requires repeated, time-consuming experiments. 
On the other hand, our results suggests that such a tradeoff less important in a ResNeXt. 
The lower bound of error does not depend on the arrangements of the residual blocks $M$ and $N$, as long as their product is large enough. 
This can partly explain the benefit of ResNeXt over other architecture.

By choosing $L = O(\log(n))$, the second term in the error can be merged with the first term, and close to the minimax rate can be achieved:
\begin{corollary}
    Given the conditions in \autoref{thm:lerr}, set the depth of each block is $L = O(\log(n))$ and then the estimation error of the empirical risk minimizer $\hat f$ satisfies
    \begin{eqal*}
        \E_{\gD} [\loss(\hat f(\vect x), y)] &\leq \E_{\gD}[\loss(f^*)] + \tilde O(n^{-\frac{\alpha/d}{2\alpha/d+1}(1-o(1))}),
    \end{eqal*}
    where $\tilde O(\cdot)$ omits the logarithmic term. 
\end{corollary}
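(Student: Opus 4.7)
The plan is to substitute $L = c\log n$ for a sufficiently large constant $c > 0$ directly into the bound of Theorem \ref{thm:lerr} and track how each factor scales. The bound has two additive pieces: a polynomial-in-$n$ term whose exponent is rational in $L$, and an exponential-in-$L'$ term. I will show both collapse to $\tilde O\!\left(n^{-\frac{\alpha/d}{2\alpha/d+1}(1-o(1))}\right)$.

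First I would analyze the exponent $\beta(L) := \frac{(\alpha/d)(1-2/L)}{2(\alpha/d)(1-1/L)+1-2/(pL)}$ appearing in the first term. A direct Taylor expansion at $L=\infty$ gives
\begin{equation*}
\beta(L) = \frac{\alpha/d}{2\alpha/d+1}\Big(1 - O(1/L)\Big),
\end{equation*}
so with $L = c\log n$ the exponent equals $\frac{\alpha/d}{2\alpha/d+1}(1-o(1))$, as required. Second, I would bound the prefactor $\big(K^{-2/(L-2)}\,w^{(3L-4)/(L-2)}\,L^{(3L-2)/(L-2)}\big)^{\beta(L)}$. As $L\to\infty$ the exponents $\tfrac{3L-4}{L-2}$ and $\tfrac{3L-2}{L-2}$ converge to $3$, so up to constants the prefactor is at most $(w^{3}L^{3})^{\beta(L)}$, which with $w=O(D)$ and $L=O(\log n)$ is polylogarithmic in $n$ and hence absorbed by the $\tilde O(\cdot)$.

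Third, I would handle the exponential term $\Cr{const:errgen2}\exp(-\Cr{const:errc3}L')$. Since $L' = L - L_0 + 1 = c\log n - O(1)$, this term equals $n^{-\Cr{const:errc3} c}\cdot\mathrm{const}$. By choosing the constant $c$ in $L = c\log n$ large enough that $\Cr{const:errc3}c \geq \frac{\alpha/d}{2\alpha/d+1}$, this term is dominated by (and may be merged into) the first term without worsening the rate. Finally I would verify that the other hypotheses of Theorem \ref{thm:lerr} remain satisfiable with $L = O(\log n)$: $P = O(n^{(1-2/L)/(2\alpha/d(1-1/L)+1-2/(pL))})$ is polynomial in $n$, and the required $MN \geq C_\gM P$ and $w \geq \Cr{const:wid}(dm+D)$ are compatible with this choice.

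The main obstacle, and the one requiring the most care, is step one: controlling the perturbation of the rate exponent $\beta(L)$ around its limit, since $n^{-\beta(L)} = n^{-\frac{\alpha/d}{2\alpha/d+1}} \cdot n^{O(1/L)}$, and the factor $n^{O(1/\log n)} = e^{O(1)}$ is only a constant — so the $(1-o(1))$ in the exponent is really delivered by combining this constant-order distortion with the polylogarithmic prefactor using a slightly weaker bookkeeping. Everything else (bounding the prefactor, dominating the exponential tail, checking feasibility of parameter settings) is routine algebraic manipulation once the $L = c\log n$ substitution is made.
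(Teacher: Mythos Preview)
Your proposal is correct and matches the paper's approach: the paper gives no separate proof for this corollary beyond the one-line remark that with $L=O(\log n)$ the exponential term merges into the polynomial term, and you have filled in exactly the algebra that justifies this. Your observation that the $(1-o(1))$ in the exponent is what absorbs the polylogarithmic prefactor (since the $O(1/L)$ perturbation to $\beta(L)$ itself only costs a constant factor $n^{O(1/\log n)}=e^{O(1)}$) is the right reading of the statement.
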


The proof of \autoref{thm:lerr} is deferred to \autoref{sec:poe} and \autoref{sec:prooflerr}.
The key technique is computing the critical radius of the local Gaussian complexity by bounding the covering number of weight-decayed ConvResNeXts.
This technique provides a tighter bound than choosing a single radius of the covering number as in \citet{suzuki2018adaptivity,zhang2022deep}, for example.
The covering number of an overparameterized ConvResNeXt with norm constraint (\autoref{lemma:rescov}) is one of our key contributions.

\section{Proof overview}
\label{sec:po}
\subsection{Approximation error}
\label{sec:poa}
We follow the method in \citet{liu2021besov} to construct a neural network that achieves the approximation error we claim.
It is divided into the following steps:

\itemst{Step 1: Decompose the target function into the sum of locally supported functions.}
In this work, we adopt a similar approach to \citep{liu2021besov} and partition $\gM$ using a finite number of open balls on $\R^D$. Specifically, we define ${B(\vect c_i, r)}$ as the set of unit balls with center $\vect c_i$ and radius $r$ such that their union covers the manifold of interest, i.e., $\gM \subseteq \cup_{i=1}^{C_\gM} B(\vect c_i, r)$. 
This allows us to partition the manifold into subregions $U_i = B(\vect c_i, r) \cap \gM$, and further decompose a smooth function on the manifold into the sum of locally supported smooth functions with linear projections.
The existence of function decomposition is guaranteed by the existence of partition of unity stated in \autoref{prop:existpou}.
See \autoref{sec:besovbspline} for the detail.

\itemst{Step 2: Locally approximate the decomposed functions using cardinal B-spline basis functions.}
\label{sec:besovmanidecom}
In the second step, we decompose the locally supported Besov functions achieved in the first step using B-spline basis functions. The existence of the decomposition was proven by \citet{dung2011optimal}, and was applied in a series of works \citep{zhang2022deep,suzuki2018adaptivity,liu2021besov}.
The difference between our result and previous work is that we define a norm on the coefficients and bound this norm, instead of bounding the maximum value. 
The detail is deferred to \autoref{proof:prop:besovmanidecom}.

\itemst{Step 3: Approximate the polynomial functions using neural networks.}
\label{sec:nnbspline}
In this section, we follow the method in \citet{zhang2022deep,suzuki2018adaptivity,liu2021besov} and show that neural networks can be used to approximate polynomial functions, including B-spline basis functions and the distance function.
The key technique is to use a neural network to approximate square function and multiply function \citep{barron1993universal}.
The detail is deferred to the appendix.
Specifically, \autoref{lemma:nnbspline} proves that a neural network with width $w = O(dm)$ and depth $L$ can approximate B-spline basis functions, and the error decreases exponentially with $L$;
Similarly, \autoref{prop:nnradius} shows that a neural network with width $w = O(D)$ can approximately calculate the distance between two points $d^2(\vect x; \vect c)$, with precision decreasing exponentially with the depth.

\itemst{Step 4: Use a ConvResNeXt to Approximate the target function.}
Using the results above, the target function can be (approximately) decomposed as 
\begin{eqal}
    \label{eq:decomfinal}
    \sum_{i=1}^{C_\gM} \sum_{j=1}^P a_{i,k_j, \vect s_j} M_{m, k_j, \vect s_j} \circ \phi_i \times  \mathbf{1}(\vect x \in B(\vect c_i, r)).
\end{eqal}
We first demonstrate that a ReLU neural network taking two scalars $a,b$ as the input, denoted as $a \tilde \times b$, can approximate 
\begin{equation*}
    y \times {\mathbf{1}}(\vect x \in B_{r, i}),
\end{equation*}
where $\tilde \times$ satisfy that $y \tilde \times 1=y$ for all $y$, and $y \tilde\times \tilde x = 0$ if any of $x$ or $y$ is 0, and the soft indicator function $\tilde{\mathbf{1}}(\vect x \in B_{r, i})$ satisfy $\tilde{\mathbf{1}}(\vect x \in B_{r, i}) = 1$ when $x \in B_{r, i}$, and $\tilde{\mathbf{1}}(\vect x \in B_{r, i}) = 0$ when $x \notin B_{r+\Delta, i}$.
The detail is deferred to \autoref{sec:chart}.

Then, we show that it is possible to construct 
$MN = C_\gM P$
number of building blocks, such that 
each building block is a feedforward neural network with 
width $\Cr{const:wid}(md+D)$ and depth $L$, where $m$ is an integer satisfying $0 < \alpha < min(m, m - 1 + 1/p)$.
The $k$-th building block (the position of the block does not matter) approximates 
$$
a_{i,k_j, \vect s_j} M_{m, k_j, \vect s_j} \circ \phi_i \times  \mathbf{1}(\vect x \in B(\vect c_i, r)),
$$ 
where $i=ceiling(k / N), j=rem(k, N)$.
Each building block has 
where a sub-block with width $D$ and depth $L-1$ approximates the chart selection, a sub-block with width $md$ and depth $L-1$ approximates the B-spline function, and the last layer approximates the multiply function.
The norm of this block is bounded by 
\begin{eqal}
    \label{eq:normblock}
    \sum_{\ell=1}^L \|\mat W_\ell^{(i,j)}\|_{\mathrm{F}}^2 \leq O(2^{2k/L}dmL+DL).
\end{eqal}
Making use of the 1-homogeneous property of the ReLU function, by scaling all the weights in the neural network, 
these building blocks can be combined into a neural network with residual connections, that approximate the target function and satisfy our constraint on the norm of weights. 
See \autoref{proof:thm:app} for the detail. 

By applying \autoref{lemma:cnnfnn}, which shows that any $L$-layer feedforward neural network can be reformulated as an $L+L_0-1$-layer convolution neural network, the neural network constructed above can be converted into a ConvResNeXt
that satisfies the conditions in \autoref{thm:appcov}.

\subsection{Estimation error}
\label{sec:poe}
We first prove the covering number of an overparameterized ConvResNeXt with norm-constraint as in \autoref{lemma:rescov}, then compute the critical radius of this function class using the covering number as in \autoref{cor:covdelta}.
The critical radius can be used to bound the estimation error as in Theorem 14.20 in \citet{wainwright_2019}.
The proof is deferred to \autoref{sec:prooflerr}.
\begin{lemma}
    \label{lemma:rescov}
    Consider a neural network defined in \autoref{def:resnext}, with weights satisfying $\|\mat W_{\rm out}\|_{\mathrm{F}}^2 \leq B_{\rm out}$, and $\sum_{n=1}^N \sum_{m=1}^M \sum_{\ell=1}^L\|\mat W^{(n,m)}_\ell\|_{\mathrm{F}}^2 \leq B_{\rm res}$.
    Let the input of this neural network satisfy $\|\vect x\|_2 \leq 1, \forall x$, and is concatenated with 1 before feeding into this neural network so that part of the weight plays the role of the bias.
    The covering number of this neural network with accuracy $\delta>0$ is bounded by
    \begin{align}
        \log\gN(\cdot, \delta) &\lesssim w^2 L B_{\rm res}^{\frac{1}{1-2/L}}K^{\frac{2-2/L}{1-2/L}}
        \big(B_{\rm out}^{1/2}\exp((K B_{\rm res}/L)^{L/2})\big)^{\frac{2/L}{1-2/L}}\delta^{-\frac{2/L}{1-2/L}},
        \label{eq:pnc}
    \end{align}
    where the logarithmic term is omitted.
\end{lemma}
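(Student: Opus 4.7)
The plan is to bound the covering number in three steps: a Lipschitz and sup-norm bound for the network; a Frobenius perturbation bound for individual weights; and a product cover combined with a Dudley-style truncation that decouples the bound from $NM$.

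For the first step, Cauchy--Schwarz applied to \eqref{eq:conv} yields $\|\mat W \star z\|_{\mathrm{F}} \leq \sqrt{K}\,\|\mat W\|_{\mathrm{F}}\,\|z\|_{\mathrm{F}}$ for a single convolution, so composing $L$ layers of a bottleneck and applying AM-GM to the product of Frobenius norms gives $\mathrm{Lip}(f_{n,m}) \leq K^{L/2}(b_{n,m}/L)^{L/2}$, where $b_{n,m}:=\sum_\ell \|\mat W_\ell^{(n,m)}\|_{\mathrm{F}}^2$. Because $L\geq 2$ the map $t\mapsto t^{L/2}$ is convex, so the budget $\sum b_{n,m}\leq B_{\rm res}$ gives $\sum_{n,m} b_{n,m}^{L/2} \leq B_{\rm res}^{L/2}$ (the maximum of a convex function over the simplex is attained at a vertex). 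Combined with $1+x \leq e^x$ telescoped over the residual blocks and $\|\mat W_{\rm out}\|_{\mathrm{F}}\leq\sqrt{B_{\rm out}}$, this bounds both the Lipschitz constant and the sup-norm of $f$ by $\sqrt{B_{\rm out}}\exp((KB_{\rm res}/L)^{L/2})$. Moreover, perturbing a single $\mat W_\ell^{(n,m)}$ by $\Delta$ changes $f_{n,m}$ in Lipschitz norm by at most $\sqrt{K}\,\|\Delta\|_{\mathrm{F}}\prod_{\ell'\neq \ell}\sqrt{K}\,\|\mat W_{\ell'}^{(n,m)}\|_{\mathrm{F}}$, and the surrounding residual/output structure contributes a further factor already bounded above; summing via the triangle inequality produces a linear bound on $\|f - \tilde f\|_\infty$ in terms of all individual Frobenius perturbations (with an analogous contribution from $\mat W_{\rm out}$).

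For the covering, I would apply a truncation at threshold $\tau>0$: zero out every block with $b_{n,m}\leq \tau$. The inequality $\sum_{b_{n,m}\leq\tau}b_{n,m}^{L/2}\leq \tau^{L/2-1}B_{\rm res}$ shows that the truncation cost is $\sqrt{B_{\rm out}}\exp((KB_{\rm res}/L)^{L/2})\cdot K^{L/2}L^{-L/2}\tau^{L/2-1}B_{\rm res}$; setting this quantity to $\delta/2$ determines $\tau$ and the number $|\gS|\leq B_{\rm res}/\tau$ of ``significant'' blocks, which after algebra scales as $|\gS| \lesssim (KB_{\rm res})^{L/(L-2)}\big(\sqrt{B_{\rm out}}\exp((KB_{\rm res}/L)^{L/2})\big)^{2/(L-2)}\delta^{-2/(L-2)}$, up to constants depending on $L$. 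For each significant block, its $w^2 KL$ weight entries lie in a Frobenius ball of radius $\sqrt{B_{\rm res}}$ and admit a standard product $\epsilon$-cover of log-cardinality $\lesssim w^2 KL \log(\sqrt{B_{\rm res}}/\epsilon)$. Choosing $\epsilon$ so the residual covering error is $\leq \delta/2$ via the linear perturbation bound, and covering $\mat W_{\rm out}$ analogously, the total log covering is $\lesssim |\gS|\cdot w^2 KL$ with the $\log(1/\epsilon)$ factor absorbed into $\lesssim$. This matches the claimed expression: $|\gS|\cdot K$ contributes exponent $1+L/(L-2)=(2-2/L)/(1-2/L)$ on $K$, while the remaining exponents on $B_{\rm res}$, $B_{\rm out}$, $\exp(\cdot)$, and $\delta^{-1}$ come directly from $|\gS|$.

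The main obstacle is that a naive weight-wise covering would scale with $NM$, which is fatal in the overparameterized regime. The decisive inequality $\sum_{n,m}b_{n,m}^{L/2}\leq B_{\rm res}^{L/2}$ (valid precisely because $L\geq 2$) captures the implicit sparsity forced by the norm budget and lets truncation eliminate all but $|\gS|\lesssim \delta^{-2/(L-2)}$ blocks at controlled error. Calibrating the truncation scale $\tau$ against the per-block precision $\epsilon$ so that both error contributions are exactly $\delta/2$ -- a two-scale decomposition finer than the single-$\delta$ cover used in \citet{zhang2022deep} -- is the Dudley chaining refinement advertised in the introduction, and is what delivers the sharpened exponent $2/(L-2)$ on $\delta^{-1}$ instead of the larger exponent that a monolithic covering would give.
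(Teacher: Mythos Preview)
Your proposal is correct and follows essentially the same route as the paper: split blocks by a norm threshold $\tau$, bound the truncation cost via $\sum_{b_{n,m}\leq\tau}b_{n,m}^{L/2}\leq \tau^{L/2-1}B_{\rm res}$, solve for $\tau$ so this cost is $\delta/2$, and then cover only the at most $B_{\rm res}/\tau$ surviving ``large'' blocks with a standard product cover---the paper's proof is exactly this (with $\tau$ written as $\epsilon$). One small correction to your closing commentary: the Dudley chaining mentioned in the introduction is applied \emph{downstream}, when integrating $\sqrt{\log\gN}$ to bound the critical radius (cf.\ Corollary~\ref{cor:covdelta}), not inside this covering-number argument itself; the two-scale $(\tau,\epsilon)$ split you describe is the covering bound proper, not the chaining.
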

The key idea of the proof is to split the building block into two types (``small blocks'' and ``large blocks'') depending on whether the total norm of the weights in the building block is smaller than $\epsilon$ or not.
By properly choosing $\epsilon$, we prove that if all the ``small blocks'' in this neural network are removed, the perturbation to the output for any input $\|x\| \leq 1$ is no more than $\delta/2$, 
so the covering number of the ConvResNeXt is only determined by the number of ``large blocks'',
which is no more than $B_{\rm res}/\epsilon$.
\begin{proof} 
Using the inequality of arithmetic and geometric means, 
from \autoref{prop:lipvnn}, \autoref{prop:pert} and \autoref{prop:reslip}, if any residual block is removed, the perturbation to the output is no more than 
$$
(K B_m/L)^{L/2}B_{\rm out}^{1/2}\exp((K B_{\rm res}/L)^{L/2}),
$$
where $B_m$ is the total norm of parameters in this block.
Because of that, the residual blocks can be divided into two kinds depending on the norm of the weights $B_m < \epsilon$ (``small blocks'') and $B_m \geq \epsilon$ (``large blocks''). 
If all the ``small blocks''
are removed, the perturbation to the output for any input $\|\vect x\|_2 \leq 1$ is no more than 
\begin{align*}
    \exp((K B_{\rm res}/L)^{L/2})B_{\rm out}^{1/2} \sum_{m: B_m < \epsilon} (K B_m/L)^{L/2}
    & \leq \exp((K B_{\rm res}/L)^{L/2})B_{\rm out}^{1/2} \sum_{m: B_m < \epsilon} (K B_m/L) (K \epsilon/L)^{L/2-1}\\
    &\leq \exp((K B_{\rm res}/L)^{L/2})K^{L/2}B_{\rm res}B_{\rm out}^{1/2} (\epsilon/L)^{L/2-1} /L.
\end{align*}
Choosing 
$
\epsilon = L\left(\frac{\delta L}{2\exp((B_{\rm res}/L)^{L/2})K^{L/2}B_{\rm res}B_{\rm out}^{1/2}}\right)^{\frac{1}{L/2-1}},
$
the perturbation above is no more than $\delta/2$. 
The covering number can be determined by the number of the ``large blocks'' in the neural network, which is no more than $B_{\rm res}/\epsilon$. 

As for any block, $B_{\rm in} L_{\rm post} \leq B_{\rm out}^{1/2}\exp((K B_{\rm res}/L)^{L/2})$, taking our chosen $\epsilon$ finishes the proof, where $B_{\rm in}$ is the upper bound of the input to this block defined in \autoref{prop:vnncov}, and $L_{\rm post}$ is the Lipschitz constant of all the layers following the block.
\end{proof}

\begin{remark}
    The proof of \autoref{lemma:rescov} shows that under weight decay, the building blocks in a ConvResNeXt are sparse, 
    i.e. only a finite number of blocks contribute non-trivially to the network even though the model can be overparameterized.
    This explains why a ConvResNeXt can generalize well despite overparameterization, and provide a new perspective in explaining why residual connections improve the performance of deep neural networks.
\end{remark}
\section{Discussions}
\label{sec:dis}

This paper focuses on developing insightful generalization bounds for the regularized empirical risk minimizer. We opt not to delve into the end-to-end analysis of optimization algorithms in order to explore the adaptivity of complex architectures such as ConvResNeXts, while works on optimization behaviour of neural networks are limited to simple network structures \citep{nichani2024provable,wang2023learning}. Notably, this approach to decouple learning and optimization has been widely adopted \citep{barron1993universal,
hamers2006nonasymptotic,schmidt2020nonparametric,chen2022nonparametric,liu2021besov}. We made the same choice in the interest of getting a more fine-grained learning theory. However, our paper considers weight decay and overparameterization which are tightly connected to real-world training of neural networks, and can be the most promising work to bridge the gap between optimization and statistical guarantees.
We defer more details to the appendix, including discussions on the Besov space and numerical experiments for supporting our theories as well as supplementary technical proof.

\newpage

\bibliography{ref}
\bibliographystyle{ims}
\newpage
\appendix
\section{Why Besov Classes? }\label{sec:besov_why}

In this section, we discuss why we choose to consider the Besov class of functions and why this makes our results particularly interesting. 

To see this, we need to first define two smaller function classes: the Holder class and the Sobolev class. Instead of giving fully general definitions for these function classes let us illustrate their main differences using univariate functions defined on $[0,1]$.  We also introduce the so-called Total Variation class --- which is sandwiched in between $\mathrm{Besov}(p=1, q = 1)$ and $\mathrm{Besov}(p=1,q=\infty)$.

\begin{itemize}
    \item Holder class functions satisfy  $|f^{(\alpha)}(x)| < C$ for all $x$.
    \item Sobolev class functions satisfy $\int_{[0,1]} |f^{(\alpha)}(x)|^2  dx< C$
    \item Total Variation class functions satisfy $\int_{[0,1]} |f^{(\alpha)}(x)|  dx< C$
\end{itemize}
The L1-norm used in defining total variation class makes it the most flexible of the three.  It allows functions with $\alpha^{th}$ order derivative $f^{(\alpha)}(x)$ to be very large at some places, e.g., Dirac delta functions, while Holder and Sobolev class functions cannot contain such spikes (no longer integrable in Sobolev norm above).

Generically speaking under the appropriate scaling: $\textbf{Holder} \subset  \textbf{Sobolev}  \subset  \textbf{Besov}$. The Besov space contains functions with heterogeneous smoothness while Holder and Sobolev classes contain functions with homogeneous smoothness. Despite the Besov space being larger, it has the same minimax rate of $n^{-(2\alpha)/(2\alpha + d)}$ as the smaller Holder and Sobolev class.

\textbf{A new perspective on overparameterized NN.} 
We study the adaptivity of deep networks in overparameterized regimes. The most popular method for understanding overparameterization is through the neural tangent kernel (NTK) regime. However, based on the classical linear smoother lower-bound for estimating functions in Besov classes with $p=1$ \cite{donoho1990minimax,donoho1998minimax}, all kernel ridge regression including any NTKs will have a suboptimal rate lower bounded by $n^{-\frac{2\alpha - d}{2\alpha}}$.  To say it differently, there is a formal separation between NTKs and the optimal method. The same separation does not exist in smaller function classes such as Sobolev and Holders because they are more homogeneously smooth.

In summary, in order to study what neural networks can achieve that is not achievable by kernels, e.g., NTK; we had to define and approximate Besov class functions. Our results show that ConvResNeXT not only overcomes the curse of dimensionality of the ambient space, but also has nearly optimal dependence in the intrinsic dimension $d$ --- in contrast to the kernel-based approaches.

We believe this offers a new perspective to understand overparameterization and is more fine-grained that of NTK.

\section{Numerical Simulation}
\label{sec:simulation}
In this section, we validate our theoretical findings with numerical experiments. We focus on nonparametric regression problems for simplicity and consider the following function $f_0: \R^D\rightarrow \R$:
$$
f_0(x) =  \tilde{f}_0(Ux) =  \tilde{f}_0(\tilde{x})
$$
where $U\in \R^{D\times D}$ is a randomly-chosen rotation matrix and $\tilde{x} = Ux \in \R^D$  satisfies that for $t\in[0,1]$, the first three coordinates
\begin{align*}
    \tilde{x}_{1} = t \sin(4\pi t),~~~\tilde{x}_{2} = t \cos(4\pi t),~~~\tilde{x}_{3} = t(1-t),
\end{align*}
and the remaining coordinates of $\tilde{x}$ are irrelevant features iid sampled from a uniform distribution.  Note that the first three coordinates of $\tilde{x}$ are completely determined by a scalar $t$, and the corresponding label $y$ is determined by $t$ via a piecewise linear function, i.e., for a bag of $t_1,...,t_n \in [0,1]$, we can generate a labeled dataset by 
$y_i =  g_0(t_i) + \cN(0,1)$. An illustration of the function $f_0$ is given in Figure~\ref{fig:1d_manifold} where colors indicate the value. 
\begin{figure}[h]
    \centering
    \includegraphics[width=0.5\linewidth]{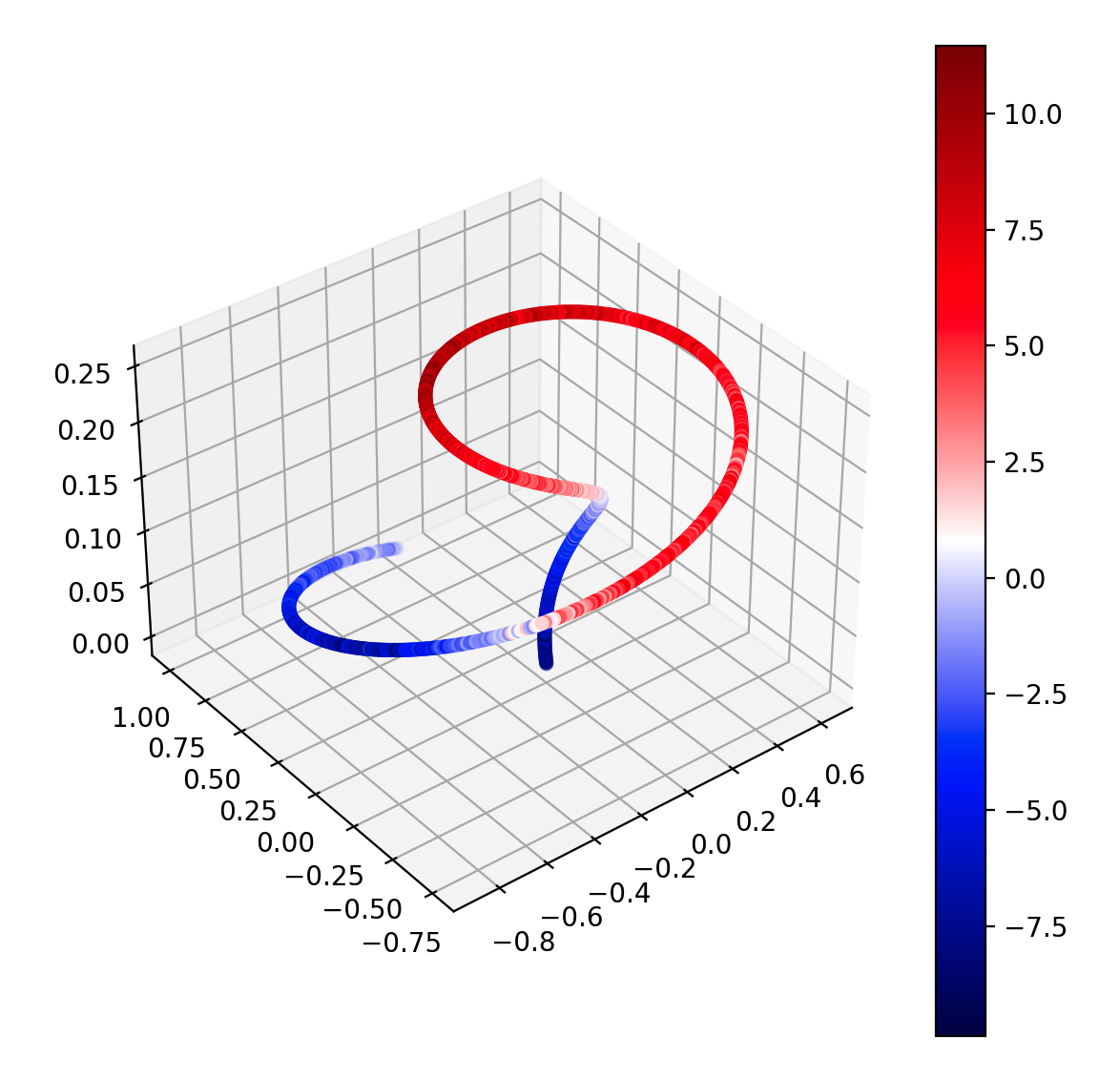}
    \caption{Illustration of a Besov function on $1$-dimensional manifold embedded in a $3$-dimensional ambient space.}
    \label{fig:1d_manifold}
\end{figure}

\textbf{Role of irrelevant features and rotation.} 
The purpose of irrelevant features and rotation is to make the problem harder and more interesting. 
\begin{align*}
    x_{i,1} = t_i \sin(4\pi t_i),~~~x_{i,2} = t_i \cos(4\pi t_i),~~~x_{i,3} = t_i(1-t_i),
\end{align*}
where $t_i, i=1,\ldots,n$ are evenly spaced over $[0,1]$. This process generates a $1$-dimensional manifold in $\R^3$ which does not intersect with itself, as shown in Figure \ref{fig:1d_manifold}.

\textbf{Baseline methods}  To estimate the underlying function on a manifold, we conducted experiments with ConvResNeXts (this paper),  
as well as a mix of popular off-the-shelf methods including kernel ridge regression, XGBoost, Decision tree, Lasso regression, and Gaussian Processes.

\textbf{Hyperparameter choices.} In all the experiments the following architecture was used for ConvResNeXt: $w = 8$, $L = 6$, $K = 6$, $M = 2$, $N = 2$. Batch\_size and learning\_rate were adjusted for each task.

For off-the-shelf methods, their hyperparameters are either tuned automatically or avoided using tools provided from the package, e.g., GP. For GP, a Matern kernel is used, and for ridge regression, the standard Gaussian RBF kernel is used.

 \begin{figure}[tbh]
    \centering
    \includegraphics[width = 0.6\textwidth]{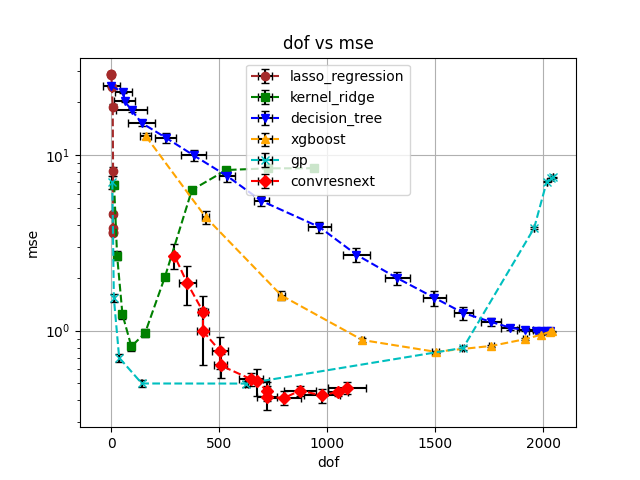}
    \caption{MSE as a function of the effective degree of freedom (dof) of different methods.}
    \label{fig:dof_vs_mse}
        \begin{minipage}{0.48\textwidth}
    \centering
    \includegraphics[width=\linewidth]{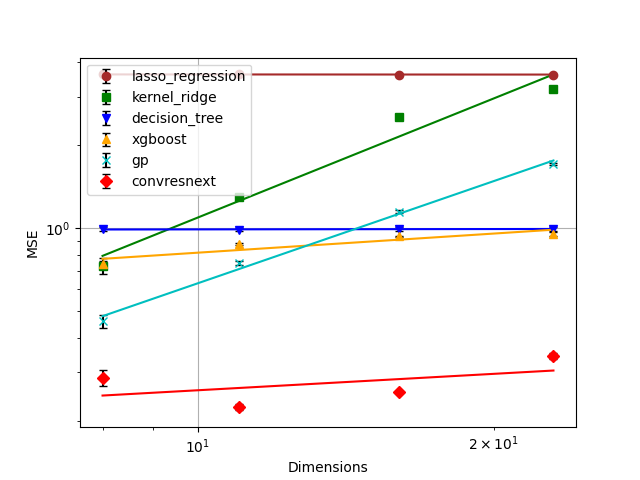}
    \caption{MSE as a function of dimension $D$.}
    \label{fig:dim_vs_mse}
    \end{minipage}
    \begin{minipage}{.48\textwidth}
    \centering
    \includegraphics[width=\linewidth]{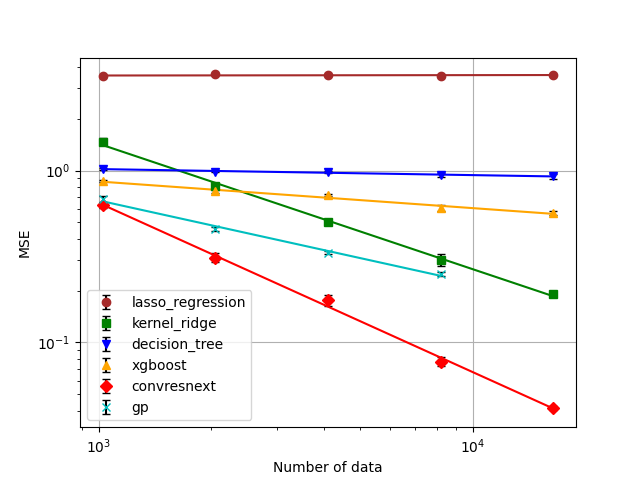}
    \caption{MSE as function of sample size $n$.}
    \label{fig:num_vs_mse}
    \end{minipage}%
\end{figure}

\textbf{Results.} Our results are reported in Figure~\ref{fig:dof_vs_mse},  \ref{fig:dim_vs_mse}, \ref{fig:num_vs_mse} which reports the \emph{mean square error} (MSE) as a function of the effective degree-of-freedom of each method,  ambient dimension $D$ and also the number of data points $n$ respectively.  

As we can see in Figure~\ref{fig:dof_vs_mse}, 
ConvResNeXt is able to achieve the lowest MSE at a relatively smaller degree of freedom. It outperforms the competing methods with notable margins despite using a simpler hypothesis.

Figure~\ref{fig:dim_vs_mse} illustrates that standard non-parametric methods such as kernel ridge regression and Gaussian processes deteriorate quickly as the ambient dimension gets bigger.  On the contrary, ConvResNeXts obtain results that are almost dimension-independent due to the representation learning that helps identify the low-dimensional manifold.  

Finally, the log-log plot in Figure~\ref{fig:num_vs_mse} demonstrates that there is a substantially different rate of convergence between our methods and kernel ridge regression and GPs, indicating the same formal separation that we have established in the theoretical part --- kernels must be suboptimal for estimating Besov classes while the neural architectures we considered can be locally adaptive and nearly optimal for Besov classes.

\section{Proof of the approximation theory}

\subsection{Decompose the target function into the sum of locally supported functions.}
\label{sec:besovbspline}
\begin{lemma}
    Approximating Besov function on a smooth manifold using B-spline:
    Let $f \in B_{p,q}^\alpha(\gM)$. 
    There exists a decomposition of $f$:
    \begin{eqal*}
        f(\vect x) = \sum_{i=1}^{C_\gM} \tilde f_i \circ \phi_i(\vect x) \times \mathbf{1}(\vect x \in B(\vect c_i, r)),
    \end{eqal*}
    and $\tilde f_i = f \cdot \rho_i \in B_{p,q}^\alpha$, $ \sum_{i=1}^{C_\gM} \|\tilde f_i\|_{B_{p,q}^\alpha} \leq C \|f\|_{B_{p,q}^\alpha(\gM)}$, $\phi_i: \gM \rightarrow \R^d$ are linear projections, $B(\vect c_i, r)$ denotes the unit ball with radius $r$ and center $\vect c_i$.
\end{lemma}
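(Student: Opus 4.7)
The plan is to assemble three ingredients: a finite open cover of $\gM$ by small Euclidean balls whose chart maps can be taken to be \emph{linear} projections onto tangent spaces, a smooth partition of unity subordinate to this cover (Proposition~\ref{prop:existpou}), and the intrinsic definition of $B_{p,q}^\alpha(\gM)$ from Definition~\ref{def.besovM}, which already expresses the global Besov norm as a sum of local Besov norms of chart-transferred pieces.

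First, I would exploit the compactness of $\gM$ together with its positive reach $\tau$ to build the cover and the linear charts. Fix a radius $r < \tau/2$; for each $\vect c \in \gM$, let $\phi_{\vect c}$ denote the orthogonal projection from $B(\vect c, r) \cap \gM$ onto the tangent space $T_{\vect c}\gM$, identified with $\R^d$ through a fixed orthonormal frame. Standard reach-based estimates for submanifolds \citep{federer1959curvature,niyogi2008finding} guarantee that this projection is a bi-Lipschitz $C^\infty$ diffeomorphism onto its image, with Jacobians of both $\phi_{\vect c}$ and its inverse bounded by constants that depend only on $d,D,\tau,r$. Compactness then extracts a finite subcover $\{B(\vect c_i, r)\}_{i=1}^{C_\gM}$, yielding the desired atlas $\{(U_i, \phi_i)\}_{i=1}^{C_\gM}$ with $U_i = B(\vect c_i, r) \cap \gM$ and linear $\phi_i$.

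Next, I would invoke Proposition~\ref{prop:existpou} to obtain a $C^\infty$ partition of unity $\{\rho_i\}_{i=1}^{C_\gM}$ with $\supp(\rho_i) \subset U_i$ and $\sum_i \rho_i \equiv 1$ on $\gM$. Defining $\tilde f_i$ as the extension by zero of $(f\rho_i) \circ \phi_i^{-1}$ from $\phi_i(U_i)$ to all of $\R^d$ (matching the convention of Definition~\ref{def.besovM}, where this extension lies in $B_{p,q}^\alpha(\R^d)$), the identity $f = \sum_i f\rho_i$ on $\gM$ pulls back through the charts to give
\begin{equation*}
f(\vect x) = \sum_{i=1}^{C_\gM} \tilde f_i \circ \phi_i(\vect x) \cdot \mathbf{1}(\vect x \in B(\vect c_i, r)),
\end{equation*}
in which the indicator is redundant because $\supp(\tilde f_i \circ \phi_i) \subset U_i \subset B(\vect c_i, r)$.

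For the norm bound $\sum_i \|\tilde f_i\|_{B_{p,q}^\alpha(\R^d)} \le C \|f\|_{B_{p,q}^\alpha(\gM)}$, the inequality is \emph{almost} tautological: by Definition~\ref{def.besovM}, the right-hand side is, up to a constant, already of the form $\sum_i \|(f\rho_i)\circ \psi_i^{-1}\|_{B_{p,q}^\alpha(\R^d)}$ for \emph{some} finite atlas $\{(V_i,\psi_i)\}$ used to define the manifold Besov norm. The only subtlety, which I expect to be the main technical hurdle, is that our linear-projection atlas $\{(U_i,\phi_i)\}$ need not coincide with the defining atlas, so I would have to show that a chart change $\phi_i \circ \psi_j^{-1}$, a smooth diffeomorphism with derivatives uniformly bounded in both directions (by compactness of $\gM$ and finiteness of both atlases), preserves membership in $B_{p,q}^\alpha(\R^d)$ with a controlled constant. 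This follows from standard invariance of Besov spaces under smooth bi-Lipschitz diffeomorphisms (Chapter~7 of \citep{tribel1992theory}), and the accumulated constant depends only on $\gM,\alpha,p,q$ and the finite geometric data of the atlases, which is absorbed into the single constant $C$.
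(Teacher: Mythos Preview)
Your proposal is correct and follows the same approach as the paper: invoke the partition of unity (Proposition~\ref{prop:existpou}) subordinate to a finite atlas and read off the decomposition and norm bound from Definition~\ref{def.besovM}. The paper's own proof is in fact a single sentence citing Proposition~\ref{prop:existpou}; you have supplied the details (linear tangent-space projections via the reach bound, chart-change invariance of Besov norms from \citep{tribel1992theory}) that the paper leaves implicit.
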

The lemma is inferred by the existence of the partition of unity, which is given in Proposition \ref{prop:existpou}.

\subsection{Locally approximate the decomposed functions using cardinal B-spline basis functions.}
\label{proof:prop:besovmanidecom}
\begin{proposition}
    \label{prop:besovmanidecom}
    For any function in the Besov space on a compact smooth manifold $f^* \in B_{p, q}^s(\gM) $, any $N \geq 0$, there exists an approximated to $f^*$ using cardinal B-spline basis functions:
    \begin{eqal*}
        \tilde f = \sum_{i=1}^{C_\gM} \sum_{j=1}^P a_{i,k_j, \vect s_j} M_{m, k_j, \vect s_j} \circ \phi_i \times  \mathbf{1}(\vect x \in B(\vect c_i, r)),
    \end{eqal*}
    where $m$ is the integer satisfying $0 < \alpha < min(m, m - 1 + 1/p)$, $M_{m, k, \vect s} = M_m (2^k (\cdot - \vect s)), M_m$ denotes the B-spline basis function defined in \autoref{eq:bspline}, 
    the approximation error is bounded by
    \begin{eqal*}
        \|f - \tilde f\|_\infty \leq \C{const:err} C_\gM P^{-\alpha/d}
    \end{eqal*}
    and the coefficients satisfy 
    \begin{eqal*}
        \|\{2^{k_j} a_{i,k_j, \vect s_j}\}_{i,j}\|_p \leq \C{const:pnorm}  \|f\|_{B_{p,q}^\alpha(\gM)} 
    \end{eqal*}
    for some constant $\Cr{const:err}, \Cr{const:pnorm}$ that only depends on $\alpha$. 
\end{proposition}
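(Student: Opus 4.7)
The plan is to bootstrap from the two preceding ingredients: the partition-of-unity decomposition of the previous lemma, and the B-spline decomposition of Besov functions on $\R^d$ given in equation \eqref{eq:besovdecom}. First I would write $f = \sum_{i=1}^{C_\gM} (f\rho_i) \circ \phi_i^{-1} \circ \phi_i \cdot \mathbf{1}(\vx \in B(\vc_i,r))$ and set $g_i := (f\rho_i) \circ \phi_i^{-1}$, extended by zero to all of $\R^d$. By Definition \ref{def.besovM}, each $g_i$ lies in $B_{p,q}^\alpha(\R^d)$ with $\sum_i \|g_i\|_{B_{p,q}^\alpha(\R^d)} \le \|f\|_{B_{p,q}^\alpha(\gM)}$ (up to a universal constant depending only on the atlas). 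So the problem reduces to constructing, for each $i$, a finite B-spline approximation of $g_i$ using roughly $P/C_\gM$ basis functions, with the claimed $L^\infty$ error rate and weighted $\ell_p$ coefficient bound.

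Next I would invoke the infinite B-spline expansion $g_i(\vx) = \sum_{k=0}^\infty \sum_{\vs \in J(k)} c_{k,\vs}(g_i) M_{m,k,\vs}(\vx)$. The classical characterization (see \citet{dung2011optimal}) gives the norm equivalence
\begin{equation*}
\|g_i\|_{B_{p,q}^\alpha(\R^d)} \asymp \Bigl( \sum_{k\ge 0} 2^{k q(\alpha - d/p)} \bigl(\textstyle\sum_{\vs \in J(k)} |c_{k,\vs}(g_i)|^p\bigr)^{q/p}\Bigr)^{1/q}.
\end{equation*}
From this equivalence I would extract a best $P'$-term approximation: keep the $P' = P/C_\gM$ largest-magnitude coefficients (measured in an appropriately reweighted sense, e.g., $\|M_{m,k,\vs}\|_\infty |c_{k,\vs}|$) and discard the rest. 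The tail, bounded via the standard Jackson-type inequality for Besov spaces (since $\alpha > d/p$ ensures embedding into $L^\infty$), decays as $(P')^{-\alpha/d}$, yielding the $L^\infty$ bound $\|g_i - \tilde g_i\|_\infty \lesssim \|g_i\|_{B_{p,q}^\alpha} (P')^{-\alpha/d}$. Summing over $i$ and composing with $\phi_i$ and the indicator gives the claimed approximation error.

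For the coefficient bound, I would exploit that $\|M_{m,k,\vs}\|_\infty$ is a dimensionless constant (the B-spline is $2^k$-scaled in argument but unit-height), while $\|M_{m,k,\vs}\|_p \asymp 2^{-kd/p}$. The condition $\alpha > d/p$ together with the norm equivalence above implies
\begin{equation*}
\sum_{k,\vs} 2^{k p (\alpha - d/p)} |c_{k,\vs}|^p \lesssim \|g_i\|_{B_{p,q}^\alpha}^p.
\end{equation*}
Since $\alpha - d/p > 1 - d/p$ but I only need $2^{kp}$ weighting in the target bound, I would use $2^{k_j} \le 2^{k_j(\alpha - d/p)}$ when $\alpha - d/p \ge 1$; in the general regime one reweights via a Hölder-type argument across scales $k$ and indices $\vs$, paying a constant that depends on $\alpha, d, p$ but not on $P$. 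Summing over the $C_\gM$ charts gives the $\ell_p$ bound on the aggregated sequence $\{2^{k_j} a_{i,k_j,\vs_j}\}$.

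The main obstacle I anticipate is the joint $P$-term truncation across the $C_\gM$ charts while simultaneously controlling \emph{both} the $L^\infty$ error and the reweighted $\ell_p$ coefficient norm: a naive per-chart truncation gives the right error order but may not minimize the $\ell_p$ functional with weight $2^{k_j}$. I would handle this by performing a joint greedy selection across $(i,k,\vs)$ using the reweighted magnitudes $2^{k/p}\|M_{m,k,\vs}\|_\infty |c_{k,\vs}(g_i)|$, which is the natural quantity appearing in the Besov-sequence norm above, and then converting the resulting weighted $\ell_p$ tail bound into both the uniform approximation bound (via the $L^\infty$ embedding, using $\alpha > d/p$) and the desired coefficient inequality in a single pass.
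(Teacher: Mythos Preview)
Your high-level strategy matches the paper's: decompose via the partition of unity into chart functions $g_i = (f\rho_i)\circ\phi_i^{-1} \in B_{p,q}^\alpha(\R^d)$, approximate each $g_i$ by a $P$-term B-spline sum, then reassemble. The paper's proof, however, is far shorter than what you outline because it treats the per-chart step as a black box: it simply invokes Proposition~\ref{prop:besovapp} (Proposition~7 of \citet{zhang2022deep}, itself extracted from \citet{dung2011optimal}), which already delivers \emph{both} the $L^\infty$ error bound $\|g_i-\bar g_i\|_\infty \lesssim P^{-\alpha/d}\|g_i\|_{B_{p,q}^\alpha}$ and the weighted coefficient bound $\|\{2^{k_j}a_{k_j,\vs_j}\}_j\|_p \lesssim \|g_i\|_{B_{p,q}^\alpha}$ in one stroke. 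You are essentially re-deriving that proposition from the sequence-norm characterization, which is fine but unnecessary here, and your sketch of the coefficient bound (reducing the $2^{k(\alpha-d/p)}$ weight to $2^k$ via $\alpha-d/p\ge 1$, then a ``H\"older-type argument'' for the mixed $\ell_p/\ell_q$ structure) is where the real work of that cited result lives and would need to be fleshed out carefully.

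Your anticipated ``main obstacle'' --- joint greedy selection across all charts to control both quantities simultaneously --- is a non-issue. The paper allocates $P$ terms to \emph{each} chart independently (so the total is $C_\gM P$, matching the double sum in the statement), sums the $C_\gM$ pointwise errors to get $C_\gM P^{-\alpha/d}$, and for the coefficients simply uses that the per-chart sequences have disjoint index sets, so $\|\{2^{k_j}a_{i,k_j,\vs_j}\}_{i,j}\|_p \le \sum_i \|\{2^{k_j}a_{i,k_j,\vs_j}\}_j\|_p \le \Cr{const:pnorm}\sum_i \|g_i\|_{B_{p,q}^\alpha} = \Cr{const:pnorm}\|f\|_{B_{p,q}^\alpha(\gM)}$ by Definition~\ref{def.besovM}. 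No joint optimization is needed. Also note the indexing: you split the budget as $P' = P/C_\gM$ per chart, but the statement gives each chart $P$ terms; this affects only constants but is worth aligning.
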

As will be shown below, the scaled coefficients $2^{k_j} a_{i,k_j, \vect s_j}$ corresponds to the total norm of the parameters in the neural network to approximate the B-spline basis function, so this lemma is the key to get the bound of norm of parameters in \autoref{eq:bbound}.
\begin{proof}
From the definition of $B_{p,q}^\alpha(\gM)$, and applying \autoref{prop:existpou}, there exists a decomposition of $f^*$ as 
\begin{eqal*}
    f^* &= \sum_{i=1}^{C_\gM} (f_i) 
    = \sum_{i=1}^{C_\gM} (f_i \circ \phi_i^{-1}) \circ \phi_i  \times \mathbf{1}_{U_i},
\end{eqal*}
where $f_i := f^* \cdot\rho_i$, $\rho_i$ satisfy the condition in \autoref{sec:pou}, and  $f_i \circ \phi_i^{-1} \in B_{p,q}^{\alpha}$.
Using \autoref{prop:besovapp}, for any $i$, one can approximate $f_i \circ \phi_i^{-1}$ with $\bar f_i$:
\begin{eqal*}
    \bar f_i = \sum_{j=1}^P a_{i,k_j, \vect s_j} M_{m, k_j, \vect s_j}
\end{eqal*}
such that $\|f_i \circ \phi_i^{-1}\|_\infty \leq C_1 M^{-\alpha/d}$, and the coefficients satisfy 
$$ 
\|\{2^{k_j} a_{k_j,\vect s_j}\}_{j}\|_p \leq \Cr{const:pnorm}\|f_i \circ \phi_i^{-1}\|_{B^{\alpha}_{p, q}}.
$$
Define 
$$
    \bar f = \sum_{i=1}^{C_\gM} \bar f_i \circ \phi_i \times \mathbf{1}_{U_i}.
$$
one can verify that 
$
    \|f - \tilde f\|_\infty \leq \Cr{const:err} C_\gM N^{-\alpha/d}.
$
On the other hand, using triangular inequality (and padding the vectors with 0), 
\begin{eqal*}
    \|\{2^{k_j} a_{i,k_j, \vect s_j}\}_{i,j}\|_p 
    & \leq  \sum_{i=1}^{C_\gM} \|\{2^{k_j} a_{i,k_j, \vect s_j}\}_{j}\|_p 
    \leq \sum_{i=1}^{C_\gM} \Cr{const:pnorm} \|f_i \circ \phi_i^{-1}\|_{B_{p,q}^\alpha} 
    = \Cr{const:pnorm} \|f^*\|_{B_{p,q}^\alpha(\gM)},
\end{eqal*}
which finishes the proof.

\end{proof}

\subsection{Neural network for chart selection}
\label{sec:chart}
In this section, we demonstrate that a feedforward neural network can approximate the chart selection function $z \times \mathbf{1}(\vect x \in B(\vect c_i, r))$, and it is error-free as long as $z=0$ when $r < d(\vect x, \vect c_i) < R$.
We start by proving the following supporting lemma:
\begin{proposition}
    \label{prop:nnradius}
    Fix some constant $B>0$. For any $\vect x, \vect c \in \mathrm{R}^D$ satisfying $|x_i| \leq B$ and $|c_i|\leq B$ for $i=1,\ldots,D$, there exists an $L$-layer neural network $\tilde d(\vect x; \vect c)$ with width $w=O(d)$ that approximates $d^2(\vect x; \vect c) = \sum_{i=1}^D (x_i - c_i)^2$
    such that $|\tilde d^2(\vect x; \vect c) - d^2(\vect x; \vect c)| \leq 8 D B^2\exp(-\C{const:cserr}L) $ with an absolute constant $\Cr{const:cserr}>0$ when $d(\vect x; \vect c) < \tau$, and $\tilde d^2(\vect x; \vect c) \geq \tau^2$ when $ d(\vect x; \vect c) \geq \tau$,
    and the norm of the neural network is bounded by 
    \begin{eqal*}
        \sum_{\ell=1}^L \|W_\ell\|_{\mathrm{F}}^2 + \|b_\ell\|_2^2 \leq \C{const:cs}DL.
    \end{eqal*}
\end{proposition}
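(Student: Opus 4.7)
The plan is to build $\tilde d^2(\vect x;\vect c)$ as the sum of $D$ parallel one-dimensional subnetworks, each approximating a single squared coordinate-difference $(x_i-c_i)^2$, and then to shift the sum by a small constant bias so that the ``outside the ball'' lower bound follows automatically from the approximation error. The key ingredient is a Yarotsky-type ReLU approximation of the univariate square function: the map $z\mapsto z^2$ on $[-2B,2B]$ can be realized by a ReLU network $\hat q$ of depth $L$ and constant width with error bounded by $|\hat q(z)-z^2|\leq C' B^2 \exp(-c' L)$ for absolute constants $C',c'>0$. This rests on the sawtooth identity $z^2 = z - \sum_{k=1}^{L} 4^{-k} g_k(z)$ (after rescaling the input to $[0,1]$), where each iterate $g_k$ of the tent function is built from a constant number of ReLU units with weights of constant magnitude, yielding per-layer squared Frobenius norm of $O(1)$.

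Once that univariate block is in hand, the full network assembles easily. First I would use a linear layer to form all $D$ differences $x_i-c_i$ in parallel, contributing $O(D)$ to the total squared Frobenius norm; then run $D$ independent copies of $\hat q$ in parallel, so the overall width is $O(D)$ and the depth is $L$; the final linear layer sums these outputs and adds the bias $D\epsilon'$ where $\epsilon' := C' B^2 \exp(-c' L)$. Setting $\tilde d^2(\vect x;\vect c) := \sum_{i=1}^D \hat q(x_i-c_i) + D\epsilon'$ and writing $\hat d^2$ for the unbiased sum, the triangle inequality gives $|\hat d^2-d^2|\leq D\epsilon'$. Inside the ball, $|\tilde d^2-d^2|\leq 2D\epsilon'\leq 8DB^2\exp(-\Cr{const:cserr} L)$ for a suitable absolute constant $\Cr{const:cserr}$; outside the ball, $\tilde d^2 = \hat d^2 + D\epsilon' \geq d^2 \geq \tau^2$, delivering both claims at once. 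Summing squared Frobenius norms across layers yields $O(D) + D\cdot O(L) + O(D) = O(DL)$, matching the stated bound $\Cr{const:cs}DL$.

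The main obstacle I anticipate is producing $\hat q$ so that it simultaneously achieves depth $L$, constant width, exponentially decaying error in $L$, and per-layer squared Frobenius norm bounded by an absolute constant on the entire range $[-2B,2B]$. Yarotsky's original construction targets $[0,1]$, and naively rescaling by $1/(2B)$ at the input and multiplying by $4B^2$ at the output threatens to inflate the norm of a single layer by a factor that depends on $B$. The fix is to distribute the overall scaling uniformly across the $L$ layers --- for instance, by multiplying each layer's weights by a factor of order $(4B^2)^{1/L}$, which tends to $1$ as $L$ grows --- so that every layer's Frobenius norm stays bounded by an absolute constant while the composed input-to-output scaling is preserved. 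With $B$ treated as a fixed constant in the statement, this absorbs cleanly into the $O(DL)$ norm budget and completes the argument.
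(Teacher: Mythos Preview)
Your proof is correct and follows the same overall plan as the paper: approximate each $(x_i-c_i)^2$ by a Yarotsky-type depth-$L$ ReLU subnetwork of constant width, run $D$ copies in parallel, and sum. The only substantive difference is in the final correction step. The paper first reduces to nonnegative inputs via $h(t)=\relu(t)-\relu(-t)$ and applies the square approximant on $[0,2B]$; after summing to obtain $G$, it appends a single ReLU layer $F=-\relu(\tau^2-D\epsilon-G)+\tau^2$, so that $F$ equals $\tau^2$ \emph{exactly} when $d\geq\tau$. You instead shift the raw sum upward by the uniform error bound $D\epsilon'$, which already forces $\tilde d^2\geq d^2\geq\tau^2$ outside the ball. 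Both devices deliver the proposition as stated; the paper's cap gives the slightly stronger conclusion $\tilde d^2=\tau^2$ outside the ball and keeps the output uniformly bounded (handy for the downstream chart-selection layer), while your bias shift is simpler and spares the extra ReLU layer. Your attention to distributing the $B$-dependent rescaling across layers to keep per-layer Frobenius norms $O(1)$ is more careful than the paper's proof on this point.
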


\begin{proof}
The proof is given by construction. By Proposition 2 in Yarotsky(2017), the function $f(x) = x^2$ on the segment $[0,2B]$ can be approximated with any error $\epsilon > 0$ by a ReLU network $g$ having depth and the number of neurons and weight parameters no more than $c\log(4 B^2/\epsilon)$ with an absolute constant $c$.
The width of the network $g$ is an absolute constant. We also consider a single layer ReLU neural network $h(t) = \relu(t) - \relu(-t)$, which is equal to the absolute value of the input.

Now we consider a neural network $G(\vect x; \vect c) = \sum_{i=1}^D g \circ h (x_i-c_i) $. Then for any $\vect x, \vect c \in \mathrm{R}^D$  satisfying $|x_i| \leq B$ and $|c_i|\leq B$ for $i=1,\ldots,D$, we have
\begin{align*}
    |G(\vect x; \vect c) - d^2(\vect x; \vect c)| &\leq \left| \sum_{i=1}^D g \circ h (x_i-c_i) - \sum_{i=1}^D (x_i - c_i)^2  \right| \\
    &\leq \sum_{i=1}^D \left|  g \circ h (x_i-c_i) - (x_i - c_i)^2  \right| \\
    &\leq D\epsilon.
\end{align*}

Moreover, define another neural network 
\begin{align*}
    F(\vect x; \vect c) &= -\relu(\tau^2-D\epsilon-G(\vect x; \vect c))+\tau^2 \\
    &= \begin{cases}
  G(\vect x; \vect c)+ D\epsilon & \text{if } G(\vect x; \vect c) < \tau^2-D\epsilon,\\
  \tau^2    & \text{if }  G(\vect x; \vect c) \geq \tau^2-D\epsilon,
\end{cases} 
\end{align*}
which has depth and the number of neurons no more than $c'\log(4 B^2/\epsilon)$ with an absolute constant $c'$. The weight parameters of $G$ are upper bounded by $\max\{\tau^2, D\epsilon, c\log(4 B^2/\epsilon)\}$ and the width of $G$ is $O(D)$. 

If $ d^2(\vect x; \vect c) < \tau^2$, we have 
 \begin{align*}
     |F(\vect x; \vect c) - d^2(\vect x; \vect c)| & = | -\relu(\tau^2-D\epsilon-G(\vect x; \vect c))+\tau^2 - d^2(\vect x; \vect c)|\\
     &= \begin{cases}
    |G(\vect x; \vect c)- d^2(\vect x; \vect c) + D\epsilon| & \text{if } G(\vect x; \vect c) < \tau^2-D\epsilon,\\
    \tau^2- d^2(\vect x; \vect c)    & \text{if }  G(\vect x; \vect c) \geq \tau^2-D\epsilon.
\end{cases}  
 \end{align*}
For the first case when $G(\vect x; \vect c) < \tau^2-D\epsilon$, $|F(\vect x; \vect c) - d^2(\vect x; \vect c)| \leq 2 D\epsilon$ since $d^2(\vect x; \vect c)$ can be approximated by $G(\vect x; \vect c)$ up to an error $\epsilon$. For the second case when $G(\vect x; \vect c) \geq \tau^2-D\epsilon$, we have $d^2(\vect x; \vect c) \geq G(\vect x; \vect c) - D\epsilon \geq \tau^2- 2D\epsilon$ and . Thereby we also have $|F(\vect x; \vect c) - d^2(\vect x; \vect c)| \leq 2 D\epsilon$. 

If $ d^2(\vect x; \vect c) \geq \tau^2$ instead, we will obtain $G(\vect x; \vect c) \geq d^2(\vect x; \vect c)-D\epsilon \geq \tau^2 -D\epsilon $. This gives that $F(\vect x; \vect c) = \tau^2$ in this case. 

Finally, we take $\epsilon=4B^2\exp(-L/c')$. Then $F(\vect x; \vect c)$ is an $L$-layer neural network with $O(L)$ neurons. The weight parameters of $G$ are upper bounded by $\max\{\tau^2, 4D B^2\exp(-L/c'), c L/c'\}$ and the width of $G$ is $O(D)$. Moreover, $F(\vect x; \vect c)$ satisfies $|F(\vect x; \vect c) - d^2(\vect x; \vect c)| < 8 D B^2\exp(-L/c') $ if $ d^2(\vect x; \vect c) \leq \tau^2$ and $F(\vect x; \vect c) = \tau^2$ if $ d^2(\vect x; \vect c) \geq \tau^2$.
\end{proof}

\begin{proposition}
    \label{prop:nnx}
    There exists a  single layer ReLU neural network that approximates $\tilde \times$, such that for all $0 \leq x \leq C, y \in \{0,1\} $, $x \tilde \times y=x$ when $y=1$, and $x \tilde \times y=0$ when either $x=0$ or $y=0$.
\end{proposition}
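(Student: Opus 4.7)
The plan is to give an explicit one‑neuron construction and verify it by case analysis on $y \in \{0,1\}$. Define the single‑hidden‑unit ReLU network
\begin{equation*}
    x \,\tilde{\times}\, y \;:=\; \relu\bigl(x + C\,y - C\bigr).
\end{equation*}
This is an affine map $(x,y)\mapsto x+Cy-C$ followed by a single ReLU, so it is indeed a single‑layer ReLU network with weight vector $(1,C)$ and bias $-C$.

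Next I would verify the three required properties. When $y=1$, the argument becomes $x+C-C=x$, and since $x\in[0,C]\subset[0,\infty)$ we get $\relu(x)=x$, establishing $x\,\tilde{\times}\,1=x$. When $y=0$, the argument becomes $x-C$, and since $x\le C$ we have $\relu(x-C)=0$, establishing $x\,\tilde{\times}\,0=0$. Finally, when $x=0$, the two previous cases already cover $y\in\{0,1\}$: $0\,\tilde{\times}\,1=\relu(0)=0$ and $0\,\tilde{\times}\,0=\relu(-C)=0$. This gives all three bullet points in the statement exactly, not merely approximately.

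The construction is intentionally minimal: a single affine preactivation plus one ReLU, with weight magnitudes bounded by $C$. There is no real obstacle here beyond noticing that the clipping behaviour of $\relu(\,\cdot + C(y-1))$ exactly implements multiplication by a binary indicator when the other factor lies in $[0,C]$. The key restriction used is that $y$ is binary and that $x$ is confined to an interval on which the shift $-C$ makes the argument nonpositive in the $y=0$ case; without the binary assumption on $y$ a single ReLU would not suffice and one would need the multi‑layer square‑and‑subtract construction used elsewhere in the paper.
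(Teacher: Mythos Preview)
Your construction is correct and verifies all three required properties. It differs from the paper's proof in a small but notable way. The paper builds a two-neuron network \emph{without bias},
\[
g(x,y) = -C\,\relu(-x/C + y) + C\,\relu(y),
\]
whereas you use a single neuron \emph{with bias}, $\relu(x + Cy - C)$. Both are legitimate single-layer ReLU networks; yours is strictly more economical in width. The paper's choice of a bias-free construction aligns with its convention of absorbing biases into weights via the padded constant-$1$ input (Proposition~\ref{prop:rmbias}), so the two-neuron form slots directly into that framework without an extra conversion step. Your version would need that padding trick applied to recover the no-bias form, at which point the bias $-C$ becomes a weight on the constant channel and the constructions are essentially equivalent in cost. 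Either way the argument is complete; your route is the cleaner standalone proof, while the paper's is tailored to its downstream bias-free assembly.
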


\begin{proof}
    Consider a single layer neural network $g(\vect x,y) := A_2 \relu(A_1 (\vect x,y)^\top )$ with no bias, where
    \begin{equation*}
        A_1 = \begin{bmatrix}
            -\frac{1}{C} & 1 \\
            0 & 1 
        \end{bmatrix},\quad
        A_2 = \begin{bmatrix}
            -C \\
            C
        \end{bmatrix}.
    \end{equation*}
    Then we can rewrite the neural network $g$ as 
    $g(x,y) = -C \relu( -x/C+y) + C\relu(y)$. If $y=1$, we will have $g(x,y) = -C\relu(-x/C+1) +C = x$, since $x\leq C$. If $y=0$, we will have $g(x,y) = -C\relu(-x/C)  = 0$, since $x\geq 0$. Thereby we can conclude the proof.
\end{proof}

By adding a single linear layer 
\begin{eqal*}
    y = \frac{1}{R-r-2\Delta}(\relu(R-\Delta-x) - \relu(r+\Delta-x))
\end{eqal*}
after the one shown in \autoref{prop:nnradius}, where $\Delta = 8 D B^2\exp(-CL) $ denotes the error in \autoref{prop:nnradius}, one can approximate the indicator function $\mathbf{1}(\vect x \in B(\vect c_i, r))$ such that it is error-free when $d(\vect x, \vect c_i) \leq r$ or $\geq R$. 
Choosing $R\leq \tau/2, r < R - 2\Delta$, and combining with \autoref{prop:nnx}, the proof is finished.
Considering that $f_i$ is locally supported on $B(\vect c_i, r)$ for all $i$ by our method of construction,
the chart selection part does not incur any error in the output.

\subsection{Constructing the neural network to Approximate the target function}
\label{proof:thm:app}
In this section, we  focus on the neural network with the same architecture as a ResNeXt in \autoref{def:resnext} but replacing each building block with a feedforward neural network, and prove that it can achieve the same approximation error as in \autoref{thm:appcov}. 
For technical simplicity, we assume that the target function $f^* \in [0,1]$ without loss of generality. Then our analysis automatically holds for any bounded function. 

\begin{theorem}
    \label{thm:app}
    For any $f^*$ under the same condition as \autoref{thm:appcov},
    any neural network architecture with residual connections containing $N$ number of residual blocks and each residual block contains $M$ number of feedforward neural networks in parallel, where the depth of each feedforward neural networks is $L$, width is $w$:
    \begin{eqal*}
        f &= \mat W_{\rm out}\cdot
        \left(1+\sum_{m=1}^M f_{N,m}\right)  
        \circ \dots
        \circ \left(1+\sum_{m=1}^M f_{1,m}\right)
        \\
        f_{n, m} &= \mat W_L^{(n,m)} \relu(\mat W_{L-1}^{(n, m)}  \dots\relu(\mat W_1^{(n,m)}  \vect x)) \circ P(\vect x), 
    \end{eqal*} 
    where $P(\vect x) = [\vect x^T, 1, 0]^T$ is the padding operation,
    satisfying 
    \begin{eqal}
        \label{eq:bbound}
        MN &\geq C_\gM P,\quad w \geq \Cr{const:wid}(dm+D), \\
        B_{\rm res} & := \sum_{n=1}^N \sum_{m=1}^M \sum_{\ell=1}^L\|\mat W^{(n,m)}_\ell\|_{\mathrm{F}}^2 \leq \Cr{const:bboundc1} L,\\
        B_{\rm out} &:=\|\mat W_{\rm out}\|_{\mathrm{F}}^2 \leq \Cr{const:bboundc2} C_{\mathrm{F}}^2((dm+D)L)^L(C_\gM P)^{L-2/p},
    \end{eqal}
    there exists an instance $f$ of this ResNeXt class, such that
    \begin{eqal}
        \label{eq:apperr}
        \|f - f^*\|_\infty \leq C_{\mathrm{F}} C_\gM\left(\Cr{const:errc1}  P^{-\alpha/d}  + \Cr{const:errc2}  \exp(-\Cr{const:errc3}L \log P)\right),
    \end{eqal} 
    where $\Cr{const:wid}, \Cr{const:bboundc1}, \Cr{const:bboundc2}, \Cr{const:errc1}, \Cr{const:errc2}, \Cr{const:errc3}$ are the same constants as in \autoref{thm:appcov}. 
\end{theorem}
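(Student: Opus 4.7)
The plan is to construct the residual network explicitly atom-by-atom from the B-spline decomposition of \autoref{prop:besovmanidecom}, then exploit the positive $1$-homogeneity of the ReLU to push all large scaling factors into the final readout $\mat W_{\rm out}$, leaving each residual block with small Frobenius weight norm. By \autoref{prop:besovmanidecom} there is
\[
\tilde f(\vect x)=\sum_{i=1}^{C_\gM}\sum_{j=1}^{P}a_{i,k_j,\vect s_j}\,M_{m,k_j,\vect s_j}\!\circ\phi_i(\vect x)\cdot\mathbf 1(\vect x\in B(\vect c_i,r)),
\]
with $\|f^*-\tilde f\|_\infty\lesssim C_\gM P^{-\alpha/d}C_{\mathrm F}$ and the $\ell^p$ bound $\|\{2^{k_j}a_{i,k_j,\vect s_j}\}\|_p\le \Cr{const:pnorm}C_{\mathrm F}$. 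Reindexing the $C_\gM P$ atoms by $k$ and using the hypothesis $MN\ge C_\gM P$, I assign exactly one residual building block to each atom and set the remaining blocks to zero networks.

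For an atom $k=(i,j)$ I build a feedforward block as three parallel subnetworks of combined width at most $\Cr{const:wid}(dm+D)$. A width-$O(D)$, depth-$(L-1)$ subnet computes a soft chart indicator $\tilde{\mathbf 1}(\vect x\in B(\vect c_i,r))$ via \autoref{prop:nnradius}, with the collar width $\Delta$ chosen small enough that the indicator is exactly $0$ or $1$ on the support of the atom (so chart selection introduces no error). A width-$O(md)$, depth-$(L-1)$ subnet produces an approximant $\tilde M_{m,k_j,\vect s_j}\!\circ\phi_i(\vect x)$ by the standard square-and-multiply construction, with error decaying exponentially in $L$. A final layer realizes the operator $\tilde\times$ of \autoref{prop:nnx} to multiply the two scalar outputs. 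Layer-wise balancing of the Frobenius norms gives the ``raw'' block bound $\sum_\ell\|\mat W_\ell^{\rm raw}\|_{\mathrm F}^2\lesssim 2^{2k_j/L}dmL+DL$, which is exactly \eqref{eq:normblock}.

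The technical heart of the argument is the norm accounting via $1$-homogeneity. Fix a common scale $s>0$; using ReLU $1$-homogeneity I rescale each of the $L$ layers of block $k$ by $(|a_{i,k_j,\vect s_j}|/s)^{1/L}$ so that the block outputs $(a_{i,k_j,\vect s_j}/s)\,M_{m,k_j,\vect s_j}\!\circ\phi_i\cdot\mathbf 1$ (handling the sign via one linear layer), and AM--GM gives $\sum_\ell\|\mat W_\ell^{(k)}\|_{\mathrm F}^2=(|a_{i,k_j,\vect s_j}|/s)^{2/L}A_k$ with $A_k\lesssim 2^{2k_j/L}dmL+DL$. The residual stream then accumulates $\tilde f/s$ in a dedicated channel, and the scalar readout $\mat W_{\rm out}=s$ recovers $\tilde f$. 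Summing over blocks, applying Hölder's inequality with exponent pair $(pL/2,(pL/2)')$ to $\sum_k(|a_{i,k_j,\vect s_j}|\,2^{k_j})^{2/L}$, and using the $\ell^p$ bound on the coefficients together with the cardinality $C_\gM P$, one obtains
\[
B_{\rm res}\lesssim s^{-2/L}(dm+D)L\,C_{\mathrm F}^{2/L}(C_\gM P)^{1-2/(pL)}.
\]
Choosing $s^2\asymp C_{\mathrm F}^{\,2}((dm+D)L)^L(C_\gM P)^{L-2/p}$ simultaneously forces $B_{\rm res}\le \Cr{const:bboundc1}L$ and $B_{\rm out}=s^2\le \Cr{const:bboundc2}C_{\mathrm F}^{\,2}((dm+D)L)^L(C_\gM P)^{L-2/p}$, matching \eqref{eq:bbound}.

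Finally, the bound \eqref{eq:apperr} follows by combining the B-spline truncation error $\Cr{const:err}C_\gM P^{-\alpha/d}C_{\mathrm F}$ from \autoref{prop:besovmanidecom} with the atom-wise network error, which is exponentially small in $L$ by the square-and-multiply construction and \autoref{prop:nnradius}; after multiplication by the readout $s$ and summation over the $C_\gM P$ atoms, the per-atom factor is absorbed into the $\exp(-\Cr{const:errc3}L\log P)$ term. The main obstacle I anticipate is precisely this scaling step: getting the Hölder argument to deliver exactly the exponent $L-2/p$ in $B_{\rm out}$ while keeping $B_{\rm res}=O(L)$, and verifying that the exponential-in-$L$ per-atom error, once multiplied by $s$, is not amplified beyond the stated rate. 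The rest is a careful but routine combination of \autoref{prop:besovmanidecom}, the ReLU approximation of polynomials, and residual-network bookkeeping; the conversion to ConvResNeXts in \autoref{thm:appcov} is then handled by the CNN-from-FNN reduction already stated in the paper.
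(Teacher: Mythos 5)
Your proposal is correct and follows essentially the same route as the paper's proof: the same atom-by-atom block construction from Proposition \ref{prop:besovmanidecom} combined with Lemma \ref{lemma:nnbspline}, Proposition \ref{prop:nnradius} and Proposition \ref{prop:nnx}, the same $1$-homogeneity rescaling that trades $B_{\rm res}$ against $B_{\rm out}$, and the same H\"older-type step (the paper packages it as Lemma \ref{lemma:pnorm2}) yielding the exponent $1-2/(pL)$ and hence $L-2/p$ after raising to the $L$-th power. Your common scale $s$ is exactly the paper's $\bar B_{\rm res}^{L/2}$, so the difference is purely notational.
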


\begin{proof}
We first construct a parallel neural network to approximate the target function, then scale the weights to meet the norm constraint while keeping the model equivalent to the one constructed in the first step, and finally transform this parallel neural network into the ConvResNeXt as claimed.

Combining \autoref{lemma:nnbspline}, \autoref{prop:nnradius} and \autoref{prop:nnx}, by putting the neural network in \autoref{lemma:nnbspline} and \autoref{prop:nnradius} in parallel and adding the one in \autoref{prop:nnx} after them, one can construct a feedforward neural network with bias with depth $L$, width $w = O(d) + O(D) = O(d)$, that approximates $M_{m, k_j, \vect s_j}(\vect x) \times \mathbf{1}(\vect x \in B(\vect c_i, r))$ for any $i, j$.

To construct the neural network with residual connections that approximates $f^*$, we follow the method in \citet{oono2019approximation,liu2021besov}. 
This network uses separate channels for the inputs and outputs.
Let the input to one residual layer be $[\vect x_1, y_1]$, the output is $[\vect x_1, y_1 + f(x_1)]$. As a result, if one scale the outputs of all the building blocks by any scalar $a$, then the last channel of the output of the entire network is also scaled by $a$. This property allows us to scale the weights in each building block while keeping the model equivalent.
To compensate for the bias term, \autoref{prop:rmbias} can be applied. This only increases the total norm of each building block by no larger than a constant term that depends only $L$, which is no more than a factor of constant.

Let the neural network constructed above has parameter $\tilde{\mat W}_1^{(i,j)}, \tilde{\vect b}_1^{(i,j)}, \dots, \tilde{\mat W}_L^{(i,j)}, {\vect b}_L^{(i,j)} $ in each layer, one can construct a building block without bias as 
\begin{eqal*}
    \mat W_1^{(i,j)} &= \begin{bmatrix}
        \tilde{\mat W}_1^{(i,j)}& \tilde {\vect b}_1^{(i,j)} & 0\\
        0 & 1 & 0
    \end{bmatrix}, &
    \mat W_\ell^{(i,j)} &= \begin{bmatrix}
        \tilde {\mat W}_\ell^{(i,j)} & \tilde {\vect b}_\ell^{(i,j)} \\
        0 & 1 
    \end{bmatrix} &
    \mat W_L^{(i,j)} &= \begin{bmatrix}
        0 & 0\\
        0 & 0\\
        \tilde {\mat W}_L^{(i,j)} &  \tilde {\vect b}_L^{(i,j)} \\
    \end{bmatrix}.
\end{eqal*}
Remind that the input is padded with the scalar 1 before feeding into the neural network, the above construction provide an equivalent representation to the neural network including the bias, and route the output to the last channel. 
From \autoref{lemma:nnbspline}, it can be seen that the total square norm of this block is bounded by \autoref{eq:normblock}.

Finally, we scale the weights in the each block, including the ``1'' terms to meet the norm constraint.
Thanks to the 1-homogeneous property of ReLU layer, and considering that the network we construct use separate channels for the inputs and outputs, the model is equivalent after scaling.
Actually the property above allows the tradeoff between $B_{\text{res}}$ and $B_{\text{out}}$. If all the weights in the residual blocks are scaled by an arbitrary positive constant $c$, and the weight in the last layer $\mat W_{\text{out}}$ is scaled by $c^{-L}$, the model is still equivalent.
We only need to scale the all the weights in this block with $|a_{i, k_j, \vect s_j}|^{1/L}$, setting the sign of the weight in the last layer as $\sign(a_{i, k_j, \vect s_j})$, and place $C_\gM P$ number of these building blocks in this neural network with residual connections.
Since this block always output 0 in the first $D+1$ channels, the order and the placement of the building blocks does not change the output.
The last fully connected layer can be simply set to 
\begin{eqal*}
    \mat W_{\rm out} = [0, \dots, 0, 1],
    b_{\rm out} = 0.
\end{eqal*}

Combining \autoref{prop:besovapp} and \autoref{lemma:pnorm2}, the norm of this ResNeXt we construct satisfy 
\begin{eqal*}
    \bar B_{\rm res} &\leq 
    \sum_{i=1}^{C_\gM} \sum_{j=1}^P a_{i,k_j, \vect s_j}^{2/L} (2^{2k/L}\Cr{const:normbspline}dmL + \Cr{const:cs}DL)\\
    & \leq \sum_{i=1}^{C_\gM} \sum_{j=1}^P (2^k a_{i,k_j, \vect s_j})^{2/L} (\Cr{const:normbspline}dmL + \Cr{const:cs}DL)\\
    & \leq (C_\gM P)^{1-2/(pL)}\|\{2^k a_{i,k_j, \vect s_j}\}\|_p^{2/L} (\Cr{const:normbspline}dmL + \Cr{const:cs}DL)\\
    & \leq (\Cr{const:pnorm}C_{\mathrm{F}})^{2/L} (C_\gM P)^{1-2/(pL)} (\Cr{const:normbspline}dmL + \Cr{const:cs}DL),\\
    \bar B_{\rm out} &\leq 1.
\end{eqal*}
By scaling all the weights in the residual blocks by $\bar B_{\rm res}^{-1/2}$, and scaling the output layer by $\bar B_{\rm res}^{L/2}$, the network that satisfy \autoref{eq:bbound} can be constructed.
\end{proof}

Notice that the chart selection part does not introduce error by our way of construction, we only need to sum over the error in \autoref{sec:besovmanidecom} and \autoref{sec:nnbspline}, and notice that for any $\vect x$, for any linear projection $\phi_i$, the number of B-spline basis functions $M_{m, k, \vect s}$ that is nonzero on $\vect x$ is no more than $m^d \log P$, the approximation error of the constructed neural network can be proved.

\subsection{Constructing a convolution neural network to approximate the target function}
\label{sec:cnnfnn}
In this section, we prove that any feedforward neural network can be realized by a convolution neural network with similar size and norm of parameters. 
The proof is similar to Theorem 5 in \citep{oono2019approximation}.
\begin{lemma}
    \label{lemma:cnnfnn}
    For any feedforward neural network with depth $L'$, width $w'$, input dimension $h$ and output dimension $h'$, for any kernel size $K > 1$, there exists a convolution neural network with depth $L = L' + L_0 - 1$, where $L_0 = \lceil \frac{h-1}{K-1}\rceil$ number of channels $w=4w'$, and the first dimension of the output equals the output of the feedforward neural network for all inputs, and the norm of the convolution neural network is bounded as 
    \begin{eqal*}
        \sum_{\ell=1}^{L} \|\mat W_\ell\|_{\mathrm{F}}^2 \leq 4\sum_{\ell=1}^{L'} \|\mat W'_\ell\|_{\mathrm{F}}^2 + 4w'L_0,
    \end{eqal*}
    where $\mat W'_1 \in \R^{w' \times h'}; \mat W'_\ell \in \R^{w' \times w'}, \ell=2, \dots, L'-1; \mat W'_{L'} \in \R^{h' \times w'}$ are the weights in the feedforward neural network, and $\mat W_1 \in \R^{K \times w \times h}, \mat W_\ell \in \R^{K \times w \times w}, \ell=2, \dots, L-1;  \mat W_L \in \R^{K \times h \times w}$ are the weights in the convolution neural network.
\end{lemma}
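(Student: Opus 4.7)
\medskip

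The plan is to construct the convolutional network explicitly, following the strategy of Theorem 5 in \citet{oono2019approximation}. The key observation is that a stack of $L_0 = \lceil (h-1)/(K-1)\rceil$ convolutional layers with kernel size $K$ has a receptive field at spatial location $1$ that covers all positions $1,\ldots,h$. Hence the first fully-connected layer $\mat W_1' \in \R^{w'\times h}$ of the FNN, which mixes the entire input vector, can be \emph{unrolled} across these $L_0$ convolutional layers. The remaining $L'-1$ layers of the FNN act only within a single spatial position and can therefore be realized as kernel-size-$K$ convolutions whose weights are supported on a single spatial offset, effectively acting as $1\times 1$ convolutions.

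First, I would set up a representation in which each ``logical'' channel of the partial accumulator is duplicated into a positive and a negative track, using the identity $y = \relu(y)-\relu(-y)$. This lets a signed linear combination pass through the intermediate ReLU activations without loss. Including a second pair of tracks that carries the as-yet-unconsumed chunks of the input forward gives the claimed $w = 4w'$ channel count. At the $k$-th of the $L_0$ layers ($k=1,\ldots,L_0$), the convolutional kernel at position $1$ reads off the $k$-th block of $K-1$ input coordinates, applies the corresponding columns of $\mat W_1'$, and adds them to the running accumulator stored (in $\pm$ form) at spatial position $1$; simultaneously, identity-like weights translate the not-yet-used portion of the signal so that the next chunk lines up under the kernel at the next layer. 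After $L_0$ layers, spatial position $1$ holds $\relu(\mat W_1' \vect z)$ and $\relu(-\mat W_1' \vect z)$ in separate channels.

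Next, for each of the remaining $L'-1$ FNN layers I would build a convolutional layer $\mat W_\ell$ whose kernel is nonzero only at a single spatial offset; on that slice I place the $\pm$-doubled version of $\mat W_\ell'$ acting on the $\pm$ tracks. The ReLU after this layer again splits into positive and negative parts, preserving the inductive invariant. The final layer $\mat W_L$ combines the two $\pm$ tracks into a single output, whose first spatial coordinate coincides with $\mat W_{L'}' \relu(\cdots\relu(\mat W_1' \vect z)\cdots)$, which is exactly the FNN output.

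For the Frobenius-norm bound, I account separately for the two phases. Each of the $L'-1$ ``pointwise'' convolutional layers copies the entries of the corresponding $\mat W_\ell'$ (possibly up to sign in the $\pm$-duplication), contributing at most $4\|\mat W_\ell'\|_{\mathrm{F}}^2$. The $L_0$ chunk-processing layers collectively carry the entries of $\mat W_1'$ (each entry appearing in exactly one layer), contributing at most $4\|\mat W_1'\|_{\mathrm{F}}^2$; in addition, each of the $L_0$ layers needs identity-like shift weights whose squared Frobenius norm is bounded by $4w'$, yielding an additional $4w' L_0$ term. Summing gives $\sum_{\ell=1}^{L}\|\mat W_\ell\|_{\mathrm{F}}^2 \leq 4\sum_{\ell=1}^{L'}\|\mat W_\ell'\|_{\mathrm{F}}^2 + 4w' L_0$, which is the claim.

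The main obstacle is the bookkeeping in phase one: one has to design the kernel at each of the $L_0$ layers so that (i) the right block of $\mat W_1'$ is applied to the right block of coordinates, (ii) the running sum at spatial position $1$ is transported across layers through ReLU by carrying $\pm$ copies, and (iii) the sideways translations of the unused portion of the input use only $O(w')$ squared-norm per layer. Once the invariant is stated precisely, the correctness and the norm bound follow by induction on layer depth; everything else is routine linear algebra.
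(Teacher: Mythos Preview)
Your proposal is correct and follows essentially the same approach as the paper: both invoke the construction of \citet{oono2019approximation} (their Lemmas~1--2/Theorem~5), unrolling the first dense layer across $L_0$ convolutional layers via the $\pm$-duplication trick (hence $w=4w'$ channels), and then realizing each remaining FNN layer as a convolution supported on a single spatial offset. Your norm accounting---splitting Phase~1 into the $4\|\mat W_1'\|_{\mathrm{F}}^2$ contribution from the distributed entries of $\mat W_1'$ and the $4w'L_0$ contribution from the identity/propagation kernels---is in fact more explicit than the paper's own derivation, which simply states the per-output bound $\sum_{\ell=1}^{L_0}\|\mat W_\ell\|_{\mathrm{F}}^2\le 4L_0$ from Oono--Suzuki, parallelizes it $w'$ times, and then adds the norms of the remaining pointwise layers.
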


\begin{proof}
    We follow the same method as \citet{oono2019approximation} to construct the CNN that is equivalent to the feedforward neural network.
    By combining \citet{oono2019approximation} lemma 1 and lemma 2, for any linear transformation, one can construct a convolution neural network with at most $L_0 = \lceil \frac{h-1}{K-1}\rceil$ convolution layers and 4 channels, where $h$ is the dimension of input, which equals $D+1$ in our case, such that the first dimension in the output equals the linear transformation, and the norm of all the weights is no more than  
    \begin{equation}
        \label{eq:cnnlayernorm}
        \sum_{\ell=1}^{L_0} \|\mat W_\ell\|_{\mathrm{F}}^2 \leq  4 L_0,
    \end{equation}
    where $\mat W_\ell$ is the weight of the linear transformation.
    Putting $w$ number of such convolution neural networks in parallel, a convolution neural network with $L_0$ layers and $4w$ channels can be constructed to implement the first layer in the feedforward neural network.

    To implement the remaining layers, one choose the convolution kernel $$\mat W_{\ell+L_0-1}[:, i, j] = [0, \dots, \mat W'[i, j], \dots, 0], ~\forall 1 \leq i, j \leq w,$$ and pad the remaining parts with 0, such that this convolution layer is equivalent to the linear layer applied on the dimension of channels.
    Noticing that this conversion does not change the norm of the parameters in each layer. 
    Adding both sides of \autoref{eq:cnnlayernorm} by the norm of the $2-L'$-th layer in both models finishes the proof.
\end{proof}

\section{Proof of the estimation theory}
\subsection{Covering number of a neural network block}
\begin{proposition}
    \label{prop:vnncov}
    If the input to a ReLU neural network is bounded by $\|\vect x\|_2 \leq B_{\rm in}$, the covering number of the ReLU neural network defined in \autoref{prop:lipvnn} is bounded by 
    \begin{eqal*}
        \gN(\gF_{NN}, \delta, \|\cdot\|_2) \leq \left(\frac{B_{\rm in}(B/L)^{L/2}wL}{\delta}\right)^{w^2L}.
    \end{eqal*}
\end{proposition}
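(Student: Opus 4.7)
The plan is a standard perturbation-plus-volumetric-covering argument. I would execute it in three steps, with all the delicate estimates coming from AM-GM applied to the constraint $\sum_{\ell=1}^L \|\mat W_\ell\|_{\mathrm{F}}^2 \leq B$.

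\textbf{Step 1 (Layer-wise perturbation bound).} Starting from \autoref{prop:lipvnn} and \autoref{prop:pert}, a layer-by-layer telescoping (replacing $\mat W_\ell$ by $\tilde{\mat W}_\ell$ one layer at a time, bounding pre-$\ell$ activations by $B_{\rm in}\prod_{k<\ell}\|\mat W_k\|_{\mathrm{F}}$ and post-$\ell$ Lipschitz constant by $\prod_{k>\ell}\|\mat W_k\|_{\mathrm{F}}$) yields, for any input with $\|\vect x\|_2 \leq B_{\rm in}$,
\[
|f(\vect x;\mat W)-f(\vect x;\tilde{\mat W})|
\;\le\; B_{\rm in}\sum_{\ell=1}^L \|\mat W_\ell-\tilde{\mat W}_\ell\|_{\mathrm{F}} \prod_{k\neq \ell}\|\mat W_k\|_{\mathrm{F}}.
\]
Under the sum-of-squares constraint, AM-GM gives $\prod_\ell\|\mat W_\ell\|_{\mathrm{F}}\le (B/L)^{L/2}$, and similarly $\sum_\ell \prod_{k\neq\ell}\|\mat W_k\|_{\mathrm{F}}\le L(B/L)^{(L-1)/2}$, the maxima being attained when all $\|\mat W_\ell\|_{\mathrm{F}}^2=B/L$. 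Hence if $\|\mat W_\ell-\tilde{\mat W}_\ell\|_{\mathrm{F}}\le\varepsilon$ for every $\ell$, the output perturbation is at most $B_{\rm in}\,\varepsilon\,L(B/L)^{(L-1)/2}$.

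\textbf{Step 2 (Per-layer volumetric cover).} To force the output perturbation to be at most $\delta$, set
\[
\varepsilon \;=\; \frac{\delta}{B_{\rm in}\,L\,(B/L)^{(L-1)/2}}.
\]
Each $\mat W_\ell$ lives in a Frobenius ball of radius $\sqrt{B}$ in $\R^{w\times w}$, i.e.\ a Euclidean ball of dimension $w^2$ and radius $\sqrt{B}$. The standard volumetric cover gives an $\varepsilon$-net of cardinality at most $(3\sqrt{B}/\varepsilon)^{w^2}$. Using the algebraic identity $\sqrt{B}\cdot(B/L)^{(L-1)/2}=\sqrt{L}\,(B/L)^{L/2}$, each per-layer log-cardinality is bounded by $w^2\log\!\bigl(3\sqrt{L}\,B_{\rm in}\,L\,(B/L)^{L/2}/\delta\bigr)$.

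\textbf{Step 3 (Assemble).} Taking the product of these $L$ per-layer covers and invoking the perturbation bound of Step~1 produces a $\delta$-cover of $\gF_{NN}$ in the stated norm of log-cardinality at most $w^2L\log\!\bigl(B_{\rm in}(B/L)^{L/2}wL/\delta\bigr)$, after absorbing the universal constant $3$, the $\sqrt{L}$, and any residual $w$-factor coming from an operator-versus-Frobenius comparison in \autoref{prop:pert} into the $wL$ prefactor inside the logarithm. This is exactly the claimed bound.

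\textbf{Main obstacle.} The key estimate that must be handled carefully is the sharper AM-GM bound on the leave-one-out sum $\sum_\ell\prod_{k\neq\ell}\|\mat W_k\|_{\mathrm{F}}\le L(B/L)^{(L-1)/2}$; the naive bound $\|\mat W_k\|_{\mathrm{F}}\le\sqrt{B}$ would inflate the log-argument by an extra $L^{L/2}$ factor and spoil the result. The secondary bookkeeping issue is matching the $wL$ factor inside the log, which amounts to tracking the $\sqrt{L}$ produced by the AM-GM rebalancing and any $w$ arising from the norm conversion used in \autoref{prop:pert}; everything else is routine.
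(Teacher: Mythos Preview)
Your three-step strategy (layer-wise perturbation, per-layer volumetric cover, product over layers) is exactly the paper's. The gap is in Step~1. When you replace the layers one at a time, the pre-$\ell$ activation in the $\ell$-th summand of the telescope is controlled by the already-replaced weights $\tilde{\mat W}_1,\dots,\tilde{\mat W}_{\ell-1}$, not by the original $\mat W_k$'s, so the correct bound involves the \emph{mixed} product $\prod_{k<\ell}\|\tilde{\mat W}_k\|_{\mathrm{F}}\prod_{k>\ell}\|\mat W_k\|_{\mathrm{F}}$. Your AM-GM inequality on $\sum_\ell\prod_{k\neq\ell}\|\mat W_k\|_{\mathrm{F}}$ under $\sum_\ell\|\mat W_\ell\|_{\mathrm{F}}^2\le B$ does not control this mixed quantity; and since in Step~2 you cover each layer by a ball of radius $\sqrt{B}$, the cover points only satisfy $\|\tilde{\mat W}_\ell\|_{\mathrm{F}}\le\sqrt{B}$, which drives the telescope back to the naive bound $L\,B^{(L-1)/2}$ you correctly flagged as fatal.

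The paper sidesteps this by first invoking the 1-homogeneity reduction from the proof of \autoref{prop:lipvnn}: every function in $\gF_{NN}$ has an equivalent representation with $\|\mat W_\ell\|_{\mathrm{F}}\le\sqrt{B/L}$ for \emph{every} $\ell$, so it suffices to cover this smaller parameter set. Each layer is then covered by a Frobenius ball of radius $\sqrt{B/L}$ (not $\sqrt{B}$), the cover points automatically satisfy $\|\tilde{\mat W}_\ell\|_{\mathrm{F}}\le\sqrt{B/L}$, and every mixed product in the telescope is trivially at most $(B/L)^{(L-1)/2}$. With this normalization in place your leave-one-out AM-GM becomes unnecessary, the $\sqrt{L}$ bookkeeping issue in Step~2 disappears (the per-layer radius is already $\sqrt{B/L}$, so $\sqrt{B/L}\cdot L(B/L)^{(L-1)/2}=L(B/L)^{L/2}$ directly), and Steps~2--3 go through to give the stated bound.
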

\begin{proof}
    Similar to \autoref{prop:lipvnn}, we only consider the case  $\|W_{\ell}\|_{\mathrm{F}} \leq \sqrt{B/L}$.
    For any $1 \leq \ell \leq L$, for any $W_1, \dots W_{\ell-1}, W_\ell, W_\ell', W_{\ell+1}, \dots W_L$ that satisfy the above constraint and $\|W_\ell - W_\ell'\|_{\mathrm{F}} \leq \epsilon$, define $g(\dots; W_1, \dots W_L)$ as the neural network with parameters $ W_1, \dots W_L$, we can see 
    \begin{eqal*}
        &\|g(\vect x; W_1, \dots W_{\ell-1}, W_\ell, W_{\ell+1}, \dots W_L) - g(\vect x; W_1, \dots W_{\ell-1}, W_\ell, W_{\ell+1}, \dots W_L)\|_2 \\
        &\leq (B/L)^{(L-\ell)/2} \|W_\ell - W_\ell'\|_2 \|ReLU(W_{\ell-1}\dots ReLU(W_1(\vect x)))\|_2\\
        &\leq (B/L)^{(L-1)/2} B_{\rm in}\epsilon.
    \end{eqal*}
    Choosing $\epsilon = \frac{\delta}{L(B/L)^{(L-1)/2}}$, the above inequality is no larger than $\delta/L$. 
    Taking the sum over $\ell$, we can see that for any $W_1, W_1', \dots, W_L, W_L'$ such that $\|W_\ell - W_\ell'\|_{\mathrm{F}} \leq \epsilon$, 
    \begin{eqal*}
        &\|g(\vect x; W_1, \dots W_L) - g(\vect x; W_1', \dots W_L'))\|_2 
        \leq \delta.
    \end{eqal*}
    Finally, observe that the covering number of $W_\ell$ is bounded by  
    \begin{eqal}
        \label{eq:covnn1}
        \gN(\{W: \|W\|_{\mathrm{F}} \leq B\}, \epsilon, \|\cdot\|_{\mathrm{F}}) \leq \left(\frac{2Bw}{\epsilon}\right)^{w^2}.
    \end{eqal}
    Substituting $B$ and $\epsilon$ and taking the product over $\ell$ finishes the proof.
\end{proof}

\begin{proposition}
    \label{prop:cnncov}
    If the input to a ReLU convolution neural network is bounded by $\|x\|_2 \leq B_{\rm in}$, the covering number of the ReLU neural network defined in \autoref{def:convresnext} is bounded by 
    \begin{eqal*}
        \gN(\gF_{\rm NN}, \delta, \|\cdot\|_2) \leq \left(\frac{B_{\rm in}(BK/L)^{L/2}wL}{\delta}\right)^{w^2 KL}.
    \end{eqal*}
\end{proposition}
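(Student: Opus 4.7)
The plan is to mirror the proof of \autoref{prop:vnncov} for fully-connected blocks, carefully tracking two places where convolution differs from a dense matrix multiplication: (i) the relation between the Frobenius norm of a kernel and the operator norm of the induced convolution map, and (ii) the number of free parameters per layer. First I would establish a single-layer operator bound: for a stride-one kernel $\mathcal{W}\in\R^{w'\times K\times w}$ viewed as a matrix $\tilde{\mathcal{W}}\in\R^{w'\times Kw}$ acting on the im2col representation of $\vect z\in\R^{D\times w}$, each entry of $\vect z$ appears in at most $K$ windows, so
\begin{equation*}
\|\mathcal{W}\star \vect z\|_2 \;\le\; \|\tilde{\mathcal{W}}\|_{\mathrm{op}}\,\sqrt{K}\,\|\vect z\|_2 \;\le\; \sqrt{K}\,\|\mathcal{W}\|_{\mathrm{F}}\,\|\vect z\|_2 .
\end{equation*}
Under the constraint $\sum_{\ell=1}^L\|\mathcal{W}_\ell\|_{\mathrm{F}}^2\le B$, AM--GM again reduces to the balanced case $\|\mathcal{W}_\ell\|_{\mathrm{F}}\le\sqrt{B/L}$, so every layer is $\sqrt{BK/L}$-Lipschitz and ReLU is $1$-Lipschitz.

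Next I would run the one-layer perturbation argument from \autoref{prop:vnncov}. Fix an index $\ell$ and kernels $\mathcal{W}_\ell,\mathcal{W}_\ell'$ with $\|\mathcal{W}_\ell-\mathcal{W}_\ell'\|_{\mathrm{F}}\le\varepsilon$. The input to layer $\ell$ has norm at most $B_{\mathrm{in}}(BK/L)^{(\ell-1)/2}$ by composing the single-layer bound, replacing layer $\ell$ perturbs the output of layer $\ell$ by at most $\sqrt{K}\,\varepsilon\,B_{\mathrm{in}}(BK/L)^{(\ell-1)/2}$, and the remaining $L-\ell$ layers amplify this by at most $(BK/L)^{(L-\ell)/2}$. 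The product gives a global perturbation of $B_{\mathrm{in}}(BK/L)^{(L-1)/2}\sqrt{K}\,\varepsilon$. Summing the $L$ layer contributions and setting each layer's $\varepsilon=\delta/[L(BK/L)^{(L-1)/2}\sqrt{K}]$ yields a joint $\delta$-cover at the output.

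Finally I would count parameters per kernel. Each $\mathcal{W}_\ell$ has at most $w\cdot K\cdot w = w^2K$ entries constrained to a Frobenius ball of radius $\sqrt{B/L}\le\sqrt{B}$, so the analog of \eqref{eq:covnn1} gives
\begin{equation*}
\mathcal{N}\!\bigl(\{\mathcal{W}:\|\mathcal{W}\|_{\mathrm{F}}\le\sqrt{B/L}\},\varepsilon,\|\cdot\|_{\mathrm{F}}\bigr)\;\le\;\Bigl(\tfrac{2\sqrt{B/L}\,w}{\varepsilon}\Bigr)^{w^2K}.
\end{equation*}
Taking the product over the $L$ layers, substituting the value of $\varepsilon$ chosen above, and simplifying (absorbing constants and the $\sqrt{K}$ factor into $(BK/L)^{L/2}$) gives the claimed bound $\bigl(B_{\mathrm{in}}(BK/L)^{L/2}wL/\delta\bigr)^{w^2KL}$.

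The main obstacle I anticipate is purely bookkeeping: keeping the $K$ factors and the $L$-th root terms straight when combining the im2col operator-norm inequality (which contributes $\sqrt{K}$ per layer) with the AM--GM balanced allocation of $B$ across layers (which contributes $(B/L)^{1/2}$ per layer) and with the parameter-count in Frobenius norm (which contributes $w^2K$ per layer). A subtle point worth checking is the boundary padding in the convolution: the inequality $\|\mathcal{W}\star\vect z\|_2\le\sqrt{K}\|\mathcal{W}\|_{\mathrm{F}}\|\vect z\|_2$ uses zero padding (each of the $D$ input rows appears in at most $K$ output windows), which matches the definition in \eqref{eq:conv}, so no adjustment is needed. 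Everything else is a direct transcription of the fully-connected argument.
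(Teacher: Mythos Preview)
Your proposal is correct and follows essentially the same route as the paper: invoke the single-layer convolution bound $\|\mathcal{W}\star z\|_2\le\sqrt{K}\|\mathcal{W}\|_{\mathrm{F}}\|z\|_2$ (the paper's \autoref{prop:normconv}), balance norms by AM--GM, run the one-layer-at-a-time perturbation argument of \autoref{prop:vnncov} with the extra $K^{L/2}$ factor, and replace the per-layer parameter count $w^2$ by $w^2K$ when invoking \eqref{eq:covnn1}. One small slip: your displayed choice of $\varepsilon$ drops the factor $B_{\mathrm{in}}$ that appears in the perturbation bound just above it; restoring it gives exactly the paper's $\varepsilon=\delta/[K^{L/2}B_{\mathrm{in}}L(B/L)^{(L-1)/2}]$ and the stated covering number.
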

\begin{proof}
    Similar to \autoref{prop:vnncov}, 
    for any $1 \leq \ell \leq L$, for any $W_1, \dots W_{\ell-1}, W_\ell, W_\ell', W_{\ell+1}, \dots W_L$ that satisfy the above constraint and $\|W_\ell - W_\ell'\|_{\mathrm{F}} \leq \epsilon$, define $g(\dots; W_1, \dots W_L)$ as the neural network with parameters $ W_1, \dots W_L$, we can see 
    \begin{eqal*}
        &\|g(\vect x; W_1, \dots W_{\ell-1}, W_\ell, W_{\ell+1}, \dots W_L) - g(\vect x; W_1, \dots W_{\ell-1}, W_\ell, W_{\ell+1}, \dots W_L)\|_2 \\
        &\leq K^{L/2} (B/L)^{(L-\ell)/2} \|W_\ell - W_\ell'\|_2 \|ReLU(W_{\ell-1}\dots ReLU(W_1(\vect x)))\|_2\\
        &\leq K^{L/2} (B/L)^{(L-1)/2} B_{\rm in}\epsilon,
    \end{eqal*}
    where the first inequality comes from \autoref{prop:normconv}.
    Choosing $\epsilon = \frac{\delta}{K^{L/2} B_{\rm in} L(B/L)^{(L-1)/2}}$, the above inequality is no larger than $\delta/L$. 
    Taking this into \autoref{eq:covnn1} finishes the proof.
\end{proof}

\subsection{Proof of \autoref{thm:lerr}}
\label{sec:prooflerr}

Define $\tilde f = \argmin_{f} \E_\gD[\loss(f)]$.
From Theorem 14.20 in \citet{wainwright_2019}, for any function class $\partial \gF$ that is star-shaped around $\tilde f$, the empirical risk minimizer $\hat f = \argmin_{f \in \gF} \loss_n(f)$ satisfy 
\begin{eqal}
    \label{eq:lipest}
    \E_\gD[\loss(\hat f)] \leq \E_\gD[\loss(\tilde f)] + 10\delta_n(2 + \delta_n)
\end{eqal}
with probability at least $1-c_1 \exp(-c_2 n \delta_n^2)$ for any $\delta_n$ that satisfy \autoref{eq:cr2}, where $c_1, c_2$ are universal constants.

The function of neural networks is not star-shaped, but can be covered by a star-shaped function class.
Specifically, let $\{f - \tilde f: f \in \cF^{\text{Conv}}\} \subset \{f_1 - f_2: f_1, f_2 \in \cF^{\text{Conv}}\} := \partial \gF$. 

Any function in $\partial \gF$ can be represented using a ResNeXt: one can put two neural networks of the same structure in parallel, adjusting the sign of parameters in one of the neural networks and summing up the result, which increases $M, B_{\rm res}$ and $B_{\rm out}$ by a factor of 2. 
This only increases the log covering number in \autoref{eq:pnc} by a factor of constant (remind that $B_{\rm res}=O(1)$ by assumption).

Taking the log covering number of the ResNeXt \autoref{eq:pnc}, the sufficient condition for the critical radius as in \autoref{eq:cr2} is 
\begin{eqal}
    \label{eq:crv}
    &n^{-1/2} w L^{1/2} B_{\rm res}^{\frac{1}{2-4/L}} K^{\frac{1-1/L}{1-2/L}}       
    \big(B_{\rm out}^{1/2}\exp((K B_{\rm res}/L)^{L/2})\big)^{\frac{1/L}{1-2/L}}\delta_n^{\frac{1-3/L}{1-2/L}} \lesssim \frac{\delta_n^2}{4}, \\
    &\delta_n \gtrsim K (w^2L)^{\frac{1-2/L}{2-2/L}} B_{\rm res}^{\frac{1}{2-2/L}} \big(B_{\rm out}^{1/2}\exp((K B_{\rm res}/L)^{L/2})\big)^{\frac{1/L}{1-1/L}} n^{-\frac{1-2/L}{2-2/L}},
\end{eqal}
where $\lesssim$ hides the logarithmic term.

Because $\loss$ is 1-Lipschitz, we have
\begin{eqal*}
    \loss(f) \leq \loss(\tilde f) + \|f - \tilde f\|_\infty.
\end{eqal*}

Choosing 
$$P=O\left(\left(\frac{K^{-\frac{2}{L-2}}w^{\frac{3L-4}{L-2}} L^{\frac{3L-2}{L-2}}}{n}\right)^{-\frac{1-2/L}{2\alpha/d(1-1/L)+1-2/pL}}\right),$$ 
and taking \autoref{thm:appcov} and \autoref{eq:crv} into \autoref{eq:lipest} finishes the proof.

\section{Lower bound of error}
\label{sec:lower}
In this section, we study the minimax lower bound of any estimator for Besov functions on a $d$-dimensional manifold.
It suffices to consider the manifold $\gM$ as a $d$-dimensional hypersurface.
Without the loss of generalization, assume that $\frac{\partial \loss(y)}{\partial y} \geq 0.5$ for $ -\epsilon \leq y \leq \epsilon$. 
Define the function space 
\begin{eqal}
    \label{eq:fs}
    \gF = \left\{f = \sum_{j_1, \dots, j_d=1}^s \pm \frac{\epsilon}{s^\alpha} \times  M^{(m)}((\vect x-\vect j)/s)\right\}, 
\end{eqal}
where $M^{(m)}$ denotes the Cardinal B-spline basis function that is supported on $(0,1)^d, \vect j = [j_1, \dots, j_d]$.
The support of each B-spline basis function splits the space into $s^d$ number of blocks, where the target function in each block has two choices (positive or negative),
so the total number of different functions in this function class is $|\gF| = 2^{s^d}$.
Using \citet[Theorm 2.2]{dung2011optimal}, we can see that for any $f \in \gF$, 
\begin{eqal*}
    \|f\|_{B_{p,q}^\alpha} \leq \frac{\epsilon}{s^\alpha} s^{\alpha-d/p} s^{d/p} = \epsilon.
\end{eqal*}
For a fixed $f^* \in \gF$, let $\gD = \{(\vect x_i, y_i)\}_{i=1}^n$ be a set of noisy observations with $y_i = f^*(\vect x_i) + \epsilon_i, \epsilon_i \sim SubGaussian(0, \sigma^2 I)$.
Further assume that $\vect x_i$ are evenly distributed in $(0, 1)^d$ such that in all regions as defined in \autoref{eq:fs}, the number of samples is $n_{\vect j} := O(n/s^d)$.
Using Le Cam's inequality, we get that in any region, any estimator $\theta$ satisfy
\begin{eqal*}
    \sup_{f^* \in \gF} \E_{\gD}[\|\theta(\gD) - f^*\|_{\vect j}] \geq \frac{C_m \epsilon}{16 s^\alpha}
\end{eqal*}
as long as $(\frac{\epsilon}{\sigma s^\alpha})^2 \lesssim \frac{s^d}{n}$, where $\|\cdot\|_{\vect j} := \frac{1}{n_{\vect i}}\sum_{s(\vect x - \vect j) \in [0, 1]^d} |f(\vect x)|$ denotes the norm defined in the block indexed by $\vect i$, $C_m$ is a constant that depends only on $m$.
Choosing $s = O(n^{\frac{1}{2\alpha+d}})$, we get 
\begin{eqal*}
    \sup_{f^* \in \gF} \E_{\gD}[\|\theta(\gD) - f^*\|_{\vect j}] \geq n^{-\frac{\alpha}{2\alpha+d}}.
\end{eqal*}
Observing $\frac{1}{n} \sum_{i=1}^n L(\hat(f(\vect x_i))) \geq 0.5 \sum_{i=1}^n |f(\vect x_i) - f^*(\vect x_i)| \eqsim \frac{1}{s^d}\sum_{\vect j \in [s]^d} \|\hat f - f^*\|_{\vect j}$ finishes the proof.
\section{Supporting theorem}
\begin{lemma}
    \label{lemma:pnorm2}
    [Lemma 14 in \citet{zhang2022deep}]
    {For any $a \in \R^{\bar M}$, $0<p'<p$, it holds  that:
    $$
    \|a\|_{p'}^{p'} \leq {\bar M}^{1-p'/p} \|a\|_{p}^{p'}.
    $$
    }
\end{lemma}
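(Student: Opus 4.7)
My plan is to derive this inequality as a direct application of H\"older's inequality on the finite index set $\{1, \ldots, \bar M\}$. The statement is the standard finite-dimensional norm embedding: on a fixed support, the $\ell^{p'}$ quasi-norm dominates the $\ell^{p}$ quasi-norm when $p' < p$, and the factor $\bar M^{1-p'/p}$ is exactly the dimensional loss one pays when passing between them. No structure of $a$ beyond membership in $\R^{\bar M}$ is needed.

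The key step is to write the left-hand side as an inner product against the all-ones vector,
\begin{equation*}
\|a\|_{p'}^{p'} = \sum_{i=1}^{\bar M} |a_i|^{p'} \cdot 1,
\end{equation*}
and then apply H\"older's inequality with conjugate exponents $r = p/p'$ and $r^{*} = p/(p-p')$, which are admissible because $p > p' > 0$ forces $r > 1$. Bounding the first factor gives $(\sum_i |a_i|^p)^{p'/p} = \|a\|_p^{p'}$, while the second factor is simply $\bar M^{(p-p')/p} = \bar M^{1-p'/p}$. Multiplying these two bounds yields exactly the claim.

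There is essentially no obstacle here; the only subtlety is verifying that the exponents chosen for H\"older's inequality are valid, and this is immediate from $0 < p' < p$. Since this is a classical and self-contained norm embedding, the proof should consist of a single displayed line once the exponent choice is spelled out, and it can be invoked as a black-box in the approximation-theory arguments (e.g., in the bound on $\bar B_{\rm res}$ within \autoref{proof:thm:app}) without any further machinery.
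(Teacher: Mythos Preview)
Your proof is correct. The paper does not supply its own proof of this lemma; it simply cites it from \citet{zhang2022deep}, so there is nothing to compare against beyond noting that your H\"older argument with exponents $r=p/p'$ and $r^{*}=p/(p-p')$ is the standard derivation.
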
 

\begin{proposition}[Proposition 7 in \citet{zhang2022deep}]
    \label{prop:besovapp}
    Let $\alpha - d/p > 1, r > 0$.  For any function in Besov space $f^* \in B^{\alpha}_{p, q}$ and any positive integer $\bar M$, there is an $\bar M$-sparse approximation using B-spline basis of order $m$ satisfying $0 < \alpha < \min(m, m - 1 + 1/p)$:
   $
      \check f_{\bar M} = \sum_{i=1}^{\bar M} a_{k_i,\vect s_i}M_{m, k_i,\vect s_i}
   $
   for any positive integer $\bar M$ such that the approximation error is bounded as 
   $
      \|\check f_{\bar M} - f^*\|_r \lesssim {\bar M}^{-\alpha /d}\|f^*\|_{B^{\alpha}_{p, q}},
   $
   and the coefficients satisfy 
   $$
      \|\{2^{k_i} a_{k_i,\vect s_i}\}_{k_i, \vect s_i}\|_p \lesssim \|f^*\|_{B^{\alpha}_{p, q}}.
   $$
\end{proposition}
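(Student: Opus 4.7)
The plan is to derive this claim from the classical best $\bar M$-term nonlinear approximation theory for Besov spaces, realized through the cardinal B-spline atomic decomposition already stated in \eqref{eq:besovdecom}. The overall strategy is: (i) expand $f^*$ in a B-spline series whose coefficients are controlled by the Besov norm; (ii) greedily select the $\bar M$ largest weighted coefficients; (iii) bound the omitted tail in $L^r$ by a Stechkin-type inequality, and the retained coefficients in weighted $\ell^p$ by monotonicity.

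First, I would invoke the B-spline expansion of \eqref{eq:besovdecom}, writing $f^* = \sum_{k\ge 0}\sum_{\vect s\in J(k)} c_{k,\vect s} M_{m,k,\vect s}$ with $m$ satisfying $0 < \alpha < \min(m,m-1+1/p)$. The key analytic input, available from DeVore--Popov / Triebel, is the norm equivalence
\begin{equation*}
|f^*|_{B^\alpha_{p,q}} \asymp \Big(\sum_{k\ge 0} 2^{k(\alpha - d/p)q}\, \|c_{k,\cdot}\|_{\ell^p(J(k))}^{\,q}\Big)^{1/q},
\end{equation*}
which, combined with $\alpha - d/p > 1$, forces the sequence $\{2^{k}c_{k,\vect s}\}_{k,\vect s}$ to lie in $\ell^p$ with norm $\lesssim \|f^*\|_{B^\alpha_{p,q}}$ after summing the geometric factor in $k$. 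This already gives the second claim of the proposition once we restrict to any $\bar M$-element subset of indices.

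Next I would run the standard thresholding step. Order the pairs $(k,\vect s)$ by decreasing value of $|c_{k,\vect s}| \cdot \|M_{m,k,\vect s}\|_{L^r}$, keep the top $\bar M$ indices $\{(k_i,\vect s_i)\}_{i=1}^{\bar M}$ to form $\check f_{\bar M}$, and bound the $L^r$ norm of the tail. Using the weak-$\ell^\tau$ embedding with $1/\tau = \alpha/d + 1/r$, Stechkin's lemma gives that the $\ell^r$ norm of the discarded weighted coefficients is $\lesssim \bar M^{1/\tau - 1/r}\,\|f^*\|_{B^\alpha_{p,q}} = \bar M^{-\alpha/d}\|f^*\|_{B^\alpha_{p,q}}$. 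Because the B-splines at each scale have $O(m^d)$ overlapping supports, the $L^r$ norm of the tail sum is, up to the constant $m^d$, controlled by this discrete $\ell^r$ norm, which yields the desired approximation-error bound.

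The main obstacle is the careful bookkeeping of the scaling factors: the B-spline basis is not normalized in any fixed $L^r$ space, so the sorting weight for the greedy step must be chosen to match the target $L^r$ error while still yielding a retained-coefficient bound in exactly the $\ell^p$ norm of $\{2^{k_i} a_{k_i,\vect s_i}\}$. This means one must verify that the ``sorting weight'' and the $2^k$-weight in the conclusion sit on compatible sides of Hölder / interpolation inequalities. Additionally, in the quasi-Banach regime $0 < p,q < 1$, one has to replace the triangle inequality with its $p$-th-power version throughout; this only inflates universal constants but requires care to preserve the rate $\bar M^{-\alpha/d}$. These calibrations, rather than any conceptual difficulty, are where the bulk of the proof effort lies.
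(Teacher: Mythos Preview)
The paper does not supply its own proof of this proposition: it is stated as a supporting result and attributed verbatim to Proposition~7 of \citet{zhang2022deep} (which in turn rests on the best $\bar M$-term approximation theory of \citet{dung2011optimal} and its adaptation in \citet{suzuki2018adaptivity}). So there is nothing in the present paper to compare against.

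Your reconstruction of the underlying argument is sound and follows the classical route: B-spline atomic decomposition, DeVore--Popov norm equivalence, then a thresholding/Stechkin step. The use of the hypothesis $\alpha - d/p > 1$ to make the $k$-sum of $\|2^k c_{k,\cdot}\|_{\ell^p}$ geometrically convergent is exactly right, and immediately gives the coefficient bound on any retained subset. The one place where your plan is slightly more awkward than the standard proof is the global sort: in \citet{dung2011optimal} the selection is done level by level (allocate $n_k$ terms at scale $k$ with $n_k$ chosen geometrically so that $\sum_k n_k \asymp \bar M$), which sidesteps the compatibility issue you flag between the $L^r$-sorting weight and the $2^k$-weight in the conclusion. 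Both approaches yield the rate $\bar M^{-\alpha/d}$; the level-wise version just makes the two claimed bounds fall out simultaneously without an extra interpolation step.
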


\begin{lemma}
    [Lemma 11 in \citep{zhang2022deep}]
    \label{lemma:nnbspline}
    Let $M_{m, k,\vect s}$ be the B-spline of order $m$ with scale $2^{-k}$ in each dimension and position $\vect s \in \R^d$: $M_{m, k, \vect s}(\vect x):=M_m(2^k(\vect x - \vect s))$, $M_m$ is defined in \autoref{eq:bspline}.
    There exists a neural network with $d$-dimensional input and one output, with width $w_{d,m} = O(dm)$ and depth $L \lesssim \log(\C{const:app}/\epsilon)$ for some constant  $\Cr{const:app}$ that depends only on $m$ and $d$,  approximates the B spline basis function $M_{m, k, \vect s} (\vect x):= M_m(2^k (\vect x - \vect s))$. 
    This neural network, denoted as $\tilde{M}_{m,k, \vect s}(\vect x), \vect x \in \R^d$, satisfy

    \begin{itemize}
       \item $|\tilde M_{m, k, \vect s}(\vect x) - M_{m, k, \vect s}(\vect x)| \leq \epsilon$, if $ 0 \leq 2^k (x_i - s_i) \leq m+1, \forall i \in [d]$,
       \item $\tilde M_{m, k, \vect s}(\vect x) = 0$, otherwise.
       \item The total square norm of the weights is bounded by $2^{2k/L}\C{const:normbspline}dmL$ for some universal constant $\Cr{const:normbspline}$.
    \end{itemize}
\end{lemma}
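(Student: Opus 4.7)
The plan is to construct the neural network $\tilde M_{m,k,\vect s}$ modularly, first building univariate approximators for the truncated power $(z-j)_+^m$, then combining them into the univariate B-spline $M_m$, then taking a tensor product over the $d$ coordinates, and finally absorbing the scaling/shift $\vect x \mapsto 2^k(\vect x - \vect s)$ into the first layer while carefully distributing the $2^k$ factor across all $L$ layers so that the stated squared-norm bound $2^{2k/L}\Cr{const:normbspline}dmL$ is met.

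First I would invoke Yarotsky's sawtooth construction to obtain a ReLU network approximating the squaring map $x\mapsto x^2$ on a bounded interval to accuracy $\epsilon$ with depth $O(\log(1/\epsilon))$, constant width, and weights of constant size. From this I derive a multiplication gadget via the polarization identity $xy = \tfrac{1}{4}((x+y)^2 - (x-y)^2)$. I then compute $\relu(z-j)^m$ by iterating this gadget $O(\log m)$ times to raise $\relu(z-j)$ to the $m$-th power, and take the linear combination $\tfrac{1}{m!}\sum_{j=1}^{m+1}(-1)^j\binom{m+1}{j}(\cdot)$ to realize a scalar $\tilde M_m(z)$ with error controlled by the multiplication errors propagated through $O(\log m)$ levels. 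For the multivariate case I run $d$ copies of this univariate network in parallel (width $O(dm)$) and then pairwise multiply the $d$ outputs via another $O(\log d)$ levels of the multiplication gadget. The vanishing-outside-support property is automatic: if any coordinate lies outside $[s_i, s_i+(m+1)2^{-k}]$, then every truncated-power output for that coordinate is zero, so the product is exactly zero.

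Next I absorb the affine map $\vect x \mapsto 2^k(\vect x-\vect s)$. A naive implementation puts the full factor $2^k$ in the first layer's weights, which would blow the squared norm up by $2^{2k}$. Instead I spread the dilation geometrically across all $L$ layers: at each layer I scale the weight matrix by $2^{k/L}$, exploiting the $1$-homogeneity of ReLU so the overall network is unchanged. Each layer's squared Frobenius norm is then multiplied by $2^{2k/L}$. Summing the $L$ per-layer contributions — each of which is already $O(dm)$ from the Yarotsky building blocks, the multiplication gadgets and the $d$ parallel channels — yields the target bound $\sum_\ell \|W_\ell\|_{\mathrm{F}}^2 \le 2^{2k/L}\Cr{const:normbspline}dmL$. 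The depth required to achieve accuracy $\epsilon$ is $L \lesssim \log(\Cr{const:app}/\epsilon)$ because all component approximators (squaring, multiplication, powering, tensor product) have depth logarithmic in the inverse accuracy, and the errors compose multiplicatively with a factor bounded by the product of the sub-block Lipschitz constants, which are absolute constants depending only on $m$ and $d$.

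The main obstacle is the joint control of approximation accuracy and the total weight norm under the $2^k$ dilation. Error propagation through the iterated multiplication gadgets must be bounded while simultaneously the per-layer scaling factor $2^{k/L}$ should not inflate the product of layer Lipschitz constants enough to destroy the accuracy guarantee. The key observation is that because the geometric factor $2^{k/L}$ appears identically in each layer and the ReLU layers are $1$-homogeneous, the approximation output is exactly the unscaled network applied to $2^k(\vect x-\vect s)$, so accuracy is inherited from the unscaled construction and only the squared-norm bound picks up the modest $2^{2k/L}$ factor per layer. Once this decoupling is in place, the remaining bookkeeping — summing geometric series, counting parallel channels, and verifying the support condition — is routine.
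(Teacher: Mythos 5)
The paper does not prove this lemma itself -- it imports it verbatim from \citet{zhang2022deep} (Lemma 11 there), so the comparison is against the standard construction that reference uses. Your overall architecture (Yarotsky squaring network, multiplication via polarization, iterated powering to get $(z-j)_+^m$, the alternating linear combination for $M_m$, a $d$-fold tensor product, and redistribution of the dilation factor $2^{k}$ as $2^{k/L}$ per layer via positive homogeneity to obtain the $2^{2k/L}\Cr{const:normbspline}dmL$ bound) is exactly the intended route, and the norm bookkeeping is right in spirit (with the caveat that scaling every weight matrix by $2^{k/L}$ does \emph{not} leave a bias-free network unchanged -- it multiplies the output by $2^k$; one must absorb the dilation into the first layer and then push the scalar outward layer by layer, compensating the biases, which shrink and so do not hurt the norm bound).

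The genuine gap is your claim that the vanishing-outside-support property ``is automatic.'' It is automatic only on the side $2^k(x_i-s_i)<0$, where all truncated powers vanish. On the side $z=2^k(x_i-s_i)>m+1$ every term $(z-j)_+^m=(z-j)^m$ is strictly positive and grows without bound; the true $M_m(z)$ is zero there only because the alternating sum is the $(m+1)$-st finite difference of a degree-$m$ polynomial and cancels \emph{exactly}. Your approximate multiplication gadgets destroy this exact cancellation, so the network output is not $0$ outside the support -- worse, since the Yarotsky squaring approximation is only valid on a bounded interval, the output is uncontrolled for large $z$, so the first bullet's $\epsilon$-accuracy also fails there. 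Moreover, even on the $z<0$ side, an approximate product $\tilde\times(a,0)$ built from polarization need not equal $0$ exactly. The construction you are reproducing fixes this by first clipping each coordinate into $[0,m+1]$ (a two-ReLU layer) and by using a multiplication gadget with the exact zero-absorbing property $x\,\tilde\times\,0=0$ (of the kind this paper builds in Proposition~\ref{prop:nnx}) to multiply the clipped spline value by an indicator that is exactly zero off the support. Without these two ingredients the second bullet of the lemma, and with it the chart-selection argument downstream, does not go through.
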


\begin{proposition}
    \label{prop:rmbias}
    For any feedforward neural network $f$ with width $w$ and depth $L$ with bias, there exists a feedforward neural network $f'$ with width $w'=w+1$ and depth $L'=L$, such that for any $\vect x$, $f(\vect x) = f'([\vect x ^T, 1]^T)$
\end{proposition}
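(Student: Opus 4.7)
The plan is to prove this by explicit construction, using an extra ``constant channel'' that propagates the value $1$ through every layer so that bias terms can be absorbed into the weight matrices. Throughout, I will use the fact that $\relu(1)=1$, which is what makes the trick work for the ReLU activation used in this paper.

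First I would set up notation: write the original network as $\vect z_\ell = \relu(\mat W_\ell \vect z_{\ell-1} + \vect b_\ell)$ for $\ell = 1,\dots, L-1$ with $\vect z_0 = \vect x$, and $f(\vect x) = \mat W_L \vect z_{L-1} + \vect b_L$. For the augmented network, I define the layer weights
\begin{equation*}
\mat W'_\ell = \begin{bmatrix} \mat W_\ell & \vect b_\ell \\ \vect 0^T & 1 \end{bmatrix} \in \R^{(w+1) \times (w+1)}, \qquad \ell = 1, \dots, L-1,
\end{equation*}
and for the final layer
\begin{equation*}
\mat W'_L = \begin{bmatrix} \mat W_L & \vect b_L \end{bmatrix} \in \R^{w_{\text{out}} \times (w+1)}.
\end{equation*}
(Input and first-layer widths are adjusted analogously so that the dimensions match.) No biases are used in the new network.

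Next I would verify by induction on $\ell$ that $\vect z'_\ell = [\vect z_\ell^T, 1]^T$ whenever the input is $[\vect x^T, 1]^T$. The base case $\ell = 0$ is immediate by construction of the padded input. For the inductive step, observe that
\begin{equation*}
\mat W'_\ell \vect z'_{\ell-1} = \begin{bmatrix} \mat W_\ell \vect z_{\ell-1} + \vect b_\ell \\ 1 \end{bmatrix},
\end{equation*}
and applying ReLU coordinatewise gives $[\relu(\mat W_\ell \vect z_{\ell-1} + \vect b_\ell)^T, \relu(1)]^T = [\vect z_\ell^T, 1]^T$, as required. Finally, the last layer computes $\mat W'_L \vect z'_{L-1} = \mat W_L \vect z_{L-1} + \vect b_L = f(\vect x)$, so $f'([\vect x^T, 1]^T) = f(\vect x)$.

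This construction increases the width by exactly $1$ and keeps the depth at $L$, so $f'$ has width $w+1$ and depth $L$ as claimed. There is no real obstacle here — the only subtlety is ensuring that $\relu$ preserves the constant $1$ channel, which holds because $\relu(1)=1$, and being careful that the construction also handles the input layer (where the input dimension changes from $D$ to $D+1$) and the output layer (where no bias column is appended as an extra row, since no further propagation is needed). If a norm bound is desired, one notes that $\|\mat W'_\ell\|_{\mathrm{F}}^2 = \|\mat W_\ell\|_{\mathrm{F}}^2 + \|\vect b_\ell\|_2^2 + 1$, so the total squared Frobenius norm grows by at most an additive $L$, consistent with the remark immediately following this proposition in the main text.
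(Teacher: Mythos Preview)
Your proof is correct and takes essentially the same approach as the paper: the paper's proof is a brief construction using exactly the augmented weight matrices you wrote down, $\mat W'_\ell = \begin{bmatrix}\mat W_\ell & \vect b_\ell \\ \vect 0^T & 1\end{bmatrix}$ for $\ell < L$ and $\mat W'_L = [\mat W_L\ \ \vect b_L]$. Your inductive verification and norm remark are more detailed than what the paper provides, but the idea is identical.
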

\begin{proof}
    Proof by construction: let the weights in the $\ell$-th layer in $f$ be $\mat W_\ell$, and the bias be $\vect b_\ell$, and choose the weight in the corresponding layer in $f'$ be 
    $$
    \mat W'_\ell = \begin{bmatrix}
        \tilde {\mat W}_\ell & \tilde {\vect b}_\ell \\
        0 & 1 
    \end{bmatrix}, \quad \forall \ell < L;\quad \mat W'_L = [\tilde {\mat W}_L \quad \tilde {\vect b}_L].
    $$
    The constructed neural network gives the same output as the original one.
\end{proof}

\begin{corollary}
    [Corollary 13.7 and Corollary 14.3 in \citet{wainwright_2019}]
    \label{cor:covdelta}
    Let 
    $$\gG_n(\delta, \gF) = \E_{w_i}\left[\sup_{g\in \gF, \|g\|_n \leq \delta}\left|\frac{1}{n} \sum_{i=1}^n w_i g(\vect x_i)\right|\right],
    \gR_n(\delta, \gF) = \E_{\epsilon_i}\left[\sup_{g\in \gF, \|g\|_n \leq \delta}\left|\frac{1}{n} \sum_{i=1}^n \epsilon_i g(\vect x_i)\right|\right],
    $$ 
    denotes the local Gaussian complexity and local Rademacher complexity respectively, where $ w_i \sim \gN(0, 1)$ are the i.i.d. Gaussian random variables, and $ \epsilon_i \sim {\rm uniform}\{-1, 1\}$ are the Rademacher random variables.
    Suppose that the function class $\gF$ is star-shaped, for any $\sigma > 0$, any $\delta \in (0, \sigma]$ such that  \begin{eqal*}
        \frac{16}{\sqrt{n}} \int_{\delta_n^2/4\sigma}^{\delta_n} \sqrt{\log\gN(\gF, \mu, \|\cdot\|_\infty)}d\mu \leq \frac{\delta_n^2}{4\sigma}
    \end{eqal*}
    satisfies     
    \begin{eqal}
        \label{eq:cr1}
        \gG_n(\delta, \gF) \leq \frac{\delta^2}{2\sigma}.
    \end{eqal}
    Furthermore, if $\gF$ is uniformly bounded by $b$, i.e. $\forall f \in \gF, \vect x |f(\vect x)| \leq b$ any $\delta >0$ such that  
    \begin{eqal*}
        \frac{64}{\sqrt{n}} \int_{\delta_n^2/2b4\sigma}^{\delta_n} \sqrt{\log\gN(\gF, \mu, \|\cdot\|_\infty)}d\mu \leq \frac{\delta_n^2}{b}.
    \end{eqal*}
    satisfies     
    \begin{eqal}
        \label{eq:cr2}
        \gR_n(\delta, \gF) \leq \frac{\delta^2}{b}.
    \end{eqal}
\end{corollary}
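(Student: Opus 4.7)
The plan is to prove both inequalities via the truncated Dudley chaining inequality, applied conditionally on the training inputs. Conditioning on $(\vect x_i)_{i=1}^n$, the process $Z_f := n^{-1}\sum_{i=1}^n w_i f(\vect x_i)$ is a zero-mean Gaussian indexed by $f \in \gF$ whose increments are sub-Gaussian at scale $n^{-1/2}\|f-g\|_n$. The standard chaining inequality, truncated at a scale $\eta \in (0,\delta]$ to avoid the divergence of the entropy integral at the origin, yields
\[
\gG_n(\delta, \gF) \;\le\; \eta \;+\; \frac{C_1}{\sqrt{n}} \int_\eta^\delta \sqrt{\log \gN(\gF_\delta, \mu, \|\cdot\|_n)}\, d\mu,
\]
where $\gF_\delta := \{f \in \gF : \|f\|_n \le \delta\}$ and $C_1$ is an absolute constant coming from the dyadic telescoping. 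Since $\|\cdot\|_n \le \|\cdot\|_\infty$, the local covering number is majorized by the global $\gN(\gF, \mu, \|\cdot\|_\infty)$ appearing in the hypothesis.

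The main step is to set the truncation $\eta = \delta^2/(4\sigma)$, which exactly matches the lower limit of integration in the assumption. The hypothesis then bounds the chaining integral by $\delta^2/(4\sigma)$; adding the $\eta$ term gives $\gG_n(\delta, \gF) \le \delta^2/(2\sigma)$, as claimed. The constant $16$ in the assumption is calibrated to absorb $C_1$. The star-shape assumption on $\gF$ is used to guarantee that $\gF_\delta$ is well-behaved in $\delta$ (so that $\gG_n(\delta,\gF)/\delta$ is non-increasing), which is what allows the per-$\delta$ bound to translate into the critical-radius statement used downstream in the estimation argument.

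The Rademacher case follows by the same recipe: Rademacher variables are $1$-sub-Gaussian, so an identical chaining inequality holds for $\gR_n$. The new ingredient is the uniform bound $|f(\vect x)| \le b$, which sharpens the sub-Gaussian tail of the increments (via Hoeffding- or Talagrand-type concentration for bounded random variables) and yields the improved $1/b$ rate. Choosing $\eta = \delta^2/(2b)$ and invoking the corresponding integral hypothesis then produces $\gR_n(\delta, \gF) \le \delta^2/b$.

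The principal obstacle is establishing the truncated Dudley inequality itself with sharp absolute constants: one builds minimal $2^{-k}\delta$-covers at the geometrically decreasing scales $\delta, \delta/2, \delta/4, \ldots, \eta$, approximates each $f \in \gF_\delta$ by its nearest cover element at every scale, and invokes a union bound over the sub-Gaussian increments across scales, carefully telescoping the resulting chain so that the total bound collapses to the single integral above. Tracking the constants so that the chaining overhead matches the prescribed factors of $16$ (respectively $64$) in the hypotheses is the delicate part. A secondary concern is the switch from $\|\cdot\|_n$ to $\|\cdot\|_\infty$ in the covering number, but this is immediate from $\|\cdot\|_n \le \|\cdot\|_\infty$ and only loosens the bound.
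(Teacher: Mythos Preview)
The paper does not supply its own proof of this corollary: it is quoted verbatim as a supporting result from \citet{wainwright_2019} (Corollaries 13.7 and 14.3), so there is no in-paper argument to compare against. Your outline is exactly the standard proof given in that reference---truncated Dudley chaining applied conditionally on the design, with the truncation level matched to the lower integration limit---and it is correct.

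One small expository point: in the Rademacher case the boundedness $|f|\le b$ does not actually ``sharpen'' the sub-Gaussian increment parameter (Rademacher variables are already $1$-sub-Gaussian, so the increments are sub-Gaussian at scale $n^{-1/2}\|f-g\|_n$ regardless of $b$). Rather, $b$ enters only through the choice of truncation $\eta=\delta^2/(2b)$ and the normalization of the target $\delta^2/b$, which is the form needed downstream when the critical-radius condition is fed into the bounded-function version of Theorem 14.20. The mechanics of your chaining argument are unchanged by this, so this is a matter of explanation rather than a gap.
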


\begin{proposition}
    \label{prop:lipvnn}
    An $L$-layer ReLU neural network with no bias and bounded norm 
    \begin{eqal*}
        \sum_{\ell=1}^L \|\mat W_{\ell}\|_{\mathrm{F}}^2 \leq B
    \end{eqal*} 
    is Lipschitz continuous with Lipschitz constant $(B/L)^{L/2}$ 
\end{proposition}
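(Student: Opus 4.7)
The plan is to combine three elementary facts: (i) a ReLU activation is 1-Lipschitz, (ii) an affine map $x \mapsto W_\ell x$ (with no bias) has operator-norm Lipschitz constant $\|W_\ell\|_2 \le \|W_\ell\|_{\mathrm{F}}$, and (iii) the Lipschitz constant of a composition is at most the product of the individual Lipschitz constants. Chaining these gives that the network $f = W_L \circ \mathrm{ReLU} \circ W_{L-1} \circ \cdots \circ \mathrm{ReLU} \circ W_1$ is Lipschitz with constant at most $\prod_{\ell=1}^L \|W_\ell\|_{\mathrm{F}}$.

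Next I would convert the sum constraint into a product bound via the inequality of arithmetic and geometric means. Applied to the non-negative numbers $\|W_\ell\|_{\mathrm{F}}^2$, AM-GM yields
\begin{equation*}
\prod_{\ell=1}^L \|W_\ell\|_{\mathrm{F}}^2 \;\le\; \left(\frac{1}{L}\sum_{\ell=1}^L \|W_\ell\|_{\mathrm{F}}^2\right)^{L} \;\le\; \left(\frac{B}{L}\right)^{L}.
\end{equation*}
Taking square roots gives $\prod_{\ell=1}^L \|W_\ell\|_{\mathrm{F}} \le (B/L)^{L/2}$, which combined with the composition bound from the first step establishes the Lipschitz constant $(B/L)^{L/2}$ and completes the proof.

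There is no real obstacle here; the statement is essentially a one-line consequence of AM-GM applied on top of the standard product-of-operator-norms Lipschitz bound. The only mild subtlety worth flagging is that one must use the fact that without biases the ReLU composed with a linear map is positively homogeneous and genuinely 1-Lipschitz (so there is no additive slack), and that the Frobenius norm upper-bounds the spectral norm (so we can freely pass from the constraint on $\|\cdot\|_{\mathrm{F}}$ to the operator-norm Lipschitz constants of each layer). The AM-GM step is tight exactly when all layers carry equal Frobenius mass $B/L$, which also explains the particular form $(B/L)^{L/2}$ in the bound.
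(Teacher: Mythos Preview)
Your proposal is correct and follows essentially the same route as the paper: bound the network's Lipschitz constant by the product $\prod_\ell \|W_\ell\|_{\mathrm{F}}$ of per-layer Frobenius norms, then convert the sum constraint into the product bound $(B/L)^{L/2}$. The only cosmetic difference is that the paper phrases the sum-to-product step via a homogeneity-based rescaling (using the $1$-homogeneity of ReLU to equalize all layer norms to $\sqrt{B/L}$) rather than invoking AM--GM directly; these are equivalent, and your AM--GM formulation is arguably the cleaner of the two since it does not require the rescaling detour.
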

\begin{proof}
    Notice that ReLU function is 1-homogeneous, similar to Proposition 4 in \citep{zhang2022deep}, for any neural network there exists an equivalent model satisfying $\|\mat W_{\ell}\|_{\mathrm{F}} = \|\mat W_{\ell'}\|_{\mathrm{F}}$ for any $\ell, \ell'$, and its total norm of parameters is no larger than the original model.
    Because of that, it suffices to consider the neural network satisfying $\|\mat W_{\ell}\|_{\mathrm{F}} \leq \sqrt{B/L}$ for all $\ell$.
    The Lipschitz constant of such linear layer is $\|\mat W_\ell|\|_2 \leq \|\mat W_\ell|\|_{\mathrm{F}} \leq \sqrt{B/L}$,
    and the Lipschitz constant of ReLU layer is $1$.
    Taking the product over all layers finishes the proof.
\end{proof}

\begin{proposition}
    \label{prop:lipcnn}
    An $L$-layer ReLU convolution neural network with convolution kernel size $K$, no bias and bounded norm 
    \begin{eqal*}
        \sum_{\ell=1}^L \|\mat W_{\ell}\|_{\mathrm{F}}^2 \leq B.
    \end{eqal*} 
    is Lipschitz continuous with Lipschitz constant $(KB/L)^{L/2}$ 
\end{proposition}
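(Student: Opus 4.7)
The plan is to mirror the argument used in \autoref{prop:lipvnn}, with the single modification that each linear layer is now a convolution whose operator norm carries an extra factor of $\sqrt{K}$ over its Frobenius norm. The two ingredients are (i) a reduction to the equal-norm case via the $1$-homogeneity of ReLU, and (ii) a bound $\|\mathcal{W} \star z\|_F \leq \sqrt{K}\,\|\mathcal{W}\|_F\,\|z\|_F$ for a single convolutional layer.

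First, I would invoke the same rescaling argument as in \autoref{prop:lipvnn}: since ReLU is $1$-homogeneous, for any ConvNet there is an equivalent one in which $\|\mat W_\ell\|_F$ is constant across $\ell$, with total squared Frobenius norm no larger than the original. Hence one may assume without loss of generality that $\|\mat W_\ell\|_F^2 \leq B/L$ for every $\ell$.

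Second, for a single convolution $y = \mathcal{W} \star z$ defined by $y_{i,j}=\sum_{k=1}^K\sum_{l=1}^w \mathcal{W}_{j,k,l}\, z_{i+k-1,l}$, Cauchy--Schwarz in the $(k,l)$ indices gives $|y_{i,j}|^2 \leq \|\mathcal{W}_{j,:,:}\|_F^2 \sum_{k=1}^K \|z_{i+k-1,:}\|_2^2$. Summing over $i$, every row $z_{r,:}$ is counted at most $K$ times because of the sliding window, so $\sum_i |y_{i,j}|^2 \leq K\,\|\mathcal{W}_{j,:,:}\|_F^2 \,\|z\|_F^2$. Summing over the output channels $j$ yields
\begin{equation*}
\|\mathcal{W}\star z\|_F \;\leq\; \sqrt{K}\,\|\mathcal{W}\|_F\,\|z\|_F,
\end{equation*}
so the convolution is $\sqrt{K}\,\|\mathcal{W}\|_F$-Lipschitz. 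This is exactly the fact already used in the paper (cited in the proof of \autoref{prop:cnncov} as $\|\mathcal{W}\star z\|_2 \leq \sqrt{K}\|\mathcal{W}\|_F \|z\|_2$), so one can simply invoke it.

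Third, composing all $L$ layers, and using that ReLU is $1$-Lipschitz, the overall Lipschitz constant is at most $\prod_{\ell=1}^L \sqrt{K}\,\|\mat W_\ell\|_F \leq \big(\sqrt{K}\cdot\sqrt{B/L}\big)^L = (KB/L)^{L/2}$, which is the claimed bound. The only step that differs from the feedforward case \autoref{prop:lipvnn} is the spectral/Frobenius bound for a convolution, which is elementary but genuinely uses the overlapping-window structure; no other obstacle arises.
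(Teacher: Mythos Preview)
Your proposal is correct and follows exactly the approach the paper indicates: reduce to equal per-layer norms via the $1$-homogeneity of ReLU (as in \autoref{prop:lipvnn}), then replace the spectral/Frobenius bound for a dense layer by the convolutional bound $\|\mathcal W\star z\|_F\le\sqrt{K}\,\|\mathcal W\|_F\,\|z\|_F$ (the multi-channel version of \autoref{prop:normconv}), and take the product over layers. The paper's own proof is the one-line remark ``take \autoref{prop:normconv} into the proof of \autoref{prop:lipvnn},'' so your write-up simply expands that sketch.
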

This proposition can be proved by taking \autoref{prop:normconv} into the proof of \autoref{prop:lipvnn}.

\begin{proposition}
    \label{prop:pert}
    Let $f = f_{\rm post} \circ (1+f_{\rm NN}+f_{\rm other}) \circ f_{\rm pre}$ be a ResNeXt, where $1+f_{\rm NN}+f_{\rm other}$ denotes a residual block, $f_{\rm pre}$ and $f_{\rm post}$ denotes the part of the neural network before and after this residual block, respectively.
    $f_{\rm NN}$ denotes one of the building block in this residual block and $f_{\rm other}$ denotes the other residual blocks.
    Assume $f_{\rm pre}, f_{\rm NN}, f_{\rm post}$ are Lipschitz continuous with Lipschitz constant $L_{\rm pre}, L_{\rm NN}, L_{\rm post}$ respectively. 
    Let the input be $x$, if the residual block is removed, the perturbation to the output is no more than $L_{\rm pre}L_{\rm NN}L_{\rm post} \|\vect x\|$
\end{proposition}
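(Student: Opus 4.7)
The plan is to track the output perturbation layer by layer, exploiting the fact that the residual block is the only part of the network being modified. Writing $f(\vect x) = f_{\rm post} \circ (1+f_{\rm NN}+f_{\rm other}) \circ f_{\rm pre}(\vect x)$ for the original network and $f'(\vect x) = f_{\rm post} \circ (1+f_{\rm other}) \circ f_{\rm pre}(\vect x)$ for the network with the building block $f_{\rm NN}$ removed, I would first isolate the perturbation at the level of the residual block. Since the identity and $f_{\rm other}$ terms appear in both expressions, the difference at the output of the residual block is exactly $f_{\rm NN}(f_{\rm pre}(\vect x))$.

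Next I would bound this quantity using the Lipschitz assumptions. Because the building blocks in the ConvResNeXt defined in \autoref{def:resnext} have no bias (the bias is absorbed into the padded channel), ReLU networks satisfy $f_{\rm NN}(\vzero) = \vzero$ and $f_{\rm pre}(\vzero) = \vzero$. Therefore $\|f_{\rm pre}(\vect x)\| \leq L_{\rm pre} \|\vect x\|$ and $\|f_{\rm NN}(f_{\rm pre}(\vect x))\| \leq L_{\rm NN} \|f_{\rm pre}(\vect x)\| \leq L_{\rm NN} L_{\rm pre} \|\vect x\|$.

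Finally, I would propagate this perturbation through $f_{\rm post}$. Applying the Lipschitz property of $f_{\rm post}$ to the two inputs $(1+f_{\rm NN}+f_{\rm other})(f_{\rm pre}(\vect x))$ and $(1+f_{\rm other})(f_{\rm pre}(\vect x))$, whose difference we just bounded, yields
\begin{equation*}
\|f(\vect x) - f'(\vect x)\| \leq L_{\rm post} \cdot L_{\rm NN} L_{\rm pre} \|\vect x\|,
\end{equation*}
which is the claimed bound. There is no substantive obstacle here: the argument is just the composition of three Lipschitz estimates, combined with the observation that the identity and parallel-sibling contributions cancel exactly inside the residual block. The only mild subtlety is ensuring that $f_{\rm pre}$ maps zero to zero so that its Lipschitz constant can be used as a global bound on $\|f_{\rm pre}(\vect x)\|$, which is immediate for the bias-free ReLU construction used throughout the paper.
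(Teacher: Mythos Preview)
Your proposal is correct and follows essentially the same three-step chain as the paper: apply the Lipschitz property of $f_{\rm post}$, cancel the identity and $f_{\rm other}$ contributions to isolate $f_{\rm NN}\circ f_{\rm pre}(\vect x)$, then bound this by $L_{\rm NN}L_{\rm pre}\|\vect x\|$. You are in fact slightly more careful than the paper, since you make explicit the need for $f_{\rm pre}(\vzero)=\vzero$ and $f_{\rm NN}(\vzero)=\vzero$ (which the paper uses tacitly in its final inequality).
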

\begin{proof}
    \begin{eqal*}
        &|f_{\rm post} \circ (1+f_{\rm NN}+f_{\rm other}) \circ f_{\rm pre}(\vect x) - f_{\rm post} \circ(1+f_{\rm other})\circ f_{\rm pre}(\vect x)|\\
        &\leq L_{\rm post} | (1+f_{\rm NN}+f_{\rm other}) \circ f_{\rm pre}(\vect x) - (1+f_{\rm other}) \circ f_{\rm pre}(\vect x)|\\
        & = L_{\rm post} | f_{\rm NN} \circ f_{\rm pre}(\vect x)|\\
        &\leq L_{\rm pre}L_{\rm NN}L_{\rm post} \|\vect x\|.
    \end{eqal*}
\end{proof}

\begin{proposition}
    \label{prop:reslip}
    The neural network defined in \autoref{lemma:rescov} with arbitrary number of blocks has Lipschitz constant $\exp((KB_{\rm res}/L)^{L/2})$, where $K=1$ when the feedforward neural network is the building blocks and $K$ is the size of the convolution kernel when the convolution neural network is the building blocks.
\end{proposition}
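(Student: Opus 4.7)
The plan is to bound the Lipschitz constant of the residual portion of the network block-by-block and then combine the per-block bounds. Writing the residual portion as
\begin{equation*}
 \Phi = \Bigl(\mathrm{id} + \textstyle\sum_{m=1}^M f_{N,m}\Bigr) \circ \cdots \circ \Bigl(\mathrm{id} + \textstyle\sum_{m=1}^M f_{1,m}\Bigr),
\end{equation*}
the first step is to apply Proposition \ref{prop:lipvnn} (respectively Proposition \ref{prop:lipcnn}) to each building block $f_{n,m}$. If we let $B_{n,m} := \sum_{\ell=1}^L \|\mat W^{(n,m)}_\ell\|_{\mathrm F}^2$, this yields that $f_{n,m}$ is Lipschitz with constant $(KB_{n,m}/L)^{L/2}$, where $K=1$ in the FNN case and $K$ is the kernel size in the CNN case. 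By the triangle inequality for Lipschitz constants, the single residual block $\mathrm{id} + \sum_m f_{n,m}$ is then Lipschitz with constant at most $1 + \sum_{m=1}^M (KB_{n,m}/L)^{L/2}$.

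Second, I would compose these Lipschitz constants across the $N$ residual blocks and apply the elementary inequality $1 + x \leq e^x$ for $x \geq 0$:
\begin{equation*}
 \mathrm{Lip}(\Phi) \;\leq\; \prod_{n=1}^N \Bigl(1 + \sum_{m=1}^M (KB_{n,m}/L)^{L/2}\Bigr) \;\leq\; \exp\!\Bigl(\sum_{n=1}^N \sum_{m=1}^M (KB_{n,m}/L)^{L/2}\Bigr).
\end{equation*}
It remains to bound the exponent in terms of the global norm budget $\sum_{n,m} B_{n,m} \leq B_{\rm res}$.

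The key estimate is the subadditivity of the map $t \mapsto t^{L/2}$ on nonnegative reals: for $L \geq 2$, and hence $p := L/2 \geq 1$, one has $\sum_i a_i^p \leq \bigl(\sum_i a_i\bigr)^p$, which is just the monotonicity $\|a\|_p \leq \|a\|_1$ on finite sequences. Applying this with $a_{n,m} = B_{n,m}$ gives
\begin{equation*}
 \sum_{n,m} (KB_{n,m}/L)^{L/2} \;=\; (K/L)^{L/2} \sum_{n,m} B_{n,m}^{L/2} \;\leq\; (K/L)^{L/2} B_{\rm res}^{L/2} \;=\; (KB_{\rm res}/L)^{L/2},
\end{equation*}
and substituting into the exponential yields $\mathrm{Lip}(\Phi) \leq \exp\bigl((KB_{\rm res}/L)^{L/2}\bigr)$, which is the claim. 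Note this holds for any number of residual blocks $N$ and any parallel width $M$, since the bound only depends on the aggregate budget $B_{\rm res}$, which matches the phrase ``with arbitrary number of blocks'' in the statement.

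The main subtlety, rather than an obstacle, is the worst-case allocation of the per-block norms: one needs the right direction of the $\ell^p$ inequality (valid for $p \geq 1$, so we implicitly use $L \geq 2$), and one should verify that the padding operator $P$ and the identity skip connection do not inflate the constant (both are $1$-Lipschitz). A secondary care point is that Propositions \ref{prop:lipvnn}--\ref{prop:lipcnn} are stated for networks with no explicit bias; since biases are absorbed into the weights via the padding-with-$1$ trick (Proposition \ref{prop:rmbias}) used throughout the paper, the bounds apply verbatim to the blocks appearing here.
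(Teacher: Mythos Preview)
Your proof is correct and follows essentially the same route as the paper: bound each building block's Lipschitz constant via Propositions~\ref{prop:lipvnn}/\ref{prop:lipcnn}, compose across residual blocks using $1+x\le e^{x}$, and then control the exponent $\sum_{n,m}(KB_{n,m}/L)^{L/2}$ by the single budget $(KB_{\rm res}/L)^{L/2}$. The only cosmetic difference is that you justify the last step via the $\ell^{p}$-norm monotonicity $\|a\|_{p}\le\|a\|_{1}$, whereas the paper phrases the same inequality as a convexity/extreme-point argument on the simplex $\{B_{n,m}\ge 0,\ \sum B_{n,m}\le B_{\rm res}\}$; both require $L\ge 2$ and give the identical bound.
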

\begin{proof}
     Note that the $m$-th block in the neural network defined in \autoref{lemma:rescov} can be represented as  $y = f_m(\vect x;\omega_m) + \vect x$, where $f_m$ is an $L$-layer feedforward neural network with no bias. 
     By \autoref{prop:lipvnn} and \autoref{prop:lipcnn}, such block is Lipschitz continuous with Lipschitz constant $1+(K B_m /L)^{L/2}$, where the weight parameters of the $m$-th block satisfy that $\sum_{\ell=1}^L \|W_\ell^{(m)} \|_{\mathrm{F}}^2 \leq B_m$ and $\sum_{m=1}^M B_m \leq B_{\rm res}$.

     Since the neural network defined in \autoref{lemma:rescov} is a composition of $M$ blocks, it is Lipschitz with Lipschitz constant $L_{\rm res}$. We have
     \begin{equation*}
         L_{\rm res} \leq \prod_{m=1}^M \left(1+ \left(\frac{K B_m}{L} \right)^{L/2} \right) \leq \exp \left( \sum_{m=1}^M \left(\frac{K B_m}{L} \right)^{L/2} \right),
     \end{equation*}
    where we use the inequality $1+z \leq \exp(x)$ for any $x\in \mathrm{R}$. Furthermore, notice that $\sum_{m=1}^M (K B_m/L )^{L/2}$ is convex with respect to $(B_1, B_2, \ldots,B_M)$ when $L>2$. Since $\sum_{m=1}^M B_m \leq B_{\rm res}$ and $B_m \geq 0$, then we have $\sum_{m=1}^M (K B_m/L )^{L/2} \leq (K B_{\rm res}/L )^{L/2}$ by convexity.
    Therefore, we obtain that $L_{\rm res} \leq \exp((K B_{\rm res}/L )^{L/2})$.
\end{proof}

\begin{proposition}
    \label{prop:normconv}
    For any $\vect x \in \R^d, \vect w \in \R^K, K\leq d$, $\|\conv(\vect x, \vect w)\|_2 \leq \sqrt{K} \|\vect x\|_2 \|\vect w\|_2$.
\end{proposition}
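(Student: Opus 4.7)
The plan is to unfold the definition of one-dimensional stride-one convolution, apply Cauchy--Schwarz pointwise, and then use the fact that each coordinate of $\vect x$ appears in at most $K$ convolution windows.

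First, I would write out the output coordinate explicitly. By \autoref{eq:conv} (specialized to a single input and single output channel for notational simplicity; the general case follows by the same argument applied channel-wise or, equivalently, by viewing the convolution as a sum over channels of scalar convolutions), the $i$-th entry is $y_i = \sum_{k=1}^K w_k x_{i+k-1}$, where we set $x_j = 0$ for $j > d$ by the zero-padding convention just below \autoref{eq:conv}.

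Next, I would bound each $y_i^2$ using Cauchy--Schwarz:
\begin{equation*}
y_i^2 = \Bigl(\sum_{k=1}^K w_k x_{i+k-1}\Bigr)^2 \leq \Bigl(\sum_{k=1}^K w_k^2\Bigr)\Bigl(\sum_{k=1}^K x_{i+k-1}^2\Bigr) = \|\vect w\|_2^2 \sum_{k=1}^K x_{i+k-1}^2 .
\end{equation*}
Summing over $i$ and swapping the order of summation,
\begin{equation*}
\|\conv(\vect x,\vect w)\|_2^2 = \sum_i y_i^2 \leq \|\vect w\|_2^2 \sum_{k=1}^K \sum_i x_{i+k-1}^2 \leq \|\vect w\|_2^2 \cdot K \cdot \|\vect x\|_2^2 ,
\end{equation*}
where the last inequality uses that for each fixed $k$, $\sum_i x_{i+k-1}^2 \leq \|\vect x\|_2^2$ (it is just a shifted partial sum of the squared entries of $\vect x$), so the sum over $k \in \{1,\dots,K\}$ contributes a factor of at most $K$. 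Taking square roots gives the claim.

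There is essentially no obstacle; the only subtlety is the bookkeeping of the padding convention so that the double sum $\sum_k \sum_i x_{i+k-1}^2$ is genuinely bounded by $K\|\vect x\|_2^2$ rather than something larger. For the multi-channel version used in the paper, the same argument applies with $\|\vect w\|_2$ replaced by the Frobenius norm of the kernel tensor: one writes $y_{i,j} = \sum_{k,l} \cW_{j,k,l} z_{i+k-1,l}$, applies Cauchy--Schwarz across the joint $(k,l)$ index so that the kernel contribution is $\|\cW_{j,:,:}\|_{\mathrm{F}}^2$, and then sums over $i$ and the output channel $j$; each entry of the input appears in at most $K$ windows, producing the same $\sqrt{K}$ factor.
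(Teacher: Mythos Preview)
Your proof is correct and follows essentially the same approach as the paper: apply Cauchy--Schwarz to each output coordinate and then use that each entry of $\vect x$ appears in at most $K$ windows, giving the factor $K$ before taking square roots. The only cosmetic difference is that the paper phrases the counting directly (``each element in $\vect x$ appears at most $K$ times'') whereas you swap the order of summation first, which is the same observation.
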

\begin{proof}
    For simplicity, denote $x_i = 0$ for $i \leq 0$ or $i > d$.
    \begin{eqal*}
        \|\conv(\vect x, \vect w)\|_2^2 &= \textstyle \sum_{i=1}^d \langle \vect x[i-\frac{K-1}{2}:i+\frac{K-1}{2}], \vect w \rangle^2\\
        &  \textstyle \leq \sum_{i=1}^d \| \vect x[i-\frac{K-1}{2}:i+\frac{K-1}{2}]\|_2^2 \|\vect w \|_2^2\\
        & \leq K \|\vect x\|_2^2 \|\vect w\|_2^2,
    \end{eqal*}
    where the second line comes from Cauchy-Schwarz inequality, the third line comes by expanding  $\| \vect x[i-\frac{K-1}{2}:i+\frac{K-1}{2}]\|_2^2$ by definition and observing that each element in $\vect x$ appears at most $K$ times.
\end{proof}

   




\end{document}